\title{Replicable Uniformity Testing}
\author{%
  Sihan Liu \\
  University of California San Diego\\
  La Jolla, CA \\
  \texttt{sil046@ucsd.edu} \\
  \And
  Christopher Ye \\
  University of California San Diego\\
  La Jolla, CA \\
  \texttt{czye@ucsd.edu} \\
}
\def\colorful{0}
\newcommand{\todo}[1]{\textcolor{green}{TODO: #1}}
\newcommand{\chris}[1]{\textcolor{blue}{[CY: #1]}}
\newcommand{\sihan}[1]{\textcolor{orange}{[SL: #1]}}
\newcommand{\snew}[1]{\textcolor{red}{#1}}
\newcommand{\maxh}[1]{\textcolor{red}{[MH: #1]}}
\newcommand{\todo}[1]{}
\newcommand{\chris}[1]{}
\newcommand{\sihan}[1]{}
\newcommand{\maxh}[1]{}
\newcommand{\snew}[1]{#1}
\newcommand{\alg}[1]{\textsc{\bfseries \footnotesize #1}}
\newcommand{\innerAlg}{\mathcal{A}}
\newcommand{\p}{\mathbf p}
\newcommand{\q}{\mathbf q}
\newcommand{\bigO}[1]{O \left( #1 \right)}
\newcommand{\bigTh}[1]{\Theta \left( #1 \right)}
\newcommand{\bigOm}[1]{\Omega \left( #1 \right)}
\newcommand{\tO}[1]{\tilde{O} (#1)}
\newcommand{\bigtO}[1]{\tilde{O} \left( #1 \right)}
\newcommand{\poly}[1]{{\rm poly} \left( #1 \right)}
\newcommand{\E}[1]{\mathbb{E}\left[#1\right]}
\newcommand{\Ep}{\mathbb{E}}
\newcommand{\Var}{\mathrm{Var}}
\newcommand{\BinomD}[2]{\mathrm{Binom}\left(#1, #2\right)}
\newcommand{\UnifD}[1]{\mathrm{Unif} \left(#1\right)}
\newcommand{\PoiD}[1]{\mathrm{Poi} \left(#1\right)}
\newcommand{\iid}{{i.i.d.}\ }
\newcommand{\ind}[1]{\mathbf{1}[#1]}
\newcommand{\tvd}[1]{d_{TV} \left(#1\right)}
\newtheorem{theorem}{Theorem}[section]
\newtheorem{claim}[theorem]{Claim}
\newtheorem{definition}[theorem]{Definition}
\newtheorem{lemma}[theorem]{Lemma}
\newtheorem{proposition}[theorem]{Proposition}
\newtheorem{remark}[theorem]{Remark}
\newcommand{\N}{\mathbb{N}}
\newcommand{\R}{\mathbb{R}}
\newcommand{\distribution}{\mathbf p}
\newcommand{\domain}{\mathcal{X}}
\newcommand{\snorm}[2]{||#2||_{#1}}
\newcommand{\eps}{\varepsilon}
\newcommand{\given}{\textrm{\xspace s.t.\ \xspace}}
\newcommand{\andT}{\textrm{\xspace and \xspace}}
\newcommand{\ceil}[1]{\left\lceil #1 \right\rceil}
\newcommand{\floor}[1]{\left\lfloor #1 \right\rfloor}
\newcommand{\set}[1]{\{#1\}}
\newcommand{\lp}{\left}
\newcommand{\rp}{\right}
\newcommand{\abs}[1]{\lp| #1 \rp|}
\newcommand{\KL}{\text{KL}}
\newcommand{\slearnbound}{ \frac{\eps^2\delta^2 m ^2 \log^2 n}{ n^2} }
\newcommand{\rUniformityTester}{\alg{rUniformityTester}}
\newcommand{\unifheavythreshold}{\frac{3}{m} \log \frac{10 n}{\rho}}
\newcommand{\zhubconstant}{3}
\newcommand{\variancethreshold}{(C/64) \rho^2 \eps^{4} m^{4} n^{-4}}
\newcommand{\median}{{\rm median}}
\newcommand{\accept}{\alg{accept}}
\newcommand{\reject}{\alg{reject}}
\begin{document}

\maketitle

\begin{abstract}
  Uniformity testing is arguably one of the most fundamental distribution testing problems. Given sample access to an unknown distribution $\mathbf{p}$ on $[n]$, one must decide if $\mathbf{p}$ is uniform or $\varepsilon$-far from uniform (in total variation distance). A long line of work established that uniformity testing has sample complexity $\Theta(\sqrt{n}\varepsilon^{-2})$. However, when the input distribution is neither uniform nor far from uniform, known algorithms may have highly non-replicable behavior. 
Consequently, if these algorithms are applied in scientific studies, they may lead to contradictory results that erode public trust in science.

In this work, we revisit uniformity testing under the framework of algorithmic replicability [STOC '22], requiring the algorithm to be replicable under arbitrary distributions. While replicability typically incurs a $\rho^{-2}$ factor overhead in sample complexity, we obtain a replicable uniformity tester using only $\tilde{O}(\sqrt{n} \varepsilon^{-2} \rho^{-1})$ samples. To our knowledge, this is the first replicable learning algorithm with (nearly) linear dependence on $\rho$.

Lastly, we consider a class of ``symmetric" algorithms [FOCS '00] whose outputs are invariant under relabeling of the domain $[n]$, which includes all existing uniformity testers (including ours). For this natural class of algorithms, we prove a nearly matching sample complexity lower bound for replicable uniformity testing.
\end{abstract}



\section{Introduction}

Distribution property testing (see \cite{rubinfeld2012taming,canonne2020survey,canonne2022topics} for surveys of the field), originated from statistical hypothesis testing \cite{neyman1933ix,lehmann1986testing}, aims at testing whether an unknown distribution satisfies a certain property or is significantly ``far'' from satisfying the property given sample access to the distribution.

After the pioneering early works that formulate the field from a TCS perspective \cite{batu2000testing, batu2001testing}, a number of works have achieved progress on testing a wide range of properties \cite{batu2004sublinear,diakonikolas2015optimal,canonne2017testing,diakonikolas2017near,canonne2022near,diakonikolas2023testing}.
Within the field, uniformity testing \cite{goldreich2011testing} is arguably one of the most fundamental distribution testing problems:
Given sample access to some unknown distribution $\distribution$ on $[n] = \{1, \cdots, n\}$, one tries to decide whether $\distribution$ is uniform or far from being uniform.

When the unknown distribution is promised to be either uniform or at least $\eps$-far from being uniform in total variation (TV) distance,
a line of work in the field \cite{goldreich2011testing,paninski2008coincidence,valiant2017automatic,diakonikolas2014testing,acharya2015optimal,DBLP:conf/icalp/DiakonikolasGPP18} has led to efficient testers that achieve information theoretically optimal sample complexity, i.e., $\Theta(\sqrt{n} / \eps^2)$.
However, the testers are only guaranteed to provide reliable answers when
$\p$ fulfills the input promise --- $\p$ is either $U_n$ or far from it.
In practical scenarios, due to a number of reasons such as model misspecification, or inaccurate measurements, such promises are rarely guaranteed \cite{canonne2022price}.
Since the tester no longer has a correctness constraint to adhere to, it may have arbitrarily volatile behavior.
Ideally, we hope for algorithmic stability for all distributions, not only on those fulfilling the promises.
Consider the scenario where two group of scientists, as an intermediate step of some scientific study, are trying to independently test the uniformity of some ground-truth distribution $\distribution^*$.
If the groups reach inconsistent conclusions in the end (as noted in \cite{ioannidis2005contradicted, ioannidis2005most, begley2012raise}, such replicability issues are fairly common in science), the public might naturally interpret the inconsistency to be caused by procedural or human errors on the part of one (or both) team(s), therefore eroding public trust in the scientific community \cite{merton1957priorities, schmidt2009shall, baker2016, romero2017novelty}.
However, even when both teams follow the experimental procedure precisely, inconsistencies could still arise simply due to sample variance in the case where $\distribution^*$ is neither uniform nor far from uniform.
By ensuring that the algorithm is \emph{replicable}, we can effectively rule out sample variance as a cause of inconsistency.

In the work of \cite{impagliazzo2022reproducibility}, 
a formal definition of replicability for learning algorithms has been proposed to mitigate non-replicability caused by statistical methods. 
Specifically, replicable learning algorithms are required to give identical outputs with high probability in two different runs where it is given sample access to some common, but potentially adversarially chosen data distribution.
\begin{definition}[Replicability \cite{impagliazzo2022reproducibility}]
    \label{def:replicability}
    A randomized algorithm $\innerAlg: \mathcal X^n \mapsto \mathcal Y$ is $\rho$-replicable if for all distributions $\distribution$ on $\mathcal X$,
    $
        \Pr_{r, T, T'} \left( \innerAlg(T; r) = \innerAlg(T'; r) \right) \geq 1 - \rho,
    $
    where $T, T'$ are \iid samples taken from $\distribution$ and
    $r$ denotes the internal randomness of the algorithm $\innerAlg$.
    \looseness=-1.
\end{definition}
A number of works have explored the connection between replicability and other algorithmic stability notions \cite{DBLP:conf/stoc/BunGHILPSS23,DBLP:conf/icml/KalavasisKMV23,CCMY23,MSS23,dixon2024list}, 
and shown efficient replicable algorithms for a wide range of machine learning tasks \cite{DBLP:conf/stoc/BunGHILPSS23,DBLP:conf/icml/KalavasisKMV23,eaton2023replicable,DBLP:conf/iclr/EsfandiariKK0MV23,EKMVZ23,kalavasis2024replicable}.
In the context of uniformity testing, 
if we can design testers that conform to the above replicability requirement, consistencies of their outputs can be guaranteed even when there are no promises on the data distribution.
This then motivates the study of \emph{replicable uniformity testing}.

\begin{definition}[Replicable Uniformity Testing]
    Let $n \in \mathbb Z_+$ , and $\eps, \rho \in (0, 1/2)$.
    A randomized algorithm $\innerAlg$, given sample access to some distribution $\distribution$ on $[n]$, is said to solve $(n, \eps, \rho)$- replicable uniformity testing if $\innerAlg$ is $\rho$-replicable and it satisfies the following:
    (1)  If $\distribution$ is uniform, $\innerAlg$ should accept with probability at least $1 - \rho$.
    \footnote{In this work, we do not focus on the dependency on the failure probability of the tester. The common formulation is that $\innerAlg$ should succeed with some large constant probability, i.e., $2/3$. However, if $\innerAlg$ is at the same time required to be replicable with probability at least $1 - \rho$, one can see that $\innerAlg$ must also be correct with probability at least  $1 - \rho$.
    }
    (2) If $\distribution$ is $\eps$-far from the uniform distribution in TV distance, $\innerAlg$ should reject with probability at least $1 - \rho$.
\end{definition}

As observed in \cite{impagliazzo2022reproducibility}, learning algorithms usually incurs additional sample complexity overhead in the replicability parameter $\rho$ compared to their non-replicable counterpart. In this work, we characterize the sample complexity of replicable uniformity testing up to polylogarithmic factors in all relevant parameters $n, \eps, \rho$ (under mild assumptions on the testers).

\subsection{Relationship with Tolerant Testing}
An alternative approach to address the stringency of the promises in the formulation of uniformity testing is the concept of \emph{tolerant testing} \cite{parnas2006tolerant}. 
At a high level, given some $0 < \xi < \eps$,
the unknown distribution is now relaxed to be either $\eps$ far from $U_n$ or $\xi$ close to $U_n$ in TV distance.
The tester is then required to reject in the former case but accept in the latter.
As shown in \cite{valiant2010clt,valiant2010estimating,valiant2017estimating,canonne2022price}, assuming that $\eps$ is some constant \footnote{See \cite{canonne2022price} for the dependency on $\eps$ and $\xi$ for the full landscape of the problem.},
the sample complexity of tolerant testing quickly grows from strongly sublinear, i.e., $\Theta(\sqrt{n})$, to barely sublinear, i.e, $\Theta(n/\log n)$ as $\xi$ increases from $0$ to $\eps / 2$.
In replicable uniformity testing the testers are not required to accept or reject for intermediate distributions $\p$, i.e., $\p$ such that $ 0 < \text{TV}(\p, U_n) < \eps $. 
Instead, for all possible distributions $\p$, the testers are only required to give \emph{replicable} answers (with high probability).
While we can construct an $(n, \eps', \rho)$-replicable uniformity testing algorithm using tolerant testing by randomly sampling some threshold $r \in (0, \eps')$ and performing tolerant testing with $\xi = r - \rho \eps'$ and $\eps = r + \rho \eps'$, the sample complexity of such an approach will be barely sublinear in $n$ even for constant $\rho$ as discussed above.
Notably, the replicable algorithm in our work has a sample complexity that remains strongly sublinear in $n$.


\label{sec:intro}
\subsection{Our Results}
When $n = 2$, uniformity testing amounts to distinguishing a fair coin from an $\eps$-biased coin.
It is well known that the task requires $\Theta(\eps^{-2})$ samples without the replicable requirement.
When the tester is required to be $\rho$-replicable, \cite{impagliazzo2022reproducibility} shows that $\tilde \Theta( \eps^{-2} \rho^{-2})$ many samples are necessary and sufficient for replicable coin testing, demonstrating a quadratic blowup in $\rho$ compared to the non-replicable counterpart of the problem.
For large $n$,
the sample complexity of non-replicable uniformity testing has been resolved after a long line of work \cite{goldreich2011testing,paninski2008coincidence,valiant2017automatic}, and shown to be $\Theta(\sqrt{n} \eps^{-2} )$.
Following the pattern, one naturally expect that 
replicable uniformity testing would have sample complexity $\tilde \Theta(\sqrt{n} \rho^{-2} \eps^{-2} )$.
In fact, this is exactly the sample complexity reached if one views the outcome of an optimal non-replicable uniformity tester as a coin flip, and uses an optimal replicable coin tester to convert the given uniformity tester into a replicable one.

Somewhat surprisingly, we show that the sample complexity from this blackbox reduction is sub-optimal for replicable uniformity testing ---
it is possible to additionally shave one $\rho$ factor and make the dominating term's dependency on $\rho$ \emph{linear}.
To our knowledge, this is the first replicable algorithm that has nearly linear dependence on the replicability parameter in the dominating term.
\begin{restatable}[Replicable Uniformity Testing Upper Bound]{theorem}{thmruniformitytester}
        \label{thm:r-uniformity-tester}
        Let $n \in \mathbb Z_+$,
        $\eps, \rho \in (0, 1/2)$.
        Algorithm \ref{alg:r-uniformity-tester} solves $(n,\eps,\rho)$-replicable uniformity testing with sample complexity $\bigtO{\frac{\sqrt{n}}{\varepsilon^2 \rho} + \frac{1}{\rho^2 \varepsilon^2}}$.\footnote{$\tilde{O}$ hides polylogarithmic factors in $n, \rho$}
\end{restatable}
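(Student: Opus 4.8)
The plan is to center the tester on a collision-based $\ell_2$ statistic and the ``randomized threshold'' paradigm for replicability. Concretely, after (Poissonizing and) drawing the $m$ samples with counts $X_i$, form $Z=\sum_i\binom{X_i}{2}-\binom{m}{2}/n$, an unbiased estimator of $\binom{m}{2}\bigl(\|\p\|_2^2-\tfrac1n\bigr)$; draw a threshold $\theta$ uniformly from a short interval using the shared randomness $r$; and \accept{} iff $Z<\theta$. Since an $\eps$-far $\p$ has $\|\p\|_2^2\ge(1+4\eps^2)/n$ --- unpack $\tfrac12\|\p-U_n\|_1\ge\eps$ with Cauchy--Schwarz --- while $U_n$ has $\|\p\|_2^2=1/n$, the mean of $Z$ differs by $\Theta(\eps^2 m^2/n)$ between the two cases; placing the random interval, of width $w=\Theta(\eps^2 m^2/n)$, strictly between the two targets makes the test correct as long as $Z$ fluctuates by $o(w)$, and $\rho$-replicable as long as $\Ep|Z-Z'|=O(\rho\,w)$, i.e.\ $\sqrt{\Var[Z]}=O(\rho\,\eps^2 m^2/n)$. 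The quantitative heart of the result --- and what shaves the extra $\rho$ that the black-box ``view the non-replicable tester's bit as a coin and invoke a replicable coin tester'' reduction cannot shave --- is that when all coordinate masses are $O(\log(n/\rho)/m)$ and $\|\p\|_2^2=O(1/n)$ one has $\Var[Z]=O\bigl(m^2\,\polylog(n/\rho)/n\bigr)$ (the quadratic term $m^2\|\p\|_2^2=O(m^2/n)$ and the cubic term $m^3\|\p\|_3^3\le m^3\|\p\|_\infty\|\p\|_2^2=O(m^2\log(n/\rho)/n)$), so the signal-to-noise ratio $\sqrt{\Var[Z]}/w=\tilde{O}(\sqrt n/(\eps^2 m))$ \emph{shrinks like $1/m$}; hence $m=\tilde{O}(\sqrt n/(\eps^2\rho))$ already drives it below $\rho$, whereas the thresholded coin does not sharpen at all on adversarial intermediate inputs.

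The catch is that \Cref{def:replicability} demands replicability on \emph{every} distribution, and on an adversarial intermediate $\p$ none of the bounds above need hold: a concentrated or merely ``clumpy'' $\p$ can have enormous $\|\p\|_\infty$, $\|\p\|_2^2$ or $\|\p\|_3^3$, blowing up $\Var[Z]$ and wrecking replicability of the bare test. The fix is a short chain of replicable preprocessing ``guards'' run before the main test, each of which only ever outputs \reject{} (so correctness on $U_n$ is never at risk):
\begin{enumerate}
    \item \textbf{Heavy-coordinate guard.} Detect symbols occurring noticeably more than expected, with cutoff near $\unifheavythreshold$; $U_n$ (all masses $1/n\ll\unifheavythreshold$) passes, and on the surviving event $\|\p\|_\infty=O(\log(n/\rho)/m)$. (This guard is what catches highly concentrated $\p$, for which $Z$ alone would be misleading.)
    \item \textbf{Collision-count guard.} \reject{} if the collision count exceeds $\collisionexpectationthreshold$; with a constant factor of slack $U_n$ passes, and on the surviving event $\|\p\|_2^2=O(1/n)$.
    \item \textbf{Variance guard.} With $\|\p\|_\infty,\|\p\|_2^2,\|\p\|_3^3$ now pinned down, the empirical estimate of $\Var[Z]$ (say, from a sample split) is itself reliable; \reject{} if it exceeds $\variancethreshold$, otherwise the signal-to-noise computation above is certified and the randomized-threshold test on $Z$ is simultaneously correct (on $U_n$ and on $\eps$-far $\p$) and $\rho$-replicable.
\end{enumerate}

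With the guards in hand the proof assembles routinely. \emph{Correctness:} if $\p=U_n$ all guards pass w.h.p.\ and $Z$ concentrates below $\theta$; if $\p$ is $\eps$-far, then either some guard fires --- in which case we correctly \reject{} --- or all pass and $Z$ concentrates above $\theta$. \emph{Replicability} (arbitrary $\p$): two runs sharing $r$ disagree only if they disagree inside some guard, or both reach the main test and $Z,Z'$ straddle $\theta$; the latter is $\le\Ep|Z-Z'|/w$ plus an atypicality term, which is $\le\rho$ once $m=\tilde{\Omega}(\sqrt n/(\eps^2\rho))$. The additive $1/(\rho^2\eps^2)$ in the sample bound covers the small-$n$ regime, where $\sqrt n/(\eps^2\rho)$ is too few samples for the above concentration estimates and the problem degenerates to replicable coin testing --- for which $\tilde{\Theta}(\rho^{-2}\eps^{-2})$ is already known to be tight --- and it also absorbs the polylogarithmic slack accumulated across the guards.

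I expect the hardest step to be showing that the preprocessing guards --- above all the heavy-coordinate guard --- are themselves replicable on arbitrary $\p$. The obstacle is that a naive ``random cutoff per coordinate'' does not suffice: a worst-case $\p$ can have up to $\Theta(m/\log(n/\rho))$ coordinates of borderline mass, each of whose flag is individually non-replicable, and a union bound over them is far too lossy; moreover a borderline coordinate flagged in one run but not the other perturbs $Z$ by $\Theta(\log^2(n/\rho))$, not by $O(1)$. Making the guards \emph{replicable in aggregate} --- e.g.\ by computing them as sufficiently smooth/robust functions of the sample, or by coordinating the guards with the main statistic so that an inconsistent flag always co-occurs with an already-reliable \reject{} --- is, I believe, the crux of the argument, and precisely what the thresholds $\unifheavythreshold$, $\collisionexpectationthreshold$, $\variancethreshold$ are tuned for.
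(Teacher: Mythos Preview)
Your approach diverges from the paper's, and the gap you flag at the end is real --- the paper does not close it, it avoids it.

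The paper does \emph{not} build the tester on the collision statistic. The technical overview (\Cref{sec:intro:uniformity-tech-overview}) explicitly argues that $\ell_2$-based statistics face a $\sqrt{n}\rho^{-2}$ barrier: a single element of mass $p\gg 1/\sqrt{n}$ contributes $\binom{X}{2}$ whose fluctuation is $\Omega(mp\sqrt{mp})$, and this alone forces $m\gg\sqrt{n}/\rho^{2}$ for the bare test. Your guards are meant to intercept such elements, but they inherit the difficulty rather than dissolve it. To make the heavy-coordinate guard $\rho$-replicable against a borderline mass $p$ with $mp$ near the cutoff, you must randomize the cutoff over a window of width roughly $\sqrt{\log(n/\rho)}/\rho$; this inflates the post-guard guarantee on $\|\p\|_\infty$ by a $1/\rho$ factor, which feeds into the $m^{3}\|\p\|_3^3$ term of $\Var[Z]$ and pushes the final bound toward $\sqrt{n}\rho^{-3/2}\eps^{-2}$ rather than $\sqrt{n}\rho^{-1}\eps^{-2}$. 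More structurally, your chain of guards plus a main test is a \emph{composition} of thresholded decisions; on a $\p$ near any one boundary, two runs sharing $r$ can branch differently --- one rejecting at a guard, the other passing through and accepting --- and nothing in your outline forces these branches to agree. This is exactly the issue you identify in your last paragraph, and it is not a detail to be filled in; it is the whole problem.

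The paper instead switches to the total-variation statistic $S=\tfrac12\sum_i|X_i/m-1/n|$, which depends only \emph{linearly} on each $X_i$ and is therefore far less sensitive to heavy elements. What replaces your guards is a single structural fact (\Cref{lemma:sublinear-statistic-structure}, \Cref{eq:expectation-var-bound}): whenever $\Var(S)$ is too large for the random-threshold argument, one already has $\E{S}-\sqrt{\Var(S)}\ge\mu(U_n)+C\eps^{2}m^{2}/n^{2}$, so $S$ lies above the entire threshold interval and the tester rejects consistently. Thus the algorithm carries only \emph{one} random threshold, and for every $\p$ either $S$ concentrates to within $\rho$ times the gap (low-variance case) or $S$ reliably exceeds the interval (high-variance case) --- no branching, no composition. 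Finally, the quantities you cite, $\unifheavythreshold$ and $\variancethreshold$, are not algorithmic guard cutoffs at all; they appear only in the \emph{analysis}, to define heavy vs.\ light elements (\Cref{def:heavy-light-elements}) and to split the proof of \Cref{lemma:sublinear-statistic-structure} into cases. The algorithm itself just computes $S$ (with a median-of-means wrapper for boosting) and compares it to a single uniformly random threshold.
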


\begin{remark}
\cite{goldreich2016uniform} showed that the more general problem of identity testing, i.e., testing whether an unknown distribution $\p$ is equal or far from some known distribution 
$\q$ with explicit description, can be reduced to uniformity testing with only a constant factor loss in sample complexity. 
It is not hard to verify that this reduction preserves replicability. This immediately implies a replicable identity tester with the same asymptotic sample complexity (see \Cref{app:identity-test} for more detail).
\end{remark}
As our second result, we provide a nearly matching sample complexity lower bound for a natural class of testers whose outputs are invariant under relabeling of the domain elements $[n]$.
The class of testers are commonly referred as \emph{Symmetric Algorithms}, and first studied in the work of \cite{batu2000testing}.
\begin{definition}[Symmetric Algorithms, Definition 13 of \cite{batu2000testing}]
Let $f: [n]^{\times m} \mapsto \{0 ,1\}$ be a binary function. We say that $f$ is symmetric if for any sample set $(x_1, \cdots x_m) \in [n]^{\times m}$ and any permutation $\pi \in S_n$, we have that
$f(x_1, \cdots x_m) = f ( \pi( x_1), \cdots, \pi(x_m))$.
We say an algorithm $\innerAlg(;r)$ is symmetric if it computes some symmetric function $f_r$ under any fixed random seed $r$.
\end{definition}
Without the replicability requirement, it can be shown that assuming the algorithm is symmetric is without loss of generality for uniformity testing. 
This is due to a simple observation that uniformity itself is a symmetric property: if $p$ is uniform or far from the uniform distribution, the property is preserved even if we permute the labels of the elements within $[n]$.
Consequently, all known optimal uniformity testers, including our replicable uniformity tester, are indeed symmetric algorithms.
For this natural class of testers, we show that the sample complexity achieved is essentially optimal (up to polylogarithmic factors).    
\begin{restatable}[Symmetric Testers Lower Bound]{theorem}{runiformitytesterlb}
    \label{thm:unif-test-lower-bound}
    Let $n \in \mathbb Z_+$, $\eps, \rho \in (0, 1/2)$.    
    Any symmetric algorithm solving the $(n, \eps, \rho)$-replicable uniformity testing problem requires $\tilde \Omega \lp( \frac{\sqrt{n}}{\eps^{2} \rho} + \frac{1}{\rho^2 \varepsilon^2}\rp)$~samples.
\end{restatable}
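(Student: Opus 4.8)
The plan is to combine a ``chain'' consequence of replicability --- valid for any \emph{symmetric} tester --- with a statistical construction of a short chain of distributions connecting $U_n$ to an $\eps$-far distribution through distributions on which the tester is promised nothing. We may assume $\rho$ is below a sufficiently small absolute constant, as otherwise the bound already follows from the $\bigOm{\sqrt n / \eps^2}$ lower bound for (non-replicable) uniformity testing. Fix a $\rho$-replicable symmetric tester $\innerAlg$ using $m$ samples, and for a distribution $\q$ on $[n]$ let $\Phi(\q)$ be the law of the \emph{profile} (fingerprint) of $m$ i.i.d.\ samples from $\q$, i.e.\ the vector recording, for each $j \ge 1$, how many domain elements appear exactly $j$ times. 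Since $\innerAlg$ is symmetric, the fingerprint is a sufficient statistic for it: for each seed $r$ the acceptance probability $g_r(\q) := \Pr_{T \sim \q^{\otimes m}}[\innerAlg(T;r) = \accept]$ depends on $\q$ only through $\Phi(\q)$, so $|g_r(\q) - g_r(\q')| \le \tvd{\Phi(\q), \Phi(\q')}$, and the ``canonical answer'' $\mathrm{can}_r(\q) := \accept$ iff $g_r(\q) \ge 1/2$ likewise depends only on $\Phi(\q)$. \emph{Step 1 (replicability forces long chains).} First I would prove that if $\q_0 = U_n, \q_1, \dots, \q_k$ are distributions on $[n]$ with $\q_k$ being $\eps$-far from $U_n$ and $\tvd{\Phi(\q_i), \Phi(\q_{i+1})} \le 1/4$ for every $i$, then $k = \bigOm{1/\rho}$. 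For the endpoints, correctness on $U_n$ gives $\Ep_r[g_r(U_n)] \ge 1-\rho$, hence $\Pr_r[\mathrm{can}_r(U_n) = \reject] \le 2\rho$ by Markov, and symmetrically $\Pr_r[\mathrm{can}_r(\q_k) = \accept] \le 2\rho$. For a single step, replicability at $\q_i$ reads $\Ep_r[g_r(\q_i)(1 - g_r(\q_i))] \le \rho/2$, so (using $\min(x,1-x) \le 2x(1-x)$) $\Pr_r[g_r(\q_i) \in [1/4,3/4]] \le 4\rho$; since $\mathrm{can}_r(\q_i) \neq \mathrm{can}_r(\q_{i+1})$ together with $|g_r(\q_i) - g_r(\q_{i+1})| \le 1/4$ force $g_r(\q_i) \in [1/4,3/4]$, we get $\Pr_r[\mathrm{can}_r(\q_i) \neq \mathrm{can}_r(\q_{i+1})] \le 4\rho$. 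A union bound over the $k$ steps, compared against the endpoint bounds, yields $1 - 4\rho \le \Pr_r[\mathrm{can}_r(\q_0) \neq \mathrm{can}_r(\q_k)] \le 4\rho k$, i.e.\ $k \ge (1-4\rho)/(4\rho) = \bigOm{1/\rho}$.

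\emph{Step 2 (a short chain from Paninski's construction).} Next I would exhibit such a short chain. For $t \in [0,\eps]$ let $D^{(t)}$ put mass $\tfrac{1+2t}{n}$ on the first half of $[n]$ and $\tfrac{1-2t}{n}$ on the second half, so $\tvd{D^{(t)}, U_n} = t$; thus $D^{(0)} = U_n$, $D^{(\eps)}$ is $\eps$-far, and every $D^{(t)}$ with $0 < t < \eps$ is a distribution on which nothing is promised. The key estimate I would establish --- via a second-moment ($\chi^2$) computation, a more delicate cousin of the standard one behind the $\Theta(\sqrt n/\eps^2)$ complexity, and using crucially that $\innerAlg$ only sees the fingerprint --- is that for $0 \le t' \le t \le \eps$,
\[
    \tvd{\Phi(D^{(t)}),\, \Phi(D^{(t')})} \;\le\; C\,\frac{m\,(t+t')\,(t-t')}{\sqrt n} ,
\]
which at $t' = 0$ recovers the familiar $\tvd{\Phi(D^{(t)}), \Phi(U_n)} \le C m t^2/\sqrt n$. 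Choosing $t_0 = 0 < t_1 < \dots < t_k = \eps$ with $t_i^2 = \min\!\big(i\sqrt n/(4Cm),\, \eps^2\big)$ then makes every consecutive profile distance at most $1/4$ while using only $k = O(m\eps^2/\sqrt n)$ points (and $k \ge 1$ always). Combining with Step 1 (which forces $k = \bigOm{1/\rho}$) gives $m\eps^2/\sqrt n = \bigOm{1/\rho}$, i.e.\ $m = \bigtOm{\sqrt n/(\eps^2\rho)}$.

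For the $1/(\rho^2\eps^2)$ summand I would invoke the replicable coin-testing lower bound of \cite{impagliazzo2022reproducibility}: a $\tfrac12$-vs-$(\tfrac12+\eps)$ coin becomes a distribution on $[n]$ by routing each heads/tails sample uniformly into the first/second half of $[n]$ (a fair coin becomes $U_n$, an $\eps$-biased coin becomes an $\eps$-far distribution), so an $(n,\eps,\rho)$-replicable uniformity tester yields an $(\eps,\rho)$-replicable coin tester using the same number of samples --- the reduction is per-sample and uses no fresh randomness, hence preserves replicability --- and replicable coin testing needs $\tOm{1/(\rho^2\eps^2)}$ samples. (Alternatively, re-run Step 1 on the chain of Bernoulli biases $\tfrac12 = b_0 < \dots < b_k = \tfrac12+\eps$, for which $\tvd{\mathrm{Bin}(m,b_i),\mathrm{Bin}(m,b_{i+1})} \le O(\sqrt m\,|b_i - b_{i+1}|)$ admits a chain of length $O(\sqrt m\,\eps)$ and forces $m = \bigOm{1/(\rho^2\eps^2)}$.) Taking the larger of the two lower bounds proves the theorem.

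\emph{The main obstacle} is the estimate in Step 2: bounding the profile-TV distance between two \emph{non-uniform} Paninski distributions at nearby amplitudes in a way that is genuinely Lipschitz in the amplitude (a triangle inequality through $U_n$ collapses to a useless constant once $m = \Omega(\sqrt n/\eps^2)$) and, crucially, that does \emph{not} incur a ``coin''-type term of order $\sqrt m\,|t - t'|$, which would degrade the final bound to the minimum rather than the maximum of the two summands. This is exactly where the symmetry hypothesis is used, and why the bound is stated only for symmetric testers: a non-symmetric algorithm could distinguish consecutive $D^{(t)}$ and $D^{(t')}$ simply by reading off the empirical frequency of a single domain element, so for such algorithms no short chain of the required kind exists and a genuinely different argument would be needed.
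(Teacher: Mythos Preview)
Your proposal is correct and rests on the same technical core as the paper: the parametrized family $\p(\xi)$ (your $D^{(t)}$), and the key indistinguishability estimate that nearby members of this family are close for symmetric testers. The paper proves exactly your Step~2 bound (up to polylog factors) via a $\chi^2$/mutual-information computation on the \emph{Local Swap Family}---random within-pair swaps of heavy and light elements---which under Poissonization factorizes into independent pairs; since the profile is permutation-invariant, $\Phi(D^{(t)})$ coincides with the profile law under any such random relabeling, and the paper's bound $I(X;M_1,\dots,M_n)=O(\eps^2\delta^2 m^2\polylog(n)/n)$ translates directly to your $\tvd{\Phi(D^{(t)}),\Phi(D^{(t')})}=\tilde O(m(t+t')(t-t')/\sqrt{n})$.

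Where you genuinely differ is in how this estimate is combined with replicability. The paper first fixes a single ``good'' random seed (via averaging), uses continuity of $\xi\mapsto\mathrm{Acc}_m(\xi)$ and the intermediate value theorem to locate $\xi^*$ with acceptance $1/2$, applies the Lipschitz bound to get a window of width $\Theta(\eps\rho)$ around $\xi^*$ with acceptance in $[0.4,0.6]$, and then argues that a uniformly random $\xi\in[0,\eps]$ lands there with probability $\Omega(\rho)$, contradicting $O(\rho)$-replicability. Your Step~1 instead works over the full distribution of seeds: replicability forces $g_r(\q_i)\in\{<1/4,>3/4\}$ except on a $4\rho$-fraction of seeds, a flip in canonical answer forces the intermediate range, and a union bound over the chain compares against the endpoint constraints. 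This avoids both the seed-fixing step and the continuity argument, and is arguably a cleaner packaging; the paper's version, on the other hand, isolates the non-replicable behavior at a single explicit hard instance $\p(\xi^*)$. Both routes use the symmetry assumption in exactly the same place and for the same reason you identify.
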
    
Unfortunately, when replicability is concerned, it is unclear whether we can still assume that the optimal algorithm is symmetric.  
Whether the above lower bound holds for all algorithms is left as one of the main open questions of this work.

\subsection{Limitations, Discussion, and Future Work}
In this paper, we present a replicable algorithm for uniformity testing using $\tO{\sqrt{n} \eps^{-2} \rho^{-1} + \eps^{-2}\rho^{-2}}$ samples.
We provide a matching lower bound for a natural class of \emph{symmetric} uniformity testers --- algorithms that essentially  consider only the \emph{frequency} of each element without discriminating between distinct labels.
Since all known uniformity testing algorithms are symmetric, we tend to believe that the lower bound can be established \emph{unconditionally} and
leave it as an open question.
We discuss some issues in generalizing our current approach to realize this goal in \Cref{app:asymmetry-barriers}.

While uniformity testing is a central problem in distribution property testing, there are many other settings where it would be interesting to develop replicable algorithms, such as closeness and independence testing \cite{diakonikolas2016testing}.

\subsection{Technical Overview}
\label{sec:intro:uniformity-tech-overview}

\paragraph{$\sqrt{n} \rho^{-2}$ Barriers for $\ell_2$ based Statistics.}
Most of the well known non-replicable uniformity testers compute
an unbiased estimator for the $\ell_2$ norm of the the unknown distribution $\p$, i.e., $\snorm{2}{ \p }^2 = \sum_{i=1} \p_i^2$, from the number of occurrences $X_i$ of each element $i$ among the observed samples.
These include the testers from \cite{goldreich2011testing,diakonikolas2016collision}, which are based on counting
pair-wise collisions, i.e., $\frac{1}{2}  \sum_{i=1}^n X_i (X_i - 1) $,
and the ones from \cite{chan2014optimal,acharya2015optimal,valiant2017automatic,diakonikolas2014testing}, which are based on variants of the $\chi^2$ statistic, i.e., $\sum_{i=1}^n \lp( (X_i - m/n)^2 - X_i \rp)/ (m/n)$.
As one can see, these estimators all rely heavily on computing the quantities $X_i^2$, which could have large variances even when there is a single heavy element.
This poses serious challenges on designing replicable testers based on these statistics \footnote{Interestingly, the large variance caused by heavy elements is usually not an issue in the non-replicable setting where the distribution is promised to be either uniform or far from uniform. 
This is because heavy elements could only exist in the latter case.
In that case, the expected value of the test statistic must also be large, thus adding more slackness to the concentration requirement of the test statistics. Unfortunately, to obtain replicability, we need to deal with distributions that have heavy elements but are still relatively close to the uniform distribution.
}.  
In the following paragraphs we discuss the challenge for collision based testers in more detail.
A similar barrier exists for $\chi^2$ based statistics, which we defer to \Cref{app:chi-2-statistics}. 

Given an unknown distribution $\p$,
the expected number of pairs of collision will be exactly $ \sum_{i=1}^n \binom{m}{2} \p_i^2$.
If $\p$ is uniform, the expected value of the test statistic will be about $\frac{m^2}{2n} $.
If $\p$ is $\eps$-far from being uniform, the expected value will be at least $\frac{m^2 (1 + \eps^2) }{2n} $.
To construct a replicable uniformity tester from the collision statistics, a natural idea is to select a random threshold $r$ between the two extrema to be the decision threshold.
Consequently, the tester fails to replicate if and only if the random threshold falls between the realized values of the test statistics in two different runs.
Conditioned on that the test statistics computed in two different runs deviate by $\Delta$, 
the above events happens with probability exactly $ \Delta \frac{ 2n }{ m^2 \eps^2 } $.
Hence, for the tester to be $\rho$-replicable, the test statistics will need to deviate by no more than  
$ \Delta = O\left(m^2 \eps^2 \rho / n \right) $ with constant probability.


To focus on the dependency on $\rho$, we let $\eps$ be a small constant.
We now construct a hard instance that makes collision-based statistics violate the above concentration requirement unless $m \gg \sqrt{n} \rho^{-2}$.
Consider a distribution with a single heavy element with probability mass $p \gg 1/\sqrt{n}$.
Let $X \sim \BinomD{m}{p}$ denote the number of occurrences of this heavy element.
It is not hard to see that this element contributes to the total collision counts by $\binom{X}{2}$, which deviates by $
\Omega \left(  m p \sqrt{m p} \right) =\Omega(m^{3/2} / n^{3/4})$ from its mean with constant probability.
Consequently, over two runs of the algorithm, with constant probability the numbers of collisions may differ by $\Omega(m^{3/2} / n^{3/4})$.
Therefore, the tester will fail to be $\rho$-replicable
unless $m^{3/2} / n^{3/4} \ll \rho m^2 / n$
or equivalently $m \gg \frac{\sqrt{n}}{\rho^2}$.

\paragraph{Total Variation Distance Statistic}
To make the test statistics less sensitive to the counts of heavy elements, we compute the \emph{total variation statistic}, 
which has been used in \cite{DBLP:conf/icalp/DiakonikolasGPP18} to achieve optimal uniformity testing in the high probability regime.
In particular, the test statistics
measures the TV distance between the empirical distribution over samples and the uniform distribution:
\begin{align*}
    S = \frac{1}{2} \sum_{i = 1}^{n} |X_i/m - 1/n|,
\end{align*}
where $X_i$ is the number of occurrences of the $i$-th element.
Unlike collision-based statistics,
note that the TV statistics depends only \emph{linearly} on each $X_i$.
For a heavy element $X_i$, 
the contribution to the empirical total variation distance is (up to normalization) at most $X_i$ with variance $\Var(X_i) = \bigTh{m p} = \bigTh{m / n^{1/2}}$ opposed to $\Var(X_i^2) = \Theta \lp( m^{3}/n^{3/2} \rp)$.
Intuitively, this allows us to obtain tighter concentration bounds on the test statistic $S$, thereby improving the final sample complexity.

First, we observe that when the distribution is $\varepsilon$-far from uniform, \cite{DBLP:conf/icalp/DiakonikolasGPP18} shows that the expected value of $S$ exceeds the expected value of $S$ under the uniform distribution by at least some function $f(m,n,\eps)$ (see \Cref{eq:expectation-gap} for the full expression).
To establish a replicable tester in the super-linear case ($m = \Theta(\sqrt{n} \eps^{-2} \rho^{-1}) \geq n$), 
we use McDiarmid's inequality to directly argue that the TV test statistic deviates by at most $\rho f(m,n,\eps) $ with high probability. Hence, if we use a random threshold that lies within the gap interval, the threshold will be $\rho f(m, n, \eps)$ close to the expected value of the test statistic with probability at most $O(\rho)$, ensuring replicability of the final result.

The key challenge lies in obtaining the desired concentration in the sublinear regime, i.e. $m \leq n$.
In this regime, 
the expectation gap is given by $f(m,n,\eps) = \frac{\eps^2 m^2}{n^2}$.
Unfortunately, the presence of heavy elements, i.e., element with mass $\p_{i} \geq 1/m$, can still make the test statistic have high variance. 
As a result, it becomes challenging to obtain the desired concentration of $\rho f(m,n,\eps)$ on the test statistic.
However, in the presence of very heavy elements (e.g., $\p_{i} \geq m/n$) which causes the variance of the test statistic to be too large, we observe that these elements cause the expectation of the test statistic to increase sufficiently beyond the expectation gap so that the tester rejects consistently, even though the distribution may not be $\varepsilon$-far from uniform.
To formalize the intuition, we show that whenever the variance of the test statistic is large, so is its expected value, thereby ensuring consistent rejection.
In particular, in \Cref{lemma:sublinear-statistic-structure} we show that, whenever $\sqrt{\Var(S)} \gg 
\rho f(m,n,\eps) = \frac{ \rho \eps^2 m^2 }{n^2}$, it holds that
\begin{equation}
\label{eq:expectation-var-bound}
    \E{S} - \sqrt{\Var(S)} \geq \mu(U_{n}) + \varepsilon^{2} m^{2} / n^{2}
\end{equation}
where $\mu(U_n)$ is the expected value of $S$ under the uniform distribution $U_{n}$.
At a high level, \Cref{eq:expectation-var-bound} is shown by rewriting $S$ with indicator variables representing whether some sample collides with another, and we use correlation inequalities to show that these collision indicators are ``almost'' independent of each other (see proof sketch of \Cref{lemma:sublinear-statistic-structure} for more detail).
Combining \Cref{eq:expectation-var-bound} with Chebyshev's inequality, we then have that $S \gg \mu(U_{n}) + \varepsilon^{2} m^2 / n^2 $ with high constant probability (which can be easily boosted to $1 - \rho$ with the standard ``median trick'').
As we choose our random threshold from the interval $[\mu(U_{n}), \mu(U_{n}) + \eps^2 m^2 / n^2]$, it follows that the tester must consistently reject.
Otherwise, we have $\sqrt{\Var(S)} \leq \rho f(m,n,\eps)$.
Applying Chebyshev's inequality gives that
$\Pr\left( |S - \E{S}| > \rho f(m, n, \eps) \right) \ll 1.$
The rest of the argument is similar to the super-linear case.

\paragraph{Sample complexity Lower Bound}
At a high level, we present a family of distributions $\{ \p(\xi) \}_{\xi}$ parametrized by some parameter $\xi \in [0, \eps]$ satisfying the following: no symmetric tester that takes fewer than $\tilde \Omega \left( \sqrt{n} \eps^{-2} \rho^{-1} \right)$ many samples can be replicable with high probability against a random distribution from the family.
Since we have fixed the testing instance distribution, 
using a minimax style argument similar to \cite{impagliazzo2022reproducibility}, 
we can assume that the testing algorithm is deterministic \footnote{More precisely, we pick a ``good random string'' such that the induced deterministic tester is correct and replicable at the same time against our fixed testing instance distribution with high probability.
}.
The family of distributions is constructed to satisfy the following properties: (i) $\p(0)$ is the uniform distribution and $\p(\eps)$ is $\eps$-far from uniform (ii) for any two distributions $\p(\xi), \p(\xi +  \delta )$, where $\delta < \eps \rho$, within the family, no symmetric tester taking fewer than $\tilde \Omega \lp( \sqrt{n} \eps^{-2} \rho^{-1} \rp)$ many samples can reliably distinguish them. 
By (i) and the correctness guarantee of the algorithm, the acceptance probabilities of the tester should be near $1$ under $\p(0)$ and near $0$ under $\p(\eps)$. 
This implies that there exists some $\xi^*$ within the range such that the acceptance probability is $1/2$ under $\p(\xi^*)$.\footnote{An omitted technical detail is that our construction ensures that the acceptance probability is a continuous function of $\xi$.}
By property (ii), $\p(\xi^*)$ cannot be distinguished from $\p( \xi^* \pm O(\eps \rho) )$ by the tester, which immediately implies that the acceptance probability of the tester is near $1/2$ whenever  $\xi = \xi^* \pm O(\eps \rho)$, and therefore not replicable with constant probability. 
Thus, if we sample $\xi$ uniformly from $[0, \eps]$, the tester will fail to replicate with probability at least $\Omega(\rho)$.

The construction of $\p(\xi)$ is natural and simple: half of the elements will have mass $(1 + \xi) / n$ and the other half will have mass $(1 - \xi) / n$. Property (i) follows immediately.
The formal proof of Property (ii) is technical, but the high level intuition is straightforward. 
If we assume the underlying tester is symmetric, the most informative information is essentially the number of elements that have frequencies exactly $2$ among the samples (as elements having frequencies more than $2$ are rarely seen and the numbers of elements having frequencies $0$ or $1$ are about the same in the two cases.)
If the tester takes $m$ samples, it observes about $$ 
m^2 \left( (1 + \xi)^2 + (1- \xi)^2 \right) / n
= 2 m^2 (1 + \xi^2) / n
$$ 
many frequency-$2$ elements under $\p(\xi)$ in expectation.
On the other hand, the standard deviation of the number of such elements is about $ \sqrt{ {m^2}/{n} } = {m}/{\sqrt{n}} $.
Hence, for the tester to successfully distinguish the two distributions, $m$ needs to be sufficiently large such that
$$
\frac{m}{ \sqrt{n} } \ll \frac{m^2}{n} \left( (1 + ( \xi + \eps \rho )^2) - 
(1 + \xi^2) \right)
\approx \frac{m^2  (\xi + \eps \rho) \eps \rho }{n} \, ,
$$
which yields $m \gg  \frac{\sqrt{n}}{ (\xi + \eps \rho)  \eps \rho } > \Omega \lp( \frac{\sqrt{n}}{ \eps^2 \rho } \rp)$.

To formalize the above intuition, we will use an information theoretic argument \footnote{The use of information theory in showing lower bounds for replicability has also appeared in the manuscript \cite{repstats}. But our argument and construction are significantly more involved and exploit symmetries in the underlying algorithm.
}. 
Note that for such an argument to work, one often need to randomize the order of heavy and light elements \cite{diakonikolas2016testing,DBLP:conf/icalp/DiakonikolasGPP18}. 
Otherwise, a tester could simply group the elements whose mass is above $1/n$ under $\p(\xi)$ into one giant bucket and reduce the problem into learning the bias of a single coin, which requires much fewer samples.
To achieve this randomization, we consider the \emph{Local Swap Family} (see \Cref{def:local-swap}), 
where we pair the elements with mass $(1 + \xi) / n$ with those with mass $(1-\xi)/n$ and randomly swap their orders.
Since pairs are ordered randomly, no algorithm can hope to identify heavy/light elements without taking a significant number of samples.
Thus, this creates distribution families containing $\p(\xi)$  and $\p( \xi + \delta )$ that are information theoretically hard to distinguish even for asymmetric testers.
Consequently, this shows the existence of some permutation $\pi_{\xi}$ of $[n]$ such that the permuted distributions $\pi_{\xi} \cdot \p(\xi)$ and $ \pi_{\xi} \cdot \p( \xi + \delta )$ are hard to distinguish.
However, recall that in the replicability argument we need to first fix some $\xi^*$ such that $\p( \xi^* ) = 1/2$.
Thus, we have to prove specifically that $\p(\xi)$ and $\p(\xi + \delta)$ themselves are hard to distinguish.
Fortunately, for symmetric testers, 
the acceptance probabilities of $\pi_\xi \cdot \p(\xi)$ and $\pi_\xi \cdot \p(\xi + \delta)$ must be identical to those of $\p(\xi)$ and $\p(\xi+ \delta)$ respectively.
Consequently, no symmetric tester can easily distinguish $\p(\xi)$ and $\p( \xi + \delta )$. 

\section{Preliminaries}
\label{sec:prelims}

For any positive integer $n$, let $[n] = \set{1, 2, \dotsc, n}$.
We typically use $n$ to denote the domain size, $\distribution$ to denote a distribution over $[n]$ and $m$ to denote sample complexity.
Given a distribution $\distribution$ over $[n]$, let $\distribution_{x}$ denote the probability of $x$ in $\distribution$.
For a subset $S \subset [n]$, $\distribution_{S} = \sum_{x \in S} \distribution_{x}$.
For distributions $\distribution, \q$ over $[n]$, the total variation distance is
$
    \tvd{\distribution, \q} = \frac{1}{2} \sum_{x \in [n]} \left| \distribution_{x} - \q_{x} \right| = \max_{S \subseteq [n]} \distribution_{S} - \q_{S}.
$
We also recall the definition of mutual information.
Let $X, Y$ be random variables over domain $\domain$.
The \emph{mutual information} of $X, Y$ is
$
    I(X: Y) = \sum_{x, y \in \domain} \Pr((X, Y) = (x, y)) \log \frac{\Pr((X, Y) = (x, y))}{\Pr(X = x) \Pr(Y = y)}.
$
We use $a \gg b$ (resp.\ $a \ll b$) to denote that $a$ is a large (resp.\ small) constant multiple of $b$.

\section{Replicable Uniformity Testing Algorithm}



\paragraph{Algorithm Overview}
At a high level, our algorithm 
computes the TV-distance statistic and compares it with a random threshold.
Correctness of the algorithm largely follows from the analysis of the test statistics from \cite{DBLP:conf/icalp/DiakonikolasGPP18}.
To show replicability of the algorithm, we need a better understanding of the concentration properties of the test statistic when the unknown distribution is neither uniform nor $\eps$-far from being uniform.
In particular, we show that when the variance of the test statistic is too large, even if the input distribution itself is not $\eps$-far from uniform, the test statistic is with high probability larger than any random threshold that may be chosen, leading the algorithm to replicably reject in this case.
On the other hand, when the variation is sufficiently small, we have sufficiently strong concentration in the test statistic, so that a randomly chosen threshold does not land between empirical test statistics computed from independent samples.

\begin{algorithm}

\SetKwInOut{Input}{Input}\SetKwInOut{Output}{Output}\SetKwInOut{Parameters}{Parameters}
\Input{Sample access to distribution $\distribution$ on domain $[n]$}
\Parameters{$\varepsilon$ tolerance, $\rho$ replicability}
\Output{$\accept$ if $\distribution \sim U_{n}$ is uniform, $\reject$ if $\tvd{\distribution, U_{n}} \geq \varepsilon$}

$m \gets \bigTh{\frac{\sqrt{n}}{\rho \varepsilon^2} \sqrt{\log \frac{n}{\rho}} + \frac{1}{\rho^2 \varepsilon^2}}$, $m_0 \gets \bigTh{\log 1/\rho}$.

\For{$1 \leq j \leq m_0$}{
    $D_{j}$ gets $m$ samples from $\distribution$
    
    $S_{j} \gets \frac{1}{2} \sum_{i = 1}^{n} \left| \frac{X_i}{m} - \frac{1}{n} \right|$ where $X_i$ is the occurrences of $i \in [n]$ in $D_{j}$
}

$S_{\median} \gets $ median of $\set{S_{j}}$

$\mu(U_n)$ is expectation of $\frac{1}{2} \sum_{i = 1}^{n} \left| \frac{X_i}{m} - \frac{1}{n} \right|$ under uniform distribution.

Set threshold $r \gets \mu(U_n) +  r_0 \cdot R$ ($R$ given by Lemma \ref{lemma:uniformity-expectation-gap})
where $r_0 \gets \UnifD{\frac{1}{4}, \frac{3}{4}}$.

\Return $\accept$ if $S_{\median} < r$. Reject otherwise.


\caption{$\rUniformityTester(\distribution, \varepsilon, \rho)$} 
\label{alg:r-uniformity-tester}

\end{algorithm}



\begin{proof}[Proof of \Cref{thm:r-uniformity-tester}]
    Our starting point is the ``expectation gap'' of the test statistic shown in \cite{DBLP:conf/icalp/DiakonikolasGPP18}.
    \begin{lemma}[Lemma 4 of \cite{DBLP:conf/icalp/DiakonikolasGPP18}]
        \label{lemma:uniformity-expectation-gap}
        Let $\distribution$ be a distribution on $[n]$ such that $\xi = \tvd{\distribution, U_{n}}$.
        For any distribution $\distribution$, let $\mu(\distribution)$ denote the expectation of the test statistic $S = \frac{1}{2} \sum_{i = 1}^{n} \left| \frac{X_i}{m} - \frac{1}{n} \right|$ given $m$ samples drawn from $\distribution$.
        For all $m \geq 6, n \geq 2$, there is a constant $C$ such that
        \begin{equation}
        \label{eq:expectation-gap}
            \mu(\distribution) - \mu(U_n) \geq 
            R := C \cdot \begin{cases}
                \xi^2 \frac{m^2}{n^2} & m \leq n \\
                \xi^2 \sqrt{\frac{m}{n}} & n < m \leq \frac{n}{\xi^2} \\
                \xi & \frac{n}{\xi^2} \leq m
            \end{cases}.
        \end{equation}
    \end{lemma}

    We require the following structural lemmas, whose formal proofs can be found in \Cref{app:uniformity-alg-concentration}, on the test statistic $S_{\median}$. 
    These lemmas show that the test statistic $S$ (and therefore $S_{\median}$) concentrates around its expectation $\mu(\distribution) = \Ep_{\distribution}[S]$ in the sublinear $(m < n)$ and superlinear $(m \geq n)$ cases respectively.
    For the superlinear case, we bound the sensitivity of $S$ with respect to the input sample set $T$ and apply McDiarmid's inequality to obtain the desired concentration result.
    \begin{restatable}[Superlinear Concentration]{lemma}{SuperlinearConcentration}
        \label{lemma:superlinear-concentration}
        Assume that we are in the superlinear regime (i.e., $m \geq n$).
        Denote by $\mu(\p)$ the expectation of the test statistic $S$ under the  distribution $\p$.
        If $n \leq m \leq \frac{n}{\varepsilon^2}$, then
        $
            \Pr\left( |S_{\median} - \mu(\distribution)| \geq \rho \frac{C}{16} \varepsilon^2 \sqrt{\frac{m}{n}} \right) < \frac{\rho}{4}.
        $
        If $\frac{n}{\varepsilon^2} \leq m$, then
        $
            \Pr\left( |S_{\median} - \mu(\distribution)| \geq \rho \frac{C}{16} \varepsilon \right) < \frac{\rho}{4}.
        $
    \end{restatable}    

    For the sublinear case, we provide a proof sketch below, deferring the details to \Cref{app:uniformity-alg-concentration}. 
    \begin{restatable}[Sublinear Concentration]{lemma}{SublinearConcentration}
        \label{lemma:sublinear-statistic-structure}
        Suppose $m \leq n$.
        If $\Var(S) \geq \variancethreshold$, then $\E{S} - \sqrt{\Var(S)} > \mu(U_{n}) + C \varepsilon^{2} m^{2} n^{-2}$ where $\mu(U_n)$ is the expectation of $S$ under the uniform distribution and $C$ is given by \Cref{lemma:uniformity-expectation-gap}.
        As a consequence, with probability at least $1 - \rho/4$, we have
        $
            S_{\median} > \mu(U_{n}) + C \varepsilon^2 m^{2} n^{-2}.
        $

        On the other hand, if $\Var(S) \leq \variancethreshold$, then it holds that
        $
            \Pr\left( |S_{\median} - \mu(\distribution)| \geq (C/16) \rho \varepsilon^2 m^{2} n^{-2} \right) < \rho / 4.
        $
        Furthermore, for the uniform distribution $U_n$, $\Var(S) \leq \variancethreshold$.
    \end{restatable}

    \begin{proof}[Proof Sketch.]
        In the proof sketch, for simplicity, we assume that $\varepsilon$ is some small constant and ignore its dependency.
        When $\Var(S)$ is small, we simply apply Chebyshev's inequality
        to obtain the desired concentration of $S$.
        The concentration of $S_{\median}$ then follows from the standard median trick.
        
        In the rest of the sketch we focus on the case $\Var(S) \geq \variancethreshold$. 
        In this case, the key is to show that 
        \begin{equation}
        \label{eq:sublinear-key}
            \E{S} - \sqrt{\Var(S)} > \mu(U_{n}) + C \varepsilon^{2} m^{2} n^{-2}
        \end{equation}
        as the claim $S_{\median} > \mu(U_{n}) + C \varepsilon^2 m^{2} n^{-2}$ with high probability follows almost immediately by an application of Chebyshev's inequality (and the standard analysis for the median trick).
        
        Towards \Cref{eq:sublinear-key}, define \snew{$X_{i}$ as the number of occurrences of element $i \in [n]$.}
        We begin with an observation from \cite{DBLP:conf/icalp/DiakonikolasGPP18} stating that the test statistic $S = Z/n$ where $Z = |\set{i \given X_{i} = 0}|$ denotes the number of ``empty'' buckets.
        Furthermore, we can write $Z = n - m + \sum_{i = 1}^{m} Y_{i}$ where $Y_{i}$ indicates whether the $i$-th sample collides with a previous sample $j < i$.
        Then, $\Var(S) = \Var(Z)/n^{2} = \Var(\sum Y_{i})/n^{2}$. 
        To bound the variance, we argue that the indicators $Y_{i}$ are ``almost'' negatively correlated using correlation inequalities (specifically Kleitman's Theorem \Cref{lemma:kleitman}), so that (roughly) $\Var(\sum Y_{i}) \ll 
        \sum_{i} \Var(Y_{i}) \ll \sum \E{Y_{i}}$ (see  \Cref{lemma:light-collision-ub} for the accurate statement).
        Observe that under $U_{n}$, we have $\sum \E{Y_{i}} \leq m^2 n^{-1}$ so that $\E{S} - \mu(U_{n}) \geq \E{\sum Y_{i}}/n - m^2/n^{2}$.
        It follows that
        \begin{equation*}
            \E{S} - \sqrt{\Var(S)} - \mu(U_{n}) \geq \left( \E{\sum Y_{i}} - \sqrt{\E{\sum Y_{i}}}\right) / n - (m^{2}/n^{2}).
        \end{equation*}
        By our assumption, $\E{\sum Y_{i}} \geq n^{2} \Var(S) \gg \rho^{2} \eps^{4} m^{4} n^{-2} \gg 1$ so that it suffices to show $\E{\sum Y_{i}} \gg (C + 1) m^{2} / n$.
        Since $\Var(S) \geq \variancethreshold$ which is further bounded from below by $m^2/n^3$ whenever $m \gg \sqrt{n} \eps^{-2} \rho^{-1}$ we conclude $\E{\sum Y_{i}} \geq n^2 \Var(S) \gg m^2/n$. 
    \end{proof}
    
    We are now ready to show \Cref{thm:r-uniformity-tester} --- our main algorithmic result.
    We begin with correctness for the uniform distribution.
    Suppose $m \leq n$.
    By \Cref{lemma:sublinear-statistic-structure},
    $S_{\median} \leq \mu(U_n) + R/16 \leq \mu(U_n) + R/4 \leq r$
    with probability at least $1 - \frac{\rho}{4}$ so the algorithm outputs $\accept$.

    Otherwise, if $n \leq m$, we have by \Cref{lemma:superlinear-concentration} that with probability at least $1 - \frac{\rho}{4}$,
    $
        S_{\median} \leq \mu(U_n) + R/16 \leq \mu(U_n) + R/4 \leq r
    $
    so that the algorithm outputs $\accept$.
    
    On the other hand, suppose $\xi = \tvd{\distribution, U_n} \geq \varepsilon$.
    We note that the algorithm of \cite{DBLP:conf/icalp/DiakonikolasGPP18} computes the test statistic $S$ using $\bigTh{(\sqrt{n \log(1/\rho)} + \log(1/\rho))\eps^{-2}}$ samples and compares $S$ with some fixed threshold $R$ that is strictly larger than the random threshold $r$ of our choice. 
    \snew{By the correctness guarantee of their algorithm, 
    it holds that
    $\Pr[ S > R ] \geq 1 - \rho/100$, which further implies that
    $ \Pr[ S_{\text{med}} > R > r ] \geq 1 - \rho / 4 $.}

    We now proceed to replicability.
    Consider two executions of the algorithm.
    If $\distribution$ is uniform or $\xi = \tvd{\distribution, U_{n}} \geq \varepsilon$, then following a union bound on the correctness condition, two executions of the algorithm output different values with probability at most $\rho / 2$.
    Then, suppose $m \leq n$ and $\Var(S) \geq \variancethreshold$.
    By \Cref{lemma:sublinear-statistic-structure}, both samples lead the algorithm to output $\reject$ with probability at least $1 - \rho$ so that the algorithm is $\rho$-replicable.

    Otherwise, \Cref{lemma:superlinear-concentration} and \Cref{lemma:sublinear-statistic-structure} guarantees strong concentration of the test statistic $S_{\median}$.
    In particular, with probability at least $1 - \rho/2$, we have $|S_{\median} - \mu(\distribution)| \leq \frac{\rho}{16} R$ over both samples, where $R$ is the expectation gap defined as in \Cref{eq:expectation-gap}.
    In particular, whenever the random threshold $r$ does not fall in the interval $(\mu(\distribution) \pm \rho R/16)$ both executions output the same result.
    Since $r$ is chosen uniformly at random, this occurs with probability at most $(\rho R/8)/(R/2) = \rho/4$.
    By a union bound, we observe that \Cref{alg:r-uniformity-tester} is $\rho$-replicable.
    
    Finally, the sample complexity is immediately obtained by our values of $m \cdot m_0$.
\end{proof}

\section{Lower Bound for Replicable Uniformity Testing}
In this section, we outline the important lemmas used in showing the sample complexity lower bound for $\rho$-replicable symmetric uniformity testers, and give their proof sketches.
The formal argument can be found in \Cref{app:lb-pf}.

\newcommand{\epsa}{\varepsilon_0}
\newcommand{\epsb}{\varepsilon_1}



Note that the lower bound $\tilde \Omega \lp( \eps^{-2} \rho^{-2} \rp)$ holds even for testing whether the bias of a coin is $1/2$ or $1/2 + \eps$ (see \cite{impagliazzo2022reproducibility}). 
We therefore focus on the more challenging bound of $\tilde \Omega \lp( \sqrt{n} \eps^{-2} \rho^{-1}  \rp)$.
Consider the canonical hard instance for uniformity testing where
 half of the elements have probability mass $(1 + \xi)/n$ and the other half have probability mass $(1 - \xi) / n$:
\begin{align}
\label{eq:parametrized-distribution}
\mathbf p(\xi)_i = 
\begin{cases}
    &\frac{1 + \xi}{n} \text{ if } i \mod 2 = 0 \, , \\
    &\frac{1 - \xi}{n} \text{ otherwise.}
\end{cases}
\end{align}
Our hard instance for replicable uniformity test is as follows:
we choose $\xi$ from the interval $[0, \eps]$ uniformly at random, and let the tester observe samples from $\p(\xi)$.

Fix some uniformity tester $\innerAlg_m$ that takes $m$ samples.
We will argue that if $\innerAlg_m$ is $\rho$-replicable and correct with probability at least $0.99$, then we must have $m = \tilde \Omega( \sqrt{n} \eps^{-2} \rho^{-1} )$.

At a high level, we follow the framework of \cite{impagliazzo2022reproducibility}. 
First, we fix some good random string $r$ such that the induced deterministic algorithm $\innerAlg_m(;r)$ is replicable with probability at least $1 - 10 \rho$, and is correct on $\p(0)$ and $\p(\eps)$ with probability at least $0.99$.
Then, consider the function $\text{Acc}_m(\xi)$ that denotes the acceptance probability of $\innerAlg_m(S;r)$ when the samples $S$ are taken from $\p(\xi)$.
Note that $\text{Acc}_m(\xi)$ must be a continuous function.  
Moreover, by the correctness of $\innerAlg_m(;r)$, it holds that 
$\text{Acc}_m(0) \geq 0.99$ and $\text{Acc}_m(\eps) < 0.01$.
Hence, there must be some value $\xi^*$ such that
$\text{Acc}_m(\xi^*) = 1/2$.
To show the desired lower bound on $m$, it suffices to show that $\text{Acc}_m(\xi^* + \delta)$ must not be too far from $\text{Acc}(\xi^*)$ for any $\delta  \ll \eps \rho$ if $m = \tilde o( \sqrt{n} \eps^{-2} \rho^{-1} )$. 
\begin{proposition}[Lipschitz Continuity of Acceptance Probability]
\label{prop:key-lip}
Assume that $m = \tilde o( \sqrt{n} \eps^{-2} \rho^{-1} )$.
Let $\innerAlg_m(;r)$ be a deterministic symmetric tester that takes $m$ samples, and define
the acceptance probability function 
$
\text{Acc}_m(\xi) = \Pr_{S \sim \p(\xi)^{\otimes m}}[ \innerAlg(S;r)=1] \, ,
$
where $\p(\xi)$ is defined as in \Cref{eq:parametrized-distribution}.
Let $\epsa < \epsb \in (0, \eps)$ be such that
$\epsb - \epsa < \eps \rho$.
Then it holds that
$
\abs{\text{Acc}_m(\epsa)
- \text{Acc}_m(\epsb)} < 0.1.
$
\end{proposition}
Given the above proposition, since we choose $\xi$ from $[0, \eps]$ uniformly at random, it follows that the acceptance probability of the algorithm is around $1/2$ with probability at least $\Omega(\rho)$ if $m = \tilde o( \sqrt{n} \eps^{-2} \rho^{-1} )$, 
implying that the algorithm is not $O(\rho)$-replicable. 
The proof of \Cref{prop:key-lip} is based on an information theoretic argument based on ideas developed in \cite{repstats}.
We defer the formal proof of \Cref{prop:key-lip}  to \Cref{app:key-lip-proof} and give its proof outline below.
Let $m, \epsa, \epsb$ be defined as in \Cref{prop:key-lip}. 
At a high level, we construct two families of probability distributions, which we denote by $\mathcal M_0$ and $\mathcal M_1$, that satisfy the following properties:
(i) $\mathcal M_i$ contains distributions that are identical to $\p(\eps_i)$ up to domain element relabeling.
(ii) Any tester that uses at most $m$ samples cannot effectively distinguish between a random probability distribution from $\mathcal M_0$ and a random one from $\mathcal M_1$.
For the sake of contradiction, assume that there is a deterministic symmetric tester using $m$ samples such that the acceptance probabilities on $\p(\xi)$ and $\p(\xi + \delta)$ differ by at least $0.1$. 
Since all distributions within $M_0$ are identical to $\p(\xi)$ up to element relabeling, it follows that the acceptance probabilities of the symmetric tester on any of the distribution within $\mathcal M_0$ must be the same (and similarly for $\mathcal M_1$).
This then further implies that the tester can successfully distinguish a random distribution from $\mathcal M_0$ versus one from $\mathcal M_1$, contradicting property (ii). \Cref{prop:key-lip} thereby follows.

It then remains to construct the two families of distributions.
Recall that $\mathcal M_i$ contains distributions that are identical to $\p(\eps_i)$ up to element relabeling.
We will consider all distributions that can be obtained by performing ``local swaps'' on $\p(\eps_i)$.
In particular, we first group the elements into $n/2$ many adjacent pairs, and then randomly exchange the labels within each pair.
\begin{definition}[Local Swap Family]
\label{def:local-swap}
Let $n$ be an even number, and $\p$ be a probability distribution on $[n]$.
We define the Local Swap Family of $\p$ as the set of all distributions $\tilde p$ such that
$$
\lp( \tilde p(\eps_i)_j, \tilde p(\eps_i)_{j+1}\rp)
= \lp(  p(\eps_i)_j,  p(\eps_i)_{j+1}\rp) 
\text{ or }
\lp( \tilde p(\eps_i)_j, \tilde p(\eps_i)_{j+1}\rp)
= \lp(  p(\eps_i)_{j+1},  p(\eps_i)_j\rp)\
$$
for all odd numbers $j \in [n]$.
\end{definition}

\begin{lemma}[Indistinguishable Distribution Families]
\label{lem:indistinguishable-family}
Let $m = \tilde o( \sqrt{n} \eps^{-2} \rho^{-1} )$, and $\epsa < \epsb \in (0, \eps)$ be such that
$\epsb - \epsa < \eps \rho$.
Let $\p(\eps_0), \p(\eps_1)$ be defined as in \Cref{eq:parametrized-distribution}, and
$\mathcal M_0, \mathcal M_1$ be the Local Swap Families (see \Cref{def:local-swap}) of $\p(\eps_0), \p(\eps_1)$ respectively.
Let $S$ be $m$ samples drawn from either a random distribution from $\mathcal M_0$ or a random one from $\mathcal M_1$. 
Given only $S$, no algorithm can successfully distinguish between the two cases with probability more than $0.6$.
\end{lemma}
The formal proof of \Cref{lem:indistinguishable-family} can be found in \Cref{app:indistinguishable-family-proof}.
At a high level, we use an information theoretic argument.
In particular, we consider a stochastic process where we have a random unbiased bit $X$ that controls whether we sample from a random distribution from $\mathcal M_0$ or a random distribution from $\mathcal M_1$.
Let $T$ be the obtained sample set. 
We show that $T$ and $X$ has little mutual information. A simple application of the data processing inequality then allows us to conclude the proof. 
To simplify the computation involved in the argument, we will also apply the standard ``Poissonization'' trick. 
In particular, we assume that the algorithm draws $\PoiD{m}$ many samples instead of exactly $m$ samples.
The advantage of doing so is that we can now assume that the random variables counting the number of occurrences of each element are mutually independent
conditioned on the probability distribution from which they are sampled.
Moreover, one can show that this is without loss of generality by a standard reduction-based argument using the fact that Poisson distributions are highly concentrated.
The formal statement of the mutual information bound is provided below.
\begin{lemma}
\label{lem:mutual-info}
Let $\mathcal M_0, \mathcal M_1$ be defined as in \Cref{lem:indistinguishable-family}. 
Let $X$ be a random unbiased bit,
$\tilde p$ a random probability distribution from $\mathcal M_X$, 
and $S$ be $\PoiD{m}$ many samples from $\tilde p$.
Moreover, let $M_i$ be the occurrences of element $i$ among $S$.
Then it holds that
\begin{equation*}
    I(X: M_1, \cdots, M_n) = O \left( \eps^4 \rho^2 \frac{m^2}{n} \; \log^4(n) \right) + o(1).
\end{equation*}
\end{lemma}
To show \Cref{lem:mutual-info}, we first note that if we group the random variables into $n/2$ many adjacent pairs, 
the pairs $(M_i, M_{i+1})$ are conditionally independent and identical given $X$. 
Therefore, we can bound $I(X: M_1, \cdots, M_n)$ from above by 
$\frac{n}{2} \; I( X: M_1, M_{2} )$.
To tackle $I( X: M_1, M_{2} )$, we break into three regimes depending on the relative sizez of $m, n, \eps$: 
the sub-linear regime (approximately $m \ll n$),
the super-linear regime (approximately $n < m < n/\eps^{2}$),
and the super-learning regime (approximately $m > n / \eps^{2}$).
The formal proofs involves writing the probability distributions of $M_i$s as Taylor expansions in $\eps$. The calculations are rather technical and therefore deferred to \Cref{app:mutual-info-pf}.

\begin{ack}
The authors would like to thank Max Hopkins, Russell Impagliazzo, and Daniel Kane for many helpful discussions and suggestions.

Sihan Liu is supported by NSF Award CCF-1553288 (CAREER) and a Sloan Research Fellowship.
Christopher Ye is supported by NSF Award AF: Medium 2212136, NSF grants 1652303, 1909046, 2112533, and HDR TRIPODS Phase II grant 2217058.
\end{ack}

\bibliographystyle{plain}
\bibliography{references}



\newpage

\appendix

\section{Omitted Proofs for Replicable Uniformity Testing Upper Bound}
\label{app:uniformity-alg-concentration}

\subsection{Concentration of the Total Variation Distance Statistic}

We now prove the required lemmas for \Cref{thm:r-uniformity-tester}.
This section is dedicated to proving the required lemmas regarding the concentration properties of $S_{\median}$.



\subsubsection*{Superlinear Case: $m \geq n$}

First, we consider the superlinear case, when $m \geq n$.

\SuperlinearConcentration*

\begin{proof}
Recall that \Cref{alg:r-uniformity-tester} uses the median trick to boost the success probability.
Here we focus on the test statistic $S := S_j$ computed in a single iteration $j \in [m_0]$.
An essential tool in the analysis is McDiarmid's Inequality.
    \begin{theorem}[McDiarmid's Inequality]
        \label{thm:mcdiarmid}
        Let $X_1, \dotsc, X_{m}$ be independent random variables taking values in $\domain$.
        Let $f: \domain^{m} \mapsto \R$ be a function such that for all pairs of tuples $(x_1, \dotsc, x_{m}), (x_1', \dotsc x_{m}') \in \domain^{m}$ such that $x_i = x_i'$ for all but one $i \in [m]$, 
        \begin{equation*}
            \left|f(x_1, \dotsc, x_{m}) - f(x_1', \dotsc, x_{m}')\right| \leq B,
        \end{equation*}
        Then, 
        \begin{equation*}
            \Pr \left( \left|f(X_1, \dotsc, X_{m}) - \E{f(X_1, \dotsc, X_{m})}\right| \geq t \right) < 2 \exp \left( - \frac{2 t^2}{m B^2} \right).
        \end{equation*}
    \end{theorem}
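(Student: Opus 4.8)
The plan is to prove McDiarmid's inequality by the classical \emph{method of bounded differences}: construct the Doob martingale obtained by revealing the coordinates of $f$ one at a time, show its increments have bounded conditional range, and then run Hoeffding's lemma together with the Chernoff method.

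First I would set up the Doob martingale. Define $Z_0 = \E{f(X_1, \dots, X_m)}$ and, for $k \in [m]$,
\[
Z_k = \E{f(X_1, \dots, X_m) \mid X_1, \dots, X_k},
\]
so that $Z_m = f(X_1, \dots, X_m)$ and $(Z_k)_{k=0}^{m}$ is a martingale with respect to the filtration generated by $X_1, \dots, X_m$. Since the $X_i$ are independent, each $Z_k$ is an explicit function of the first $k$ coordinates: $Z_k = g_k(X_1, \dots, X_k)$, where $g_k(x_1, \dots, x_k) = \E{f(x_1, \dots, x_k, X_{k+1}, \dots, X_m)}$, the remaining coordinates being averaged out.

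The key step is a conditional range bound on the increments $D_k := Z_k - Z_{k-1}$. Fix $x_1, \dots, x_{k-1}$ and set
\[
L_k = \inf_{x \in \domain} g_k(x_1, \dots, x_{k-1}, x), \qquad U_k = \sup_{x \in \domain} g_k(x_1, \dots, x_{k-1}, x).
\]
Applying the bounded-differences hypothesis to $f$ in the $k$-th coordinate (with the other coordinates either fixed or integrated out, using independence of $X_k, \dots, X_m$) gives $U_k - L_k \le B$. Now $Z_k = g_k(x_1, \dots, x_{k-1}, X_k) \in [L_k, U_k]$, and $Z_{k-1} = \E{g_k(x_1, \dots, x_{k-1}, X_k)} \in [L_k, U_k]$ is a conditional average of the same quantity; hence, conditioned on $X_1, \dots, X_{k-1}$, the increment $D_k$ is mean zero and lies in the interval $[L_k - Z_{k-1}, U_k - Z_{k-1}]$ of length at most $B$.

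Given this, I would invoke Hoeffding's lemma: a mean-zero random variable supported on an interval of length $B$ has moment generating function at most $e^{\lambda^2 B^2/8}$. Applying it to each $D_k$ conditionally and iterating from $k = m$ down to $k = 1$ via the tower property yields $\E{e^{\lambda(Z_m - Z_0)}} \le e^{m\lambda^2 B^2/8}$ for all $\lambda \in \R$. A Chernoff bound then gives $\Pr(Z_m - Z_0 \ge t) \le \exp(-\lambda t + m\lambda^2 B^2/8)$, and optimizing at $\lambda = 4t/(mB^2)$ produces $\Pr(f(X_1,\dots,X_m) - \E{f(X_1,\dots,X_m)} \ge t) \le \exp(-2t^2/(mB^2))$. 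Repeating the argument with $-f$ in place of $f$ bounds the lower tail identically, and a union bound over the two tails gives the factor $2$ in the statement. The main obstacle is the conditional range bound on $D_k$: the rest is a mechanical application of Hoeffding's lemma and the Chernoff method, but one must take care to express $g_k$ correctly and to use the bounded-differences property in exactly the right coordinate while the later coordinates are being integrated out — this is the single point where independence of the $X_i$ is essential.
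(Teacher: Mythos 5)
Your proof is correct: the Doob-martingale decomposition, the conditional range bound of $B$ on each increment, Hoeffding's lemma, the Chernoff optimization at $\lambda = 4t/(mB^2)$, and the union bound over the two tails is precisely the standard ``method of bounded differences'' argument, and it yields the stated bound $2\exp(-2t^2/(mB^2))$. Note, however, that the paper does not prove this statement at all --- McDiarmid's inequality is invoked there as a known classical tool inside the proof of the superlinear concentration lemma --- so there is nothing in the paper to compare against; your argument is simply the canonical textbook proof of the cited result.
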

    
    Observe that the samples $T_j$ are independent and a change in $T_j$ changes $S$ by at most $\frac{1}{m}$, since at most two values $X_i$ change by at most $\frac{1}{2m}$ each.
    Then, applying Theorem \ref{thm:mcdiarmid} to $S = f(T_1, \dotsc, T_m)$, we obtain
    \begin{equation}
        \label{eq:mcdiarmid-application}
        \Pr\left( |S - \mu(\distribution)| \geq t \right) < 2 \exp \left( - \frac{2 m^2 t^2}{m} \right) = 2 \exp \left( - 2 m t^2 \right).
    \end{equation}
    
    When $n \leq m \leq \frac{n}{\varepsilon^2}$, we can conclude
    \begin{equation*}
        \Pr\left( |S - \mu(\distribution)| \geq \rho \cdot \frac{C}{16} \cdot \varepsilon^2 \sqrt{\frac{m}{n}} \right) < 2 \exp \left( - \frac{2 C^2 \rho^2 \varepsilon^4 m^2}{16^2 n} \right) < \frac{1}{10}.
    \end{equation*}
    The first inequality follows from \Cref{eq:mcdiarmid-application} and the second holds whenever $m \geq \bigTh{\frac{\sqrt{n}}{\rho \varepsilon^2}}$ for some sufficiently large constant factor.
    Finally, we show $S_{\median}$ is concentrated with high probability, i.e.
    \begin{equation*}
        \Pr\left( |S_{\median} - \mu(\distribution)| \geq \rho \cdot \frac{C}{16} \cdot \varepsilon^2 \sqrt{\frac{m}{n}} \right) < \frac{\rho}{4}.
    \end{equation*}
    Note that the median fails to satisfy the concentration condition only if at least half of the intermediate statistics $S_j$ do not satisfy the concentration condition.
    By a Chernoff bound, this occurs with probability at most $\rho$ for $m_0 = \bigTh{\log \frac{1}{\rho}}$ a sufficiently large constant.
    
    Now consider the case $m \geq \frac{n}{\varepsilon^2}$.
    Note
    \begin{equation*}
        \Pr\left( |S - \mu(\distribution)| \geq \rho \cdot \frac{C}{16} \cdot \varepsilon \right) < 2 \exp \left(    - \frac{2 C^2 \rho^2 \varepsilon^2 m}{16^2} \right) < \frac{1}{10}
    \end{equation*}
    whenever $m \geq \bigTh{\frac{1}{\rho^2 \varepsilon^2}}$.
    Concentration of $S_{\median}$ then follows from a similar argument as above.
\end{proof}


\subsubsection*{Sublinear Case: $m \leq n$}

We now consider the sublinear case.
Again, we consider a single iteration $j$ and examine the concentration of the test statistic $S = S_{j}$.
Following an observation from \cite{DBLP:conf/icalp/DiakonikolasGPP18}, we rewrite the test statistic as
\begin{equation*}
    S = \frac{1}{2} \sum_{i = 1} \left| \frac{X_i}{m} - \frac{1}{n} \right| = \frac{1}{2} \sum_{i = 1} \frac{X_i}{m} - \frac{1}{n} + \frac{2}{n} \cdot \ind{X_i = 0} = \frac{1}{n} \abs{\set{i \given X_i = 0}} = \frac{Z}{n},
\end{equation*}
where we define the random variable $Z = \abs{\set{i \given X_i = 0}}$.

Our goal now is to bound the variance of $Z$.
To do so, we further define some useful random variables.

\paragraph{Preliminaries}
Let $T \in [n]^m$ be the $m$ samples drawn, i.e., $T_i$ denotes the element corresponding to the $i$-th sample.
We use $Y_1, \dotsc, Y_m$ to denote whether the $i$-th sample collides with any samples $j < i$, i.e., $Y_i = 1$ if and only if there exists some $j < i$ such that $T_i = T_j$.
We will consider decomposition of the domain into ``heavy'' and ``light'' elements.
\begin{definition}
    \label{def:heavy-light-elements}
    Let $\distribution$ be a distribution on $[n]$.
    For each $k \in [n]$, $k$ is {\bf heavy} if $\distribution_{k} \geq \unifheavythreshold$ and {\bf light} otherwise.
    Let $n_{H}, n_{L}$ denote the number of heavy and light elements respectively.
    
    Let $b_{k, \distribution} = \ind{\distribution_{k} \geq \unifheavythreshold}$ indicate whether $k$ is a heavy element in $\distribution$.
    When the distribution is clear, we omit $\distribution$ and write $b_k$.
\end{definition}
We define $\tilde{Y}_1, \dotsc, \tilde{Y}_m$ to indicate that the $i$-th sample comes from some light element and collides with some previous sample: $\tilde Y_i = Y_i \; \mathbf 1 \{ b_{T_i} = 0 \}$.
Let $H$ denote the number of occurrences of heavy elements among the samples: $H = | i \given b_{T_i} = 1  |$.
Finally, define $Z_H, Z_L$ to be the contributions to $Z$ from heavy and light elements respectively:
\begin{align*}
    Z_H &= |\set{i \given b_{k} = 1 \andT X_i = 0}| \\
    Z_L &= |\set{i \given b_{k} = 0 \andT X_i = 0}|.
\end{align*}

We then show the following.

\begin{lemma}
    \label{lemma:sublinear-concentration} 
    $\Var(Z) \leq  \frac{\rho^{\zhubconstant}}{500} + 32 \log \frac{10n}{\rho} \sum_{i = 1}^{m} \E{Y_i}$.
    In particular, $\Var(S) = \frac{\Var(Z)}{n^2} \leq \frac{\rho^{\zhubconstant}}{500 n^{2}} + \frac{32}{n^2} \log \frac{10n}{\rho} \sum_{i = 1}^{m} \E{Y_i}$.
\end{lemma}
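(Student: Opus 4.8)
The goal is to bound $\Var(Z)$ where $Z = Z_H + Z_L$ counts empty bins. I would split the variance using $\Var(Z) \leq 2\Var(Z_H) + 2\Var(Z_L)$ (or the analogous $(\sqrt{\Var Z_H}+\sqrt{\Var Z_L})^2$ bound) and handle the two contributions separately.

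\textbf{Bounding $Z_H$.} The heavy part $Z_H$ is controlled by the total number $n_H$ of heavy elements, since $Z_H \leq n_H$ deterministically, so $\Var(Z_H) \leq \E{Z_H^2} \leq n_H^2$. The key point is that there cannot be too many heavy elements: each heavy element has mass at least $\unifheavythreshold = \frac{3}{m}\log\frac{10n}{\rho}$, so $n_H \leq \frac{m}{3\log(10n/\rho)}$. That alone is not small enough, so I expect the real argument is that a heavy bin is empty only with probability $(1-\p_k)^m \leq e^{-\p_k m} \leq e^{-3\log(10n/\rho)} = (\rho/(10n))^3$. Hence $\E{Z_H} \leq n_H \cdot (\rho/(10n))^3 \leq n \cdot (\rho/(10n))^3 = \rho^3/(1000 n^2)$, and since $Z_H \leq n_H \leq n$, we get $\E{Z_H^2} \leq n \cdot \E{Z_H} \leq \rho^3/1000$. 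This gives a bound of the form $\Var(Z_H) \leq \rho^3/1000$, which (after accounting for the factor $2$) matches the $\frac{\rho^{\zhubconstant}}{500}$ term in the statement. So the heavy contribution is essentially deterministically negligible.

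\textbf{Bounding $Z_L$.} For the light part, I would rewrite $Z_L$ in terms of the collision indicators. Since $n - Z = \abs{\{i : X_i \geq 1\}}$ is the number of \emph{nonempty} bins, and the number of nonempty bins equals $m - \sum_{i=1}^m Y_i$ (each sample is either the first occurrence of its element or a "collision"), we have $Z = n - m + \sum_i Y_i$. Restricting to light elements, $Z_L = n_L - (\text{number of occupied light bins}) = n_L - \big(\sum_{i : b_{T_i}=0} 1 - \sum_i \tilde Y_i\big)$, so up to additive constants (which don't affect variance) and the randomness in $H$ (which is itself tightly concentrated), $Z_L$ differs from $-\sum_i \tilde Y_i$ only by terms with small variance. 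Thus $\Var(Z_L) \lesssim \Var\big(\sum_i \tilde Y_i\big) + (\text{correction})$. For the Efron–Stein / direct second-moment bound on $\sum_i \tilde Y_i$: each $\tilde Y_i \leq Y_i$, and $\sum_i \E{\tilde Y_i} \leq \sum_i \E{Y_i}$; I'd then argue the $\tilde Y_i$ are "almost independent" — conditioning on the first $i-1$ samples, $\tilde Y_i$ is a Bernoulli whose parameter is at most the light-mass seen so far, and cross-correlations $\Cov(\tilde Y_i, \tilde Y_j)$ are nonpositive or small because knowing one early sample landed in a light bin only slightly increases later collision probabilities. Summing gives $\Var(Z_L) \leq \sum_i \E{Y_i} + (\text{small})$, and the logarithmic factor $32\log\frac{10n}{\rho}$ in the statement presumably absorbs the variance contributions from $H$ and from the fact that a single sample in a very light bin can still be hit by up to $O(\log(n/\rho))$ others before we declare it heavy, i.e., from bounding $\E{\tilde Y_i^2}$ and higher-order collision terms by $\log\frac{10n}{\rho}\cdot\E{Y_i}$ pointwise.

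\textbf{Main obstacle.} The delicate step is the "almost independence" of the light collision indicators $\tilde Y_i$ — one needs a genuine correlation inequality (the excerpt explicitly flags "we use correlation inequalities to show that these collision indicators are almost independent") rather than a naive union bound, because naively $\Var(\sum \tilde Y_i)$ could be as large as $(\sum \E{\tilde Y_i})^2$. I'd structure this by a martingale/conditioning argument on the sample sequence: let $\mathcal F_i$ be the $\sigma$-algebra of the first $i$ samples and write $\sum_i \tilde Y_i - \E{\sum \tilde Y_i}$ as a martingale difference sum, bounding each difference's conditional variance by $O(\log\frac{10n}{\rho})\cdot\Pr(Y_i = 1 \mid \mathcal F_{i-1})$ using that a light bin contributes at most its count of collisions, which is at most $O(\log\frac{10n}{\rho})$ except on a bin we'd have classified heavy. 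Taking expectations and summing yields the stated bound; the final line $\Var(S) = \Var(Z)/n^2$ is immediate from $S = Z/n$.
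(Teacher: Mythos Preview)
Your structural decomposition matches the paper's: split $Z=Z_H+Z_L$, bound $\Var(Z_H)$ via the tiny probability $(\rho/(10n))^3$ that a heavy bin is empty (the paper does exactly your computation, getting $\Var(Z_H)\le \rho^3/1000$), and reduce $\Var(Z_L)$ to $\Var(H)+\Var\big(\sum_i\tilde Y_i\big)$ via $Z_L=n_L-(m-H)+\sum_i\tilde Y_i$. The paper also bounds $\Var(H)$ separately: since $H$ is binomial, $\Var(H)\le\E H$, and a short argument shows $\E H\le 4\sum_i\E{Y_i}$ (every heavy element is occupied after $m/2$ samples with high probability, so $\E{Y_{m/2}}\ge\tfrac12\sum_{k\text{ heavy}}\p_k$). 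No logarithm appears here --- your attribution of the $\log(10n/\rho)$ factor to $H$ is off.

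The genuine gap is in bounding $\Var\big(\sum_i\tilde Y_i\big)$. Your martingale/Efron--Stein route does not give the claimed bound: for the Doob martingale on $\mathcal F_i$, the $i$-th difference captures not only $\tilde Y_i$ but also the change in all future $\E{\tilde Y_j\mid\mathcal F_i}$. Revealing $T_i$ creates (when $Y_i=0$) a newly occupied light bin that the remaining $m-i$ samples can hit, contributing $\Theta((m-i)\p_{T_i})$, which can be of order $\log(10n/\rho)$ \emph{regardless} of whether $Y_i=1$. So the claimed per-step bound $\E{D_i^2\mid\mathcal F_{i-1}}=O(\log)\Pr(Y_i=1\mid\mathcal F_{i-1})$ is false, and summing the naive bounded-difference estimate gives $m\,\polylog(n/\rho)\gg\sum_i\E{Y_i}\approx m^2/n$ in the sublinear regime. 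The $\tilde Y_i$ are genuinely positively correlated (through shared occupied bins), so a martingale argument alone will not cancel the cross terms.

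The paper's resolution is a direct second-moment computation that splits $\E{\tilde Y_i\tilde Y_j}$ according to whether $T_i=T_j$. When $T_i=T_j=k$ (both hit the \emph{same} light bin), lightness gives $\p_k\le\frac{3}{m}\log\frac{10n}{\rho}$ and hence $\sum_{i\ne j}\Pr(\tilde Y_i\tilde Y_j=1,\,T_i=T_j)\le 3\log\frac{10n}{\rho}\sum_i\E{\tilde Y_i}$; this is the origin of the logarithmic factor. When $T_i\ne T_j$, the paper applies Kleitman's lemma (an FKG-type correlation inequality): for fixed distinct light bins $k,\ell$, encode each sample by whether it lands in $k$, in $\ell$, or neither; then the events ``some $T_{<i}$ hits $k$'' and ``some $T_{<j}$ hits $\ell$'' are monotone in opposite directions, so $\Pr(\tilde Y_i\tilde Y_j=1,\,T_i\ne T_j)\le\Pr(\tilde Y_i=1)\Pr(\tilde Y_j=1)$. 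This is precisely the ``genuine correlation inequality'' you flagged as the main obstacle; Kleitman's lemma is the missing tool.
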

    
\begin{proof}[Proof of \Cref{lemma:sublinear-concentration}]
    Recall the random variables $Y_1, \dotsc, Y_m$ to indicate whether the $i$-th sample collides with any samples $j < i$.
    Then, we obtain the following identities:
    \begin{align*}
        Z &= n - m + \sum_{i = 1}^{m} Y_i = Z_H + Z_L \\
        Z_H &= \abs{\set{i \given X_i = 0 \andT b_i = 1 }} \\
        Z_L &= \abs{\set{i \given X_i = 0 \andT b_i = 0 }} = n_{L} - (m - H) + \sum_{i = 1}^{m} \tilde{Y}_{i},
    \end{align*}
    We will bound the variance of $Z_H, Z_L$ separately.
    First, we note that $Z_H = 0$ with high probability.
    In particular, the probability that an element with $b_k = 1$ does not occur in $m$ samples is at most $(1 - \distribution_{x})^{m} < \left(\frac{\rho}{10n}\right)^{3}$, so that applying the union bound the probability that some heavy element does not occur is at most $\frac{\rho^{\zhubconstant}}{1000 n^{2}}$, showing $\Pr(Z_{H} > 0) < \frac{\rho^{\zhubconstant}}{1000 n^{2}}$.
    In particular,
    \begin{align*}
        \Var(Z_H) &\leq \sum_{\ell = 0}^{n} \Pr(Z_{H} = \ell) \ell^2 \\
        &\leq n^{2} \frac{\rho^{\zhubconstant}}{1000 n^{2}} \\
        &\leq \frac{\rho^{\zhubconstant}}{1000}.
    \end{align*}
    
    In the remaining proof, we bound the variance of $Z_{L}$. 
    To bound the variance of $Z_L$, we separately bound the variance of $H$ and $\sum_{i = 1}^{m} \tilde{Y}_{i}$, since $n_{L}, m$ are fixed constants.
    We begin with the first term, beginning with an upper bound on the expectation of $H$.
    We show the expected number of heavy elements is at most a constant multiple of the expected number of collisions.
    
    \begin{lemma}
        \label{lemma:heavy-distribution-ub}
        \begin{equation*}
            4 \sum_{i = 1}^{m} \E{Y_i} \geq m \sum_{b_k = 1} \distribution_{k} = \E{H} \geq \Var(H).
        \end{equation*}
    \end{lemma}
    
    \begin{proof}[Proof of Lemma \ref{lemma:heavy-distribution-ub}]
        Note that $H$ is a binomial random variable so that its expectation is an upper bound on its variance.
        Since the $Y_i$ are non-negative random variables and $\E{Y_i}$ is increasing in $i$, it suffices to show the following lower bound:
        \begin{equation*}
             4 \sum_{i = 1}^{m} \E{Y_i} \geq 4 \sum_{i = m/2}^{m} \E{Y_i} \geq 2 m \E{Y_{m/2}} \geq m \sum_{b_k = 1} \distribution_{k},
        \end{equation*}
        or equivalently
        \begin{equation*}
            \E{Y_{m/2}} = \Pr(Y_{m/2} = 1) \geq \frac{1}{2} \sum_{b_k = 1} \distribution_{k}.
        \end{equation*}
        Consider an element $k$ such that $b_k = 1$ or equivalently $\distribution_{k} \geq \unifheavythreshold$.
        Then, the probability that $k$ does not occur in the first $m/2$ samples is at most $(\rho n^{-1} / 10)^{\zhubconstant / 2} \leq \frac{\rho}{10 n}$.
        Union bounding over at most $n$ elements, each element with $b_k = 1$ occurs in the first $\frac{m}{2}$ samples with probability at least $1 - \frac{\rho}{10} \geq \frac{9}{10}$.
        Conditioned on this, $\Pr(Y_{m/2} = 1) \geq \sum_{b_k = 1} \distribution_{k}$ so that we conclude
        \begin{equation*}
            \Pr(Y_{m/2} = 1) \geq \frac{9}{10} \sum_{b_k = 1} \distribution_{k}.
        \end{equation*}
    \end{proof}
    Now, we move on to the term $\sum_{i = 1}^{m} \tilde{Y}_i$.
    \begin{lemma}
        \label{lemma:light-collision-ub}
        \begin{equation*}
            \Var \left( \sum_{i = 1}^{m} \tilde{Y}_i \right) \leq \left( 1 + \zhubconstant \log \frac{10n}{\rho} \right) \sum_{i = 1}^{m} \E{Y_i}
        \end{equation*}
    \end{lemma}
    
    \begin{proof}
        Using linearity of expectation, we have
        \begin{align*} 
        \E{ \left( \sum_{i=1}^m \tilde{Y}_i \right)^2 } = \sum_{i=1}^m \E{ \tilde{Y}_i^2 } + \sum_{i \neq j} \E{ \tilde{Y}_i \tilde{Y}_j }.
        \end{align*}
        
        For the first term, $\E{\tilde{Y}_i^2} = \E{\tilde{Y}_i}$ since $\tilde{Y}_i$ is an indicator variable. 
        Considering the second term, we introduce the variables $Z_{i,j}$ denoting whether the $i$-th and $j$-th samples collide, i.e. $Z_{i, j} = \ind{T_i = T_j}$.
        We can write
        \begin{align*}
        \E{\tilde{Y}_{i} \tilde{Y}_{j}} = \Pr( \tilde{Y}_i \tilde{Y}_j = 1 ) = \Pr( \tilde{Y}_i \tilde{Y}_j = 1 , Z_{i,j} = 1) + \Pr( \tilde{Y}_i \tilde{Y}_j = 1 , Z_{i,j} = 0).
        \end{align*}
        
        We now bound the first term.
        
        \begin{lemma}
            \label{lemma:i-j-same-collision-ub}
            \begin{equation*}
                \sum_{i \neq j} \Pr(\tilde{Y}_i \tilde{Y}_j Z_{i, j} = 1) \leq \zhubconstant \log \frac{10 n}{\rho} \sum_{i = 1}^{m} \E{\tilde{Y}_i} \leq \zhubconstant \log \frac{10 n}{\rho} \sum_{i = 1}^{m} \E{Y_i}
            \end{equation*}
        \end{lemma}
        
        \begin{proof}[Proof of Lemma \ref{lemma:i-j-same-collision-ub}]
            The event $\tilde{Y}_i, \tilde{Y}_j = 1, Z_{i,j} = 1$ happens when there is some element $k \in [n]$ such that
            \begin{enumerate}
                \item $b_k = 0$
                \item both samples $T_i = T_j = k$
                \item some samples from $T_{<i}$ falls in $k$. We denote this event as $E_{i,k}$.
            \end{enumerate}
            We compute as follows:
            \begin{align*}
                \Pr(\tilde{Y}_i \tilde{Y}_j Z_{i, j} = 1) &= \sum_{b_k = 0} \Pr(E_{i, k} = 1) \Pr(T_i = k) \Pr(T_j = k) \\
                &= \sum_{b_k = 0} \Pr(E_{i, k} = 1) \distribution_{k}^2 \\
                &\leq \unifheavythreshold \sum_{b_k = 0} \distribution_{k} \Pr(E_{i, k} = 1) \\
                &= \unifheavythreshold \E{\tilde{Y}_i}
            \end{align*}
            where in the inequality we have used $b_k = 0$ gives an upper bound on $\distribution_{k}$.
            Then, for each $i$ we sum over $m - 1 \leq m$ indices $j \neq i$ to conclude the proof.
        \end{proof}
        
        For the second term, we argue it can be upper bounded by $\Pr(\tilde{Y}_i = 1) \Pr(\tilde{Y}_j = 1)$ with Kleitman’s Lemma.
        We define monotonically increasing and decreasing subsets.
        \begin{definition}
            \label{def:non-decreasing-events}
            Let $x, y \in \set{0, 1, 2}^{n}$.
            We say $x \preceq y$ is $x_i \leq y_i$ for all $i \in [n]$.
            A subset $S \subset \set{0, 1, 2}^{n}$ is monotonically increasing if $x \in S$ and $x \preceq y$ implies $y \in S$.
            $S$ is monotonically decreasing if $x \in S$ and $y \leq x$ implies $y \in S$.
        \end{definition}
        
        In fact, we require a small modification of Kleitman's Lemma.
        
        \begin{lemma}[Kleitman's Lemma~\cite{alon2016probabilistic}]
        \label{lemma:kleitman}
        Let $Q$ be a distribution over random bit strings $\set{0, 1, 2}^{n}$.
        Let $\mathcal{A}, \mathcal{B}$ be two subsets of $\set{0, 1, 2}^{n}$ such that $\mathcal{A}$ is monotonically increasing and $\mathcal B$ is monotonically decreasing. Then, it holds
        \begin{equation*}
            \Pr \left( \mathcal{A}(Q) \mathcal{B}(Q) \right) \leq \Pr \left( \mathcal{A}(Q) \right) \Pr \left( \mathcal{B}(Q) \right),
        \end{equation*}
        where $\mathcal{A}(Q), \mathcal{B}(Q)$ denotes the event that a random string draw from $Q$ lies in $\mathcal{A}, \mathcal{B}$.
        \end{lemma}

        \begin{proof}
            Kleitman's Lemma is proven for distributions over random bit strings $\set{0, 1}^{n}$, or equivalently subsets of $[n]$, replacing the $\preceq$ relation with $\subseteq$ inclusion. 
            Given a distribution $Q$ over $\set{0, 1, 2}^{n}$, we design a distribution $Q'$ over $[2n]$ as follows.
            Let $v \in \set{0, 1, 2}^{n}$.
            We map $v$ to the subset $S = \bigcup_{i = 1}^{n} S_i$ where $S_i = \emptyset$ if $v_i = 0$, $S_i = \set{2 i}$ if $v_i = 1$ and $S_i = \set{2i - 1, 2i}$ if $v_i = 2$.
            Denote this mapping as $f$.
            This naturally induces a distribution $Q'$ over subsets of $[2n]$.
            Let $\mathcal{A}' = \set{f(a) \given a \in \mathcal{A}}$ and $\mathcal{B}' = \set{f(b) \given b \in \mathcal{B}}$.
            
            Note that we can add sets with probability $0$ under $Q'$ to $\mathcal{A}'$ (resp.\ $\mathcal{B}'$) to ensure that they are monotonically increasing (resp.\ decreasing) without changing $\Pr(\mathcal{A}')$ (resp.\ $\Pr(\mathcal{B}')$).
            In particular, suppose there is a set $X \in \mathcal{A}'$ and $Y \not\in \mathcal{A}'$ such that $X \subset Y$.
            Since $\mathcal{A}$ is a monotonically increasing subset of $\set{0, 1, 2}^{n}$, $Y \not\in \mathcal{A}'$ implies that $Y_i = \set{2 i - 1}$ for some $i \in [n]$, but this set has probaility $0$ under the distribution $Q'$, so we can safely add it to $\mathcal{A}'$.
            In particular, applying Kleitman's Lemma \cite{alon2016probabilistic}, we obtain
            \begin{equation*}
                \Pr \left( \mathcal{A}(Q) \mathcal{B}(Q) \right) = \Pr \left(\mathcal{A}'(Q') \mathcal{B}'(Q') \right) \leq \Pr \left( \mathcal{A}'(Q') \right) \Pr \left( \mathcal{B}'(Q') \right) = \Pr \left( \mathcal{A}(Q) \right) \Pr \left( \mathcal{B}(Q) \right).
            \end{equation*}
        \end{proof}

        Using \Cref{lemma:kleitman}, we bound the collision probability of distinct buckets.
        
        \begin{lemma}
        \label{lemma:collision-correlation-inequality}
        For $i \neq j$, we have $\Pr( \tilde{Y}_i \; \tilde{Y}_j = 1 , Z_{i,j} = 0) \leq \Pr(\tilde{Y}_i = 1) \Pr(\tilde{Y}_j = 1)$.
        \end{lemma}
        
        \begin{proof}[Proof of Lemma \ref{lemma:collision-correlation-inequality}]
        Without loss of generality, we assume $i < j$.
        Recall $T_{< i} = (T_1, \dotsc, T_{i - 1})$.
        
        The event $\tilde{Y}_i \; \tilde{Y}_j = 1 , Z_{i,j} = 0$ happens when there exists two distinct elements $k \neq  \ell \in [n]$ such that:
        
        \begin{enumerate}
            \item the $i$-th sample is $S_i = k$, the $j$-th sample is $S_j = \ell$ 
            \item some samples from $S_{< i}$ fall in $k$, some samples from $S_{< j}$ fall in $\ell$. We denote the two events as $E_{i,k}$ and $E_{j, \ell}$ respectively.
        \end{enumerate}
        
        Hence, we can write
        \begin{align} 
        \label{eq:prob-decomposition}
        \Pr[ \tilde{Y}_i \; \tilde{Y}_j = 1 , Z_{i,j} = 0 ]    
        = \sum_{k \neq \ell, b_k = b_{\ell} = 0} \Pr[  E_{i,k}, E_{j, \ell}   ] \; p_k \; p_{\ell}.
        \end{align}
        
        Fix a pair of $k  < \ell$.
        Consider the string $\beta^{(k, \ell)}$ such that $\beta^{(k, \ell)}_i = 0$ if the $i$-th sample falls in the $k$-th bucket, $\beta^{(k, \ell)}_i = 1$ if the $i$-th sample fall in neither $k$-th nor $\ell$-th bucket, 
        $\beta^{(k, \ell)}_i = 2$ if the $i$-th sample falls in the $\ell$-th bucket.
        Then, both events $E_{i, k}, E_{j, \ell}$ are completely determined by the string $\beta^{(k, \ell)}$.
        Moreover, the set of strings for which $E_{i, k}$ holds is monotonically decreasing and the set of strings for which $E_{j, \ell}$ holds is monotonically increasing.
        As a result, from \Cref{lemma:kleitman} we have
        
        \begin{equation*}
            \Pr(E_{i, k} E_{j, \ell}) \leq \Pr(E_{i, k}) \Pr(E_{j, \ell}).
        \end{equation*}
        
        Substituting this into Equation~\eqref{eq:prob-decomposition} then gives us
        \begin{align*}
            \Pr(\tilde{Y}_i \tilde{Y}_j = 1, Z_{i,j = 0})
            &\leq \sum_{k \neq \ell} \Pr(E_{j,\ell}) \Pr(E_{i,k}) p_{k} \; p_{\ell} \\
            &\leq \left(  \sum_{k} \Pr(E_{i,k}) p_{k} \right) \; \left(  \sum_{\ell} \Pr(E_{j,\ell}) p_{\ell} \right) \\
            &\leq \Pr(\tilde{Y}_i = 1) \Pr(\tilde{Y}_j = 1).
        \end{align*}
        \end{proof}
        
        To bound the total contribution to the variance of the second term, we sum and obtain
        \begin{equation*}
            \sum_{i \neq j} \Pr(\tilde{Y}_i \tilde{Y}_j = 1, Z_{i,j = 0}) \leq \sum_{i \neq j} \E{\tilde{Y}_i} \E{\tilde{Y}_j}.
        \end{equation*}
        
        Note that $\sum_{i \neq j} \E{\tilde{Y}_i}\E{\tilde{Y}_j}$ is at most $\E{\sum \tilde{Y}_i}^2$, so that we bound the variance as
        \begin{align*}
            \Var \left( \sum \tilde{Y}_i \right) &= \E{ \left( \sum \tilde{Y}_i \right)^{2}} - \E{\sum \tilde{Y}_i}^{2} \\
            &\leq \left( 1 + \zhubconstant \log \frac{10n}{\rho} \right) \sum \E{Y_i} + \sum_{i \neq j} \E{\tilde{Y}_i}\E{\tilde{Y}_j} - \E{\sum \tilde{Y}_i}^2 \\ 
            &\leq \left( 1 + \zhubconstant \log \frac{10n}{\rho} \right) \sum \E{Y_i}.
        \end{align*}
    \end{proof}
    
    We thus obtain the following bound on the variance of $Z$, proving \Cref{lemma:sublinear-concentration}.
    In particular, 
    \begin{align*}
        \Var(Z) &\leq 2\left(\Var(Z_{H}) + \Var(Z_L)\right) \\
        &\leq \frac{\rho^{\zhubconstant}}{500} + 4 \left( \Var(H) + \Var\left(\sum_{i = 1}^{m} \tilde{Y}_i\right)\right) \\
        &\leq \frac{\rho^{\zhubconstant}}{500} + 20 \sum_{i = 1}^{m} \E{Y_i} + 12 \log \frac{10n}{\rho} \sum_{i = 1}^{m} \E{Y_i} \\ 
        &\leq \frac{\rho^{\zhubconstant}}{500} + 32 \log \frac{10n}{\rho} \sum_{i = 1}^{m} \E{Y_i}.
    \end{align*}
\end{proof}

We are now ready to prove the required lemma.

\SublinearConcentration*

\begin{proof}
    Our proof considers two sub-cases.
    In particular, we argue that when the variance of the test statistic $S$ is large, the algorithm replicably outputs $\reject$.
    On the other hand, when variance is small, the test statistic $S$ is tightly concentrated.
    
    \subsubsection*{High Variance --- \texorpdfstring{$\Var(S) \geq \variancethreshold$}{}}
    
    We argue that when the variance of the test statistic is high, the test statistic must also be large with high probability so that the algorithm replicably outputs $\reject$.
    
    First, let us consider the expectation of the test statistic if the input distribution is uniform.
    
    \begin{lemma}
        \label{lemma:uniform-test-statistic-exp}
        Suppose $m \leq n$.
        Let $\mu(U_n)$ be the expectation of $S$ given a sample from $U_n$.
        Then
        \begin{equation*}
            n \cdot \mu(U_n) \leq n - m + \frac{m^2}{n}.
        \end{equation*}
    \end{lemma}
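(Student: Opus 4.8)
The plan is to reuse the bin-occupancy reformulation already established in the sublinear analysis: $S = Z/n$ where $Z = \abs{\set{i : X_i = 0}}$ counts the empty bins, and $Z = n - m + \sum_{i=1}^{m} Y_i$, with $Y_i$ the indicator that the $i$-th sample collides with one of the samples $T_{<i}$. Taking expectations under $U_n$ gives
\begin{equation*}
    n \cdot \mu(U_n) = \E{Z} = n - m + \sum_{i=1}^{m} \E{Y_i},
\end{equation*}
so it suffices to prove $\sum_{i=1}^{m} \E{Y_i} \leq m^2/n$ when the input is uniform.

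To bound each term, I would condition on the first $i-1$ samples: they occupy a (random) set $B$ of at most $i-1$ distinct bins, and $Y_i = 1$ precisely when the $i$-th sample — which is independent of $T_{<i}$ — lands in $B$. Since every bin has mass exactly $1/n$ under $U_n$, a union bound gives $\Pr(Y_i = 1 \mid T_{<i}) \leq \abs{B}/n \leq (i-1)/n$, hence $\E{Y_i} \leq (i-1)/n$. Summing over $i \in [m]$ yields $\sum_{i=1}^{m} \E{Y_i} \leq \binom{m}{2}/n \leq m^2/n$, which combined with the displayed identity finishes the proof.

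There is essentially no serious obstacle here; the only point needing a little care is the conditioning step, i.e.\ ensuring the $i$-th sample is independent of $T_{<i}$ so that $\Pr(T_i \in B \mid T_{<i}) = \distribution_B$ exactly. (An equivalent self-contained route is to note $\E{Z} = n(1 - 1/n)^m$ directly and apply the Bonferroni / inclusion–exclusion bound $(1-1/n)^m \leq 1 - m/n + \binom{m}{2}/n^2$, which is valid because the terms $\binom{m}{k}/n^k$ are non-increasing in $k$ when $m \leq n$; multiplying through by $n$ gives the same inequality.)
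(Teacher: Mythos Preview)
Your proposal is correct and follows essentially the same route as the paper: both use the identity $n\mu(U_n)=n-m+\sum_{i=1}^m\E{Y_i}$ and then bound each $\E{Y_i}$ by the probability of hitting an already-seen bin under $U_n$. Your bound $\E{Y_i}\le (i-1)/n$ is in fact slightly sharper than the paper's $\E{Y_i}\le m/n$, but after summing both arrive at $\sum_i\E{Y_i}\le m^2/n$, so the proofs are effectively identical (your Bonferroni aside is a valid alternative but not used in the paper).
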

    
    \begin{proof}[Proof of Lemma \ref{lemma:uniform-test-statistic-exp}]
        For each $i$, note that there are at most $m$ distinct elements sampled before the $i$-th sample.
        In particular, $\E{Y_i} = \Pr(Y_i = 1) \leq \frac{m}{n}$ for all $i$.
        Then, since in the sub-linear regime $m \leq n$ we have the identities
        \begin{align*}
            S &= \frac{Z}{n} \\
            Z &= n - m + \sum_{i = 1}^{m} Y_{i}.
        \end{align*}
        We can write the expectation of $S$ as
        \begin{equation*}
            \mu(U_n) = \E{S} = \frac{\E{Z}}{n} = \frac{n - m + \sum_{i = 1}^{m} \E{Y_i}}{n}
        \end{equation*}
        so that
        \begin{equation*}
            n \cdot \mu(U_n) = n - m + \sum_{i = 1}^{m} \E{Y_i} \leq n - m + \frac{m^2}{n}.
        \end{equation*}
    \end{proof}
    
    Now, we show that when the variance of $S$ is large, with probability at least $\frac{9}{10}$, $S \geq \mu(U_n) + C \varepsilon^2 \frac{m^2}{n^2}$ where $C$ is the constant given by \Cref{lemma:uniformity-expectation-gap}.
    Since $S = \frac{Z}{n}$, this holds if and only if
    \begin{equation}
        Z \geq n \cdot \mu(U_n) + C \varepsilon^2 \frac{m^2}{n}.
    \end{equation}
    The expectation of $Z$ is
    \begin{equation*}
        \E{Z} = n - m + \sum_{i = 1}^{m} \E{Y_i}.
    \end{equation*}
    
    From Lemma \ref{lemma:sublinear-concentration}, we have $\Var(Z) = \bigO{\rho^{\zhubconstant} + \log (n/\rho) \sum_{i = 1}^{m} \E{Y_i}}$.
    By Chebyshev's inequality, we have that
    \begin{equation*}
        \Pr\left(Z < \E{Z} - \sqrt{10 \Var(Z)}\right) < \frac{1}{10}.
    \end{equation*}
    
    Therefore, it suffices to show $\E{Z} - \sqrt{10 \Var(Z)} \geq n \cdot \mu(U_n) + C \varepsilon^2  \frac{m^2}{n^2}$.
    If this holds, then $S$ is beyond the random threshold and the algorithm outputs $\reject$.
    We rearrange the desired inequality as follows:
    \begin{align*}
        \E{Z} - \sqrt{10 \Var(Z)} &\geq n \cdot \mu(U_n) + C \varepsilon^2  \frac{m^2}{n^2} \\
        \E{Z} - n \cdot \mu(U_n) - \sqrt{10 \Var(Z)} &\geq C \varepsilon^2  \frac{m^2}{n^2}.
    \end{align*}
    Plugging in $\E{Z} - n \cdot \mu(U_n) \geq \sum_{i = 1}^{m} \E{Y_i} - \frac{m^2}{n}$, it suffices to show
    \begin{align*}
        \sum_{i = 1}^{m} \E{Y_i} - \frac{m^2}{n} - \sqrt{10 \Var(Z)} &\geq C \varepsilon^2 \frac{m^2}{n}  \, ,
    \end{align*}        
    which is true as long as
    \begin{align*}
        \sum_{i = 1}^{m} \E{Y_i} - \sqrt{10 \Var(Z)} &\geq 2 C \frac{m^2}{n}.
    \end{align*}
    Thus, applying \Cref{lemma:sublinear-concentration}, it remains to show
    \begin{align*}
        \sum_{i = 1}^{m} \E{Y_i} - \sqrt{\frac{\rho^{\zhubconstant}}{50} + 320 \log \frac{10n}{\rho} \sum_{i = 1}^{m} \E{Y_i}} \geq \sum_{i = 1}^{m} \E{Y_i} - 1 - \sqrt{320 \log \frac{10n}{\rho} \sum_{i = 1}^{m} \E{Y_i}} \geq 2 C \frac{m^2}{n}.
    \end{align*}
    since $\sqrt{a + b} \leq \sqrt{a} + \sqrt{b}$ and $\sqrt{\rho^{\zhubconstant}/50} \ll 1$.
    Finally, we observe that by our choice of $m$, we have $2 C m^2 n^{-1} \gg 1$ so that it suffices to show
    \begin{equation*}
        \sum_{i = 1}^{m} \E{Y_i} - \sqrt{320 \log \frac{10n}{\rho} \sum_{i = 1}^{m} \E{Y_i}} \geq 3 C \frac{m^2}{n}.
    \end{equation*}
    
    We simplify the left hand side with the following lemma.
    \begin{lemma}
        \label{lemma:variance-minus-bound}
        For any $x \geq 1280 \log \frac{10 n}{\rho}$, we have
        \begin{align*}
            x - \sqrt{320 \log \frac{10n}{\rho} x} \geq \frac{x}{2}.
        \end{align*}
    \end{lemma}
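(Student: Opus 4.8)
The plan is to reduce the claimed inequality to the hypothesis by an elementary chain of equivalences. Write $L := \log \frac{10n}{\rho}$ for brevity, and note that $L > 0$ since $\frac{10n}{\rho} > 1$ (as $n \geq 1$ and $\rho < 1/2$), so in particular the hypothesis $x \geq 1280 L$ gives $x \geq 0$. The first step is to rearrange the target inequality $x - \sqrt{320 L x} \geq \frac{x}{2}$ into the equivalent form $\frac{x}{2} \geq \sqrt{320 L x}$.

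Next I would square both sides, which is valid since both sides are nonnegative (the left by $x \geq 0$, the right trivially), obtaining the equivalent statement $\frac{x^2}{4} \geq 320 L x$. If $x = 0$ this holds trivially (and the original claim is vacuous at $x=0$ up to the hypothesis); otherwise, dividing through by the positive quantity $x$ yields $\frac{x}{4} \geq 320 L$, i.e.\ $x \geq 1280 L = 1280 \log \frac{10n}{\rho}$, which is exactly the hypothesis. Tracing the equivalences backwards then gives $x - \sqrt{320 L x} \geq \frac{x}{2}$, as desired.

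There is no real obstacle here; the only point requiring a moment's care is the justification that both sides are nonnegative before squaring, which is immediate from $x \geq 1280 L \geq 0$. Once this lemma is in hand, one applies it with $x = \sum_{i=1}^m \E{Y_i}$, using that the high-variance sub-case forces $\sum_{i=1}^m \E{Y_i} = \Omega(\log \tfrac{n}{\rho})$ (indeed one needs $\sum_i \E{Y_i} \gg \tfrac{m^2}{n} \gg 1$ by the choice of $m$, which dominates $1280\log\frac{10n}{\rho}$ up to the polylogarithmic slack absorbed in $\gg$), to reduce the remaining target $\sum_i \E{Y_i} - \sqrt{320 L \sum_i \E{Y_i}} \geq 3C \frac{m^2}{n}$ to $\frac12 \sum_i \E{Y_i} \geq 3C\frac{m^2}{n}$, which again follows from the choice of $m$.
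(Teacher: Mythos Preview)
Your proof is correct and follows essentially the same approach as the paper: rearrange to $\frac{x}{2}\geq\sqrt{320Lx}$, square (or equivalently divide by $\sqrt{x}$), and reduce to the hypothesis $x\geq 1280L$. Your care in justifying nonnegativity before squaring is a nice touch the paper leaves implicit.
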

    
    \begin{proof}
        Suppose $x \geq 320 \log \frac{10 n}{\rho}$.
        Then the inequality holds if and only if
        \begin{align*}
            \frac{x}{2} &\geq \sqrt{320 \log \frac{10n}{\rho} x} \\
            \sqrt{x} &\geq 2 \sqrt{320 \log \frac{10n}{\rho}} \\
            x &\geq 1280 \log \frac{10 n}{\rho}.
        \end{align*}
    \end{proof}

    We now lower bound $\sum_{i = 1}^{m} \E{Y_i}$. 
    First, from \Cref{lemma:sublinear-concentration} we have
    \begin{equation*}
        \sum_{i = 1}^{m} \E{Y_i} \geq \frac{\Var(Z) - \frac{\rho^{\zhubconstant}}{500}}{32 \log (10n/\rho)} = \frac{\Var(S) n^2 - \frac{\rho^{\zhubconstant}}{500}}{32 \log (10n/\rho)}.
    \end{equation*}
    By assumption on $\Var(S) \geq (C/64) \rho^2 \eps^{4} m^{4} n^{-4}$, we obtain
    \begin{equation*}
        \sum_{i = 1}^{m} \E{Y_i} \geq \frac{(C/64) \rho^2 \eps^{4} m^{4} n^{-2} - \rho^{\zhubconstant}/500}{32 \log (10n/\rho)}.
    \end{equation*}
    Finally, since $m \gg \sqrt{n} \rho^{-1} \eps^{-2} \sqrt{\log(n/\rho)}$ for some sufficiently large constant, we have
    \begin{equation*}
        (C/64) \rho^2 \eps^{4} m^{4} n^{-2} - \rho^{\zhubconstant}/500 \gg \frac{\log^{2}(n/\rho)}{\rho^{2} \eps^{4}}.
    \end{equation*}
    Then,
    \begin{equation*}
        \sum_{i = 1}^{m} \E{Y_i} \gg \frac{\log^2(n/\rho)}{\rho^2 \eps^4 \log(n/\rho)} \geq 1280 \log \frac{n}{\rho}.
    \end{equation*}

    Thus, we conclude by
    \begin{align*}
        \sum_{i = 1}^{m} \E{Y_i} - \sqrt{320 \log \frac{10n}{\rho} \sum_{i = 1}^{m} \E{Y_i}} &\geq \frac{1}{2} \sum_{i = 1}^{m} \E{Y_i} \\
        &\geq 2 C \frac{m^2}{n}. 
    \end{align*}
    where the final equality follows from
    \begin{equation}
        \label{eq:high-variance-collision-lb}
        \sum_{i = 1}^{m} \E{Y_{i}} \geq \frac{(C/64) \rho^2 \eps^{4} m^{4} n^{-2} - \rho^{\zhubconstant}/500}{56 \log (10n/\rho)} \gg \frac{\log^2(n/\rho)}{\rho^2 \eps^4 \log(n/\rho)} = \frac{\log(n/\rho)}{\rho^{2} \eps^{4}}
    \end{equation}
    and
    \begin{equation*}
        \frac{m^2}{n} = \bigO{\frac{\log(n/\rho)}{\rho^2 \eps^{4}}}.
    \end{equation*}
    and if we choose an arbitrary large constant factor of $m$, left hand side $\sum \E{Y_{i}}$ takes this constant to the $4$th power while the right hand side takes this constant to the $2$nd power.

    Thus, whenever the variance of $S$ is large, the test statistic $S$ lies above any random threshold with probability at least $\frac{9}{10}$.
    Again applying a Chernoff bound, $S_{\median}$ does not lie above any random threshold with probability at most $\frac{\rho}{4}$.

    For the high variance case, it remains to show the inequality
    \begin{equation*}
        \E{S} - \sqrt{\Var(S)} \geq \mu(U_{n}) + C \varepsilon^{2} \frac{m^{2}}{n^{2}}.
    \end{equation*}
    From the identities $S = Z/n$ and $Z = n - m + \sum Y_{i}$, \Cref{lemma:sublinear-concentration} and \Cref{lemma:uniform-test-statistic-exp}, we can rewrite the above equation as
    \begin{equation*}
        \frac{\sum \E{Y_{i}} - m^{2} / n}{n} - \sqrt{\frac{\rho^{\zhubconstant}}{500 n^2} + \frac{32}{n^{2}} \log \frac{10n}{\rho} \sum \E{Y_{i}}} \geq C \varepsilon^{2} \frac{m^2}{n^{2}}
    \end{equation*}
    and we can lower bound the left hand side as
    \begin{align*}
        \frac{\sum \E{Y_{i}} - \frac{m^2}{n}}{n} - \frac{1}{n} - \sqrt{\frac{32}{n^{2}} \log \frac{10n}{\rho} \sum \E{Y_{i}}} &\geq \frac{\sum \E{Y_{i}} - \sqrt{32 \log \frac{10n}{\rho} \sum \E{Y_{i}}}}{n} - \frac{m^2}{n^{2}} - \frac{1}{n} \\
        &\geq \frac{\sum \E{Y_{i}}}{2n} - \frac{m^2}{n^{2}} - \frac{1}{n}
    \end{align*}
    where we obtain the first term by using $\sqrt{a + b} \leq \sqrt{a} + \sqrt{b}$ and $\rho^{\zhubconstant}/({500 n^{2}}) \leq n^{-2}$, the second term by grouping similar terms, and the final term by our assumption on $\Var(S)$ and therefore $\E{\sum Y_{i}}$.
    Therefore, it suffices to show
    \begin{equation*}
        \sum \E{Y_{i}} \geq 1 + \frac{m^2}{n} + C \varepsilon^{2} \frac{m^{2}}{n}.
    \end{equation*}
    or since $m^{2}n^{-1} \gg 1$, 
    \begin{equation*}
        \sum \E{Y_{i}} \geq 4 \max(C, 1) \frac{m^2}{n}
    \end{equation*}
    which follows from the lower bound on $\sum \E{Y_{i}}$ given by \Cref{eq:high-variance-collision-lb}.
    
    \subsubsection*{Low Variance --- \texorpdfstring{$\Var(S) \leq \variancethreshold$}{}}
    
    We now show that whenever the number of expected collisions is low, the test statistic is well concentrated.
    By Chebyshev's inequality, we have for any iteration $j$ that
    \begin{equation*}
        \Pr \left( |S_{j} - \mu(\distribution)| > \frac{C}{16} \rho \eps^{2} \frac{m^{2}}{n^{2}} \right) = \Pr \left( |S_{j} - \mu(\distribution)| > 4 \cdot \frac{C}{64} \rho \eps^{2} \frac{m^{2}}{n^{2}} \right) < \frac{1}{10}.
    \end{equation*}
    Using a Chernoff bound as previously, we obtain the same concentration result for $S_{\median}$ with high probability.

    Finally, we show that for the uniform distribution $U_{n}$, we have
    \begin{equation*}
        \Var(S) \leq \variancethreshold.
    \end{equation*}
    We observe that for the uniform distribution, $\E{Y_i} \leq \frac{m}{n}$ since the $i$-th sample can collide with at most $i \leq m$ elements before it and therefore $\sum_{i=1}^m \E{Y_i} \leq \frac{m^2}{n}$.
    Then, from \Cref{lemma:sublinear-concentration},
    \begin{align*}
        \Var(S) &= \frac{\Var(Z)}{n^2} \leq \frac{24 \log (10n/\rho)}{n^2} \sum_{i = 1}^{m} \E{Y_i} \leq \frac{24 m^2 \log(10n/\rho)}{n^{3}}.
    \end{align*}
    We thus require
    \begin{align*}
        \frac{24 m^2 \log(10n/\rho)}{n^{3}} &\leq \variancethreshold \\
        \frac{1536 n \log(10n/\rho)}{C \rho^2 \eps^{4}} &\leq m^2 \\
        \frac{\sqrt{n}}{\rho \eps^{2}} \sqrt{\log \frac{n}{\rho}} &\ll m
    \end{align*}
    which is satisfied by our sample complexity $m$. 
\end{proof}

\section{Omitted Proofs for Replicable Uniformity Testing Lower Bound}
\label{app:lb-pf}

\subsection{Proof of \texorpdfstring{\Cref{lem:mutual-info}}{}}
\label{app:mutual-info-pf}

We begin by bounding the mutual information of the sub-linear regime.

\begin{lemma}
    \label{lemma:sublin-info-bound}
    Let $\frac{m}{n} \leq 1$.
    Let $\delta = |\epsa - \epsb|$ and $\eps = \max(\epsa, \epsb)$.
    Then,
    \begin{equation}
        \label{eq:sublin-info-bound}
        \sum_{a, b \in \N}
        \frac{(\Pr(M_1 = a, M_2 = b\mid X = 0) - \Pr(M_1 = a, M_2 = b \mid X = 1))^2}{\Pr(M_1 = a, M_2 = b)} \leq O \left( \frac{\eps^2 \delta^2 m^2}{n^2} \right).
    \end{equation}
\end{lemma}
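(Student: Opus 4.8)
The plan is to use that the two runs producing $M_1$ and $M_2$ are conditionally independent given $X$, reduce the two-run $\chi^2$ quantity on the left of \eqref{eq:sublin-info-bound} to a single-run $\chi^2$ divergence, and then bound the latter directly; the decisive feature is that the $\pm\xi$ symmetry of the construction cancels the $\xi$-linear part of the relevant single-run law, which is exactly what produces the $n^{-2}$ (rather than $n^{-1}$) in the target bound.

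Write $r_0$ (resp.\ $r_1$) for the law of the single-run quantity under parameter $\epsa$ (resp.\ $\epsb$). Whatever $M_j$ is precisely, for a symmetric tester it is a function of a pair's two multiplicities taken as a multiset (equivalently, the pair's contribution to the fingerprint), so after integrating out the orientation of the pair the two runs yield quantities that are i.i.d.\ given $X$, and by the data-processing inequality for $\chi^2$ it suffices to treat $M_j$ as the multiset itself. We then have $\Pr(M_1=a,M_2=b\mid X=x)=r_x(a)\,r_x(b)$ and $\Pr(M_1=a,M_2=b)=\frac12 r_0(a)r_0(b)+\frac12 r_1(a)r_1(b)$, with both laws supported on the same set and positive there (Poissonizing the sample size makes this clean). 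Using the identity
\[
r_0(a)r_0(b)-r_1(a)r_1(b)=r_0(a)\bigl(r_0(b)-r_1(b)\bigr)+r_1(b)\bigl(r_0(a)-r_1(a)\bigr),
\]
the bound $(u+v)^2\le 2u^2+2v^2$, and the pointwise estimates $\Pr(M_1=a,M_2=b)\ge\frac12 r_0(a)r_0(b)$ and $\ge\frac12 r_1(a)r_1(b)$, the two pieces of the squared numerator, divided by $\Pr(M_1=a,M_2=b)$, are bounded by $4\,r_0(a)(r_0(b)-r_1(b))^2/r_0(b)$ and $4\,r_1(b)(r_0(a)-r_1(a))^2/r_1(a)$ respectively; summing out the free coordinate ($\sum_a r_0(a)=\sum_b r_1(b)=1$) shows the left-hand side of \eqref{eq:sublin-info-bound} is at most $4\,\chi^2(r_0\,\|\,r_1)+4\,\chi^2(r_1\,\|\,r_0)$. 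It therefore suffices to prove $\chi^2(r_0\,\|\,r_1),\ \chi^2(r_1\,\|\,r_0)=\bigO{\eps^2\delta^2 m^2/n^2}$.

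For this single-run bound, Poissonize and describe a pair of elements with masses $(1\pm\xi)/n$ by independent counts $A\sim\PoiD{\mu_+}$, $B\sim\PoiD{\mu_-}$ with $\mu_\pm=m(1\pm\xi)/n$; the statistic is a function of the multiset $\{A,B\}$. The key observation is that $\Pr_\xi(\{A,B\}=\{a,b\})$ is a symmetric function of $(\mu_+,\mu_-)$, hence a function of $\mu_++\mu_-=2m/n$ and $\mu_+\mu_-=m^2(1-\xi^2)/n^2$, hence an even function of $\xi$; consequently $\Pr_{\epsa}(\{a,b\})-\Pr_{\epsb}(\{a,b\})$ is proportional to $\epsa^2-\epsb^2=\delta(\epsa+\epsb)=\bigO{\eps\delta}$ --- the $\xi$-linear contribution cancels, which is precisely why the construction pairs a $(1+\xi)$-element with a $(1-\xi)$-element and is the reason the bound carries an extra factor $n^{-1}$. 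In the regime $m\le n$ the coordinates $\{a,b\}$ with $a+b\le 1$ are \emph{exactly} $\xi$-independent (their combined probability depends only on $\mu_++\mu_-$), while for $a+b=k\ge 2$ one has mass $\Theta((m/n)^k)$ and $\xi$-gap $\bigO{(m/n)^k\eps\delta}$, so the contribution of such a coordinate to $\chi^2$ is $\bigO{(m/n)^k\eps^2\delta^2}$; summing the resulting geometric series (valid since $m/n\le 1$) gives $\chi^2(r_0\,\|\,r_1)=\bigO{m^2\eps^2\delta^2/n^2}$, and the de-Poissonization correction is lower order.

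The main obstacle is this last step, and within it the honest bookkeeping of: (i) the cancellation --- establishing that, up to negligible error, the per-run law is an even function of $\xi$ so the relevant gap is genuinely $\bigO{\eps\delta}$ and not $\bigO{\delta}$ (this is the entire reason the target has $n^{-2}$ rather than $n^{-1}$), together with controlling the dependence among per-element collision indicators that appears when one works with the full sample rather than the Poissonized one, which is where correlation inequalities such as \Cref{lemma:kleitman} enter; (ii) lower bounding $r_x(\cdot)$ on the support of the numerator so that the $\chi^2$ sums converge, via Poisson tail estimates; and (iii) passing between the Poissonized and fixed-$m$ models. Steps (ii) and (iii) are routine; step (i) is the conceptual heart.
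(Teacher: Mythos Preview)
Your reduction rests on a misidentification of $M_1,M_2$. They are \emph{not} two i.i.d.\ ``runs'' of a per-pair statistic; they are the Poisson counts of the \emph{two elements} of a single swapped pair. Concretely, conditional on $X=x$ the joint law of $(M_1,M_2)$ is the even mixture
\[
\tfrac12\,\PoiD{\mu_+}\otimes\PoiD{\mu_-}\;+\;\tfrac12\,\PoiD{\mu_-}\otimes\PoiD{\mu_+},\qquad \mu_\pm=\frac{m(1\pm\eps_x)}{n},
\]
which is \emph{not} a product $r_x\otimes r_x$. (Indeed the marginals of $M_1$ and $M_2$ are both $\tfrac12\PoiD{\mu_+}+\tfrac12\PoiD{\mu_-}$, but $M_1$ and $M_2$ are negatively coupled: exactly one of them is the ``heavy'' count.) Hence the identity $\Pr(M_1=a,M_2=b\mid X=x)=r_x(a)r_x(b)$ is false, and the whole reduction to $\chi^2(r_0\|r_1)+\chi^2(r_1\|r_0)$ collapses. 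Relatedly, the appeal to \Cref{lemma:kleitman} and to ``dependence among per-element collision indicators'' is misplaced: those tools belong to the \emph{upper-bound} analysis of the TV statistic and play no role in this lower-bound lemma, and there is no de-Poissonization to do since the statement is already in the Poissonized model.

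What survives of your intuition is exactly the even-in-$\xi$ cancellation, but it must be applied to the \emph{joint} law. The paper writes
\[
\Pr(M_1=a,M_2=b\mid X=x)=\frac{1}{2\,a!\,b!}\,e^{-2m/n}\Big(\frac{m}{n}\Big)^{a+b} f_{a,b}(\eps_x),\qquad f_{a,b}(x)=(1+x)^a(1-x)^b+(1-x)^a(1+x)^b,
\]
observes that $f_{a,b}$ is even (so its derivative has no constant term, yielding the extra $\eps$ and hence the $n^{-2}$), and then bounds $(f_{a,b}(\epsb)-f_{a,b}(\epsa))^2/(f_{a,b}(\epsa)+f_{a,b}(\epsb))$ via the mean value theorem together with a case split on $a+b$: the contribution vanishes for $a+b\le 1$, is $O((a+b)^4\eps^2)$ for $2\le a+b\le \eps^{-1}/2$, and is handled by a crude exponential bound for larger $a+b$. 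Your final paragraph's case split on $a+b$ is the right shape; the fix is to carry it out on $f_{a,b}$ directly rather than trying to factor the joint law.
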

\begin{proof}
    First, let us expand the conditional probabilities for a fixed $a,b$. 
    Expanding the definition of the random variables $M_1, M_2, X$ and applying the probability mass function of Poisson distributions, we arrive at the expression 
    \begin{align*}
        &\Pr(M_1=a,M_2=b \mid X = 0) = \frac{1}{2 a!b!} \;
        \exp\left(-\frac{2m}{n}\right) \;
        \lp( \frac{m}{n} \rp)^{a+b}
        \lp( 
        \left(1 + \epsa \right)^a
        \left(1 - \epsa \right)^b
        +
        \left(1 - \epsa \right)^a
        \left(1 + \epsa \right)^b
        \rp) \, ,\\
        &\Pr(M_1=a,M_2=b \mid X = 1) = \frac{1}{2 a!b!} \;
        \exp\left(-\frac{2m}{n}\right) \;
        \lp( \frac{m}{n} \rp)^{a+b}
        \lp( 
        \left(1 + \epsb \right)^a
        \left(1 - \epsb \right)^b
        +
        \left(1 - \epsb \right)^a
        \left(1 + \epsb \right)^b
        \rp) \\
    \end{align*}
Define the function
$$
f_{a,b}( x )
:= (1 + x)^a(1-x)^b + (1-x)^a(1+x)^b. 
$$
Then we have
\begin{align}
&\sum_{ a,b \in \N }
\frac{(\Pr(M_1 = a, M_2 = b\mid X = 0) - \Pr(M_1 = a, M_2 = b \mid X = 1))^2}{\Pr(M_1 = a, M_2 = b)} \nonumber \\
&= 
O(1) \; 
\sum_{ a,b \in \N }
\frac{1}{a! b!} \exp\lp( - \frac{2m}{n} \rp) \lp(\frac{m}{n}\rp)^{a+b}
\frac{\lp( f_{a,b}(\epsb) - f_{a,b}(\epsa) \rp)^2}
{ f_{a,b}(\epsb) + f_{a,b}(\epsa) } \nonumber \\
&\leq
O(1) \; 
\sum_{ a,b \in \N }
\frac{1}{a! b!} \exp\lp( - \frac{2m}{n} \rp) \lp(\frac{m}{n}\rp)^{a+b}
\frac{\max_{ \epsa \leq x \leq \epsb }
\lp( \frac{\partial}{\partial x} f_{a,b}(x) \rp)^2}
{ f_{a,b}(\epsa) + f_{a,b}(\epsb) }
\lp( \epsb - \epsa \rp)^2 \, ,
\label{eq:mean-value}
\end{align}
where in the last line we use the mean value theorem.
The main technical step will be to bound from above the quantity
$\frac{\max_{ \epsa \leq x \leq \epsb }
\lp( \frac{\partial}{\partial x} f_{a,b}(x) \rp)^2}
{ f_{a,b}(\epsa) + f_{a,b}(\epsb) }$.
Specifically, we show the following technical claim.
\begin{claim}
\label{clm:poly-derivative}
There exists an absolute constant $C$ such that
\begin{equation*}
\max_{\epsa < x < \epsb} \frac{\lp(\frac{\partial}{ \partial x} f_{a,b}(x)\rp)^2}{\lp( f_{a,b}(\epsa) + f_{a,b}(\epsb) \rp)}
= C \;
\begin{cases}
 &0 \,\text{ if } a+b \leq 1\\
 &(a^4+b^4) \eps^2 \,\text{ if } 1 < a+b \leq \eps^{-1} / 2 \\
 &(a+b)^2 \lp( \frac{1 + \eps}{ 1 - \eps} \rp)^{a+b} \,\text{ if } a+b > \eps^{-1}/2.
\end{cases}
\end{equation*}    
\end{claim}
\begin{proof}
If $a+b \leq 1$, $f_{a,b}(x) = 2$. Thus, $\frac{d}{dx} f_{a,b}(x) = 0$, which gives the first case.

We next analyze the case $2 \leq a+b \leq \eps^{-1}$.
Note that $f_{a,b}(x)$ is an even function with respect to $x$, i.e., $f_{a,b}(x) = f_{a,b}(-x)$. 
This allows us to write $f_{a,b}(x) = \frac{1}{2} \lp( f_{a,b}(x) + f_{a,b}(-x) \rp)$. 
As a result, we can conclude that $f_{a,b}(x)$ does not contain any monomial of $x$ with odd degree.
Thus, we can write
\begin{align*}
f_{a,b}(x) = c_{a,b}^{(0)} + \sum_{ d \in [a+b] \text{ is even} } c_{a,b}^{(d)} x^d
\end{align*}
for some coefficients $c_{a,b}^{(d)}$ that depend on $a,b$ only.
This implies that
\begin{align*}
\frac{\partial}{\partial x} f_{a,b}(x) =  \sum_{ d \in [a+b] \text{ is even} } d c_{a,b}^{(d)} x^{d-1}.
\end{align*}
When $2 \leq a+b \leq \eps^{-1}/2$, the coefficients $c_{a,b}^{(d)}$ is of order $\binom{a + b}{d}$.
Since $x < \eps$, we must have that $|d \; c_{a,b}^{(d)} \; x^{d-1}|$ decreases exponentially fast in $d$.
This shows that $|\frac{\partial}{\partial x} f_{a,b}(x)|$ is dominated by the contribution from the monomial $|c_{a,b}^{(2)} x|$, which implies that
\begin{align}
\label{eq:caseII-derivative}
\max_{\epsa < x < \epsb} \left|\frac{\partial}{\partial x} f_{a,b}(x)\right|
\leq O(1) \; (a+b)^2 \eps.
\end{align}
To finish the analysis for this case, it then suffices to bound from below $f_{a,b}(\epsa) + f_{a,b}(\epsb)$.
Without loss of generality, assume that $a \leq b$. Then we have
\begin{align}
\label{eq:caseII-denominator}
f_{a,b}(\epsb) \geq (1-\epsb^2)^a \geq \Omega( 1 - a \epsb^2 ) \geq \Omega(1)  \, ,
\end{align}
where the last inequality follows from $a \leq a+b < \eps^{-1}/2$ and $\epsb < \eps$.
Combining \Cref{eq:caseII-derivative} and \Cref{eq:caseII-denominator} then concludes the proof of the second case.

Lastly, we analyze the case $(a+b) > \eps^{-1}/2$.
In this case, for $\eps_0 \leq x \leq \eps_1$,
we note that
\begin{align*}
\frac{\partial}{\partial x} f_{a,b}(x)
&= (1+x)^a(1-x)^b \lp( 
\frac{a}{1+x} - \frac{b}{1-x} \rp) 
+ (1-x)^a(1+x)^b
\lp( 
\frac{b}{1+x} - \frac{a}{1-x} \rp).
\end{align*}
It then follows that
\begin{align}
\label{eq:caseIII-derivative}
\lp| \frac{\partial}{\partial x} f_{a,b}(x) \rp|
&\leq
O(a+b) \; \lp(  (1+x)^a(1-x)^b + (1-x)^a(1+x)^b \rp) \nonumber\\
&\leq
O(a+b) \; \lp(  (1+x)^a + (1+x)^b \rp) \, ,
\end{align}
where in the first inequality we use that $ \abs{u - v} \leq u+v $ for any $u, v > 0$, and the second inequality simply follows from $1- x < 1$.
We can therefore conclude that 
\begin{align*}
\max_{ \epsa \leq x \leq \epsb }
\frac{\lp( \frac{\partial}{\partial x} f_{a,b}(x) \rp)^2}{ f_{a,b}(\epsa) + f_{a,b}(\epsb) }
&\leq 
O(1) (a+b)^2 \; 
\frac{  \lp( (1+ \epsb )^{a} +  (1+\epsb)^{b} \rp)^2 }{ (1+\epsb)^a(1-\epsb)^b 
+ (1+\epsb)^b(1-\epsb)^b
}  \\
&\leq 
O(1) (a+b)^2 \; 
\lp( \frac{  (1+ \epsb )^{a} }{ (1-\epsb)^b
} 
+ \frac{   (1+ \epsb )^{b} }{ (1-\epsb)^a
}  \rp) \\
&\leq 
O(1) (a+b)^2 \; 
\lp( \frac{  1+ \epsb  }{ 1-\epsb
} \rp)^{a+b} \, ,
\end{align*}
where in the first inequality we bound from above the numerator with the square of the right hand side of \Cref{eq:caseIII-derivative} substituted with $x = \epsb$, 
and bound from below the denominator by $f_{a,b}(\epsb)$, in the second inequality we use the fact that $(u+v)^2 \leq 2 u^2 + 2 v^2$, 
and in the last inequality we use that $(1+ \epsb )^{a}$, $(1-\epsb)^{-a}$ are increasing in $a$ and
$(1+ \epsb )^{b}$, $(1-\epsb)^{-b}$ are increasing in $b$.
This concludes the proof of the third case and also \Cref{clm:poly-derivative}.
\end{proof}
It immediately follows from the claim that 
\begin{align}
\label{eq:case-I-informtation-bound}
&\sum_{ a,b: a +b < 2}
\frac{(\Pr(M_1 = a, M_2 = b\mid X = 0) - \Pr(M_1 = a, M_2 = b \mid X = 1))^2}{\Pr(M_1 = a, M_2 = b)}   = 0.  
\end{align}

Consider the terms with $2 \leq a+b \leq \eps^{-1}/2$.
From \Cref{eq:mean-value} and \Cref{clm:poly-derivative}, 
we have that 
\begin{align}
&\sum_{ a,b: 2 \leq a+b \leq \eps^{-1}/2}
\frac{(\Pr(M_1 = a, M_2 = b\mid X = 0) - \Pr(M_1 = a, M_2 = b \mid X = 1))^2}{\Pr(M_1 = a, M_2 = b)} 
\nonumber \\
&\leq 
O(1) \; 
\sum_{ a,b: 2 \leq a+b \leq \eps^{-1}/2}
\frac{1}{a! b!} \exp\lp( - \frac{2m}{n} \rp) \lp(\frac{m}{n}\rp)^{a+b}
(a^4+b^4) \delta^2 \eps^2
\nonumber \\
&= 
O(\delta^2 \eps^2)  \;
\lp( 
\sum_{ a \in \mathbb N } 
\sum_{  b: 2 \leq a+b \leq \eps^{-1}/2}
\frac{a^4}{a! b!}  \lp(\frac{m}{n}\rp)^{a+b}
+ 
\sum_{ b \in \mathbb N } 
\sum_{ a : 2 \leq a+b \leq \eps^{-1}/2}
\frac{b^4}{a! b!}  \lp(\frac{m}{n}\rp)^{a+b} 
\rp).
\label{eq:A-B-split}
\end{align}
Define 
$A:= \sum_{ a \in N } 
\sum_{ b :  b: 2 \leq a+b \leq \eps^{-1}/2}
\frac{a^4}{a! b!}  \lp(\frac{m}{n}\rp)^{a+b}
$ and 
$B:= \sum_{ b \in N } 
\sum_{ a : 2 \leq a+b \leq \eps^{-1}/2}
\frac{b^4}{a! b!}  \lp(\frac{m}{n}\rp)^{a+b} $. One can see the two terms are similar to each other. So we focus on the term $A$.
We have that
\begin{align*}
A &= \sum_{b=1}^{ \eps^{-1}/2-1 }    
\frac{1}{b!} \lp( \frac{m}{n} \rp)^{1+b}
+ 
\sum_{a \geq 2}
\frac{a^4}{a!} \; \lp( \frac{m}{n} \rp)^a
\sum_{ b :  b: 2 \leq a+b \leq \eps^{-1}/2} 
\frac{1}{b!} \; \lp( \frac{m}{n} \rp)^b \\
&\leq 
O(1) \; \lp( \frac{m}{n} \rp)^2
+ 
O(1) \;
\sum_{a \geq 2}
\frac{a^4}{a!} \; \lp( \frac{m}{n} \rp)^a. \\
&\leq O(1) \; \lp( \frac{m}{n} \rp)^2
+ 
O(1) \; 
\exp(m/n) \; 
\sum_{a \geq 2}
\exp(-m/n) \; 
\frac{ a (a-1) + a (a-1)(a-2)(a-3)  }{a!} \; \lp( \frac{m}{n} \rp)^a \\
&\leq
O(1) \; \lp( \frac{m}{n} \rp)^2
+ 
O(1) \; 
\exp(m/n) \; 
\Ep_{ y \sim \PoiD{m/n} }\lp[ y (y-1) + y (y-1)(y-2)(y-3) \rp] \, , \\
&\leq O(1) \; \lp( \frac{m}{n} \rp)^2.
\end{align*}
where in the first inequality we use the assumption $m < n$ to
bound from the above the summation over $b$ by a geometric series with common ratio at most $1/2$, 
in the second inequality we use the fact that $a^4$ is at most  $
10 \;  \lp( a(a-1) + a(a-1)(a-2)(a-3) \rp)$ for all $a \geq 2$, 
in the third inequality we observe that the summation over $a$ can be bounded from above by the moments of some Poisson random variable with mean $m/n$, and in the last inequality we use the fact that $\Ep_{ y \sim \PoiD{\lambda} }[ y(y-1) + y(y-1)(y-2)(y-3) ] = \lambda^2 + \lambda^4$.
One can show the same upper bound for $B$ via an almost identical argument.
Substituting the bounds for $A,B$ into \Cref{eq:A-B-split} then gives 
\begin{align}
\label{eq:case-II-informtation-bound}
&\sum_{ a,b: 2 \leq a+b \leq \eps^{-1}/2}
\frac{(\Pr(M_1 = a, M_2 = b\mid X = 0) - \Pr(M_1 = a, M_2 = b \mid X = 1))^2}{\Pr(M_1 = a, M_2 = b)}
\leq O( \delta^2 \eps^2) \lp( \frac{m}{n} \rp)^2 .
\end{align}

It then remains to analyze the terms where $a+b \geq \eps^{-1}/2$.
Again from \Cref{eq:mean-value} and \Cref{clm:poly-derivative}, we have that
\begin{align}
&\sum_{ a,b \in \N }
\frac{(\Pr(M_1 = a, M_2 = b\mid X = 0) - \Pr(M_1 = a, M_2 = b \mid X = 1))^2}{\Pr(M_1 = a, M_2 = b)}
\nonumber \\
&\leq 
O(\delta^2) \; 
\sum_{ a,b: a+b > \eps^{-1}/2}
\frac{1}{a! b!} 
\lp(\frac{m}{n}\rp)^{a+b}
(a+b)^2 \lp(  \frac{1+\eps}{1-\eps} \rp)^{a+b} 
\nonumber \\
&= 
O(\delta^2) \; 
\sum_{ s > \eps^{-1}/2}
\lp(\frac{m}{n} \; \frac{1+\eps}{1-\eps }
\rp)^{s} s^2
\sum_{ a,b: a+b = s }
\frac{ 1  }{a! b!}  \nonumber \\
&\leq
O(\delta^2) \; 
\sum_{ s > \eps^{-1}/2}
\lp(\frac{m}{n} \; \frac{1+\eps}{1-\eps }
\rp)^{s}
\frac{ s^3 }{ \floor{s/2}! } \nonumber \\
&\leq
O(\delta^2) \;
\frac{ s^3 }{ \floor{s/2}! } \; 
\lp( 
\frac{1}{2} \frac{m}{n} \; \frac{1+\eps}{1-\eps } \; 
\rp)^{s}\bigg|_{ s = \eps^{-1}/2 }
\leq O(\delta^2 \eps^2 (m/n)^2) 
\label{eq:case-III-informtation-bound}
\end{align}
where in the second inequality we use the observation that we must have either $a \geq s/2$ or $b \geq s/2$ if $a + b = s$,
in the third inequality we note that the summation over $s$ decreases exponentially and is therefore dominated by the term where $s = \eps^{-1}/2$,  and in the last inequality we use that 
$s^3 / \ceil{s/2}! \leq O(1)$, 
$(m/n)^{ \eps^{-1}/2 } \leq (m/n)^2$,
$ (1 + \eps)/(2 (1-\eps)) < 0.9 $, and that $0.9^{ \eps^{-1}/2 } \ll \poly{ \eps }$ for sufficiently small $\eps$.

Combining \Cref{eq:case-I-informtation-bound,eq:case-II-informtation-bound,eq:case-III-informtation-bound} then concludes the proof of \Cref{lemma:sublin-info-bound}.
\end{proof}

We now bound the mutual information for the super-linear regime.

\begin{lemma}
    \label{lemma:superlin-info-bound}
    Let $\delta = |\epsa - \epsb|$ and $\eps = \max(\epsa, \epsb)$.
    Let $n < m  < o \lp(  \frac{n}{ \log^2 n \; \eps^2} \rp)$.
    Then,
    \begin{equation*}
        \sum_{a,b \in \mathbb N} \frac{(\Pr(M_1 = a, M_2 = b\mid X = 0) - \Pr(M_1 = a, M_2 = b \mid X = 1))^2}{\Pr(M_1 = a, M_2 = b)}
        \leq O \left( \frac{\eps^2 \delta^2 m^2 \; \log^4 n }{n^2} \right) + o(1/n).
    \end{equation*}
\end{lemma}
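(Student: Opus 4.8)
The natural plan is to follow the proof of \Cref{lemma:sublin-info-bound} essentially verbatim and diverge only where the hypothesis $m<n$ was actually used. Expanding the conditional pmfs of $(M_1,M_2)$ exactly as in that proof (symmetrized products of Poisson masses with means $m(1\pm\epsa)/n$ and $m(1\pm\epsb)/n$), applying the mean value theorem, and invoking \Cref{clm:poly-derivative} — none of which used $m<n$ — it suffices to bound
\begin{equation*}
    \bigO{1}\,\delta^2\sum_{a,b\in\N}\frac{1}{a!\,b!}\exp\!\lp(-\frac{2m}{n}\rp)\lp(\frac mn\rp)^{a+b}\,\frac{\max_{\epsa\le x\le\epsb}\lp(\partial_x f_{a,b}(x)\rp)^2}{f_{a,b}(\epsa)+f_{a,b}(\epsb)}\,.
\end{equation*}
The only step in the sublinear argument that relied on $m<n$ is the summation of this series: when $m/n\le 1$ the weight $\tfrac1{a!b!}(m/n)^{a+b}e^{-2m/n}$ decays geometrically in $a+b$, which let one discard the tail past $a+b=\eps^{-1}/2$ and telescope the ``case~II'' part to $\bigO{(m/n)^2}$. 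For $m>n$ this fails, so the new ingredient is to control the series via Poisson moment and tail bounds.

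Concretely, I would first fix a truncation level $L:=C\,(m/n+\log n)$ for a large absolute constant $C$ and split the sum at $a+b\le L$ versus $a+b>L$. For the tail, bound each summand crudely by $2\lp(\Pr(M_1=a,M_2=b\mid X=0)+\Pr(M_1=a,M_2=b\mid X=1)\rp)$ via $\tfrac{(P_0-P_1)^2}{(P_0+P_1)/2}\le 2(P_0+P_1)$, and note that $M_1+M_2\sim\PoiD{2m/n}$ conditionally on either value of $X$; a Poisson (Chernoff) tail bound then gives $\sum_{a+b>L}(\cdots)\le 4\,\Pr_{S\sim\PoiD{2m/n}}\!\lp(S>L\rp)=\littleO{1/n}$ once $C$ is large enough, which is exactly the source of the additive $\littleO{1/n}$ in the statement. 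On the range $2\le a+b\le\min(\eps^{-1}/2,L)$ I would use the case~II estimate $\bigO{(a^4+b^4)\eps^2}$ together with $a^4+b^4\le(a+b)^4\le(\eps^{-1}/2)^2(a+b)^2$, reducing to $\bigO{\delta^2}\,\Ep_{S\sim\PoiD{2m/n}}\!\big[S^2\,\ind{S\le\eps^{-1}/2}\big]$; this is $\bigO{\delta^2((m/n)^2+m/n)}$ when $m/n\lesssim\eps^{-1}$ and is $\littleO{1/n}$ when $m/n\gg\eps^{-1}$, since then $\eps^{-1}/2$ sits deep in the lower tail of $\PoiD{2m/n}$. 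Tracking the regime inequality $m<\littleO{n/(\log^2 n\,\eps^2)}$ throughout, this contribution fits inside $\bigO{\eps^2\delta^2 m^2\log^4 n/n^2}$, the logarithmic factors being slack coming from $L$.

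The delicate part, and the place I expect the main difficulty, is the remaining range $\eps^{-1}/2<a+b\le L$, where \Cref{clm:poly-derivative} only supplies the much weaker bound $\bigO{(a+b)^2\,\lp(\tfrac{1+\eps}{1-\eps}\rp)^{a+b}}$, whose factor $\lp(\tfrac{1+\eps}{1-\eps}\rp)^{a+b}=\exp(\Theta(\eps(a+b)))$ grows exponentially in $a+b$. I would tame it by splitting once more on the size of $m/n$ relative to $\eps^{-1}$. When $m/n\gg\eps^{-1}$ this range is still well below the Poisson bulk, and after rewriting $e^{-2m/n}\big((m/n)(1+\eps)/(1-\eps)\big)^{a+b}/(a+b)! = e^{\Theta(\eps m/n)}\cdot$ (a $\PoiD{\Theta(m/n)}$ mass) and applying a Poisson tail bound one sees the contribution is $\littleO{1/n}$ — here one must check that the tail decay dominates the prefactor $e^{\Theta(\eps m/n)}$, which holds because $\eps\cdot(m/n)<\littleO{\eps^{-1}/\log^2 n}$ in the regime. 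When instead $m/n\lesssim\eps^{-1}$ we have $\eps m/n=\bigO{1}$, so $\lp(\tfrac{1+\eps}{1-\eps}\rp)^{a+b}\le\lp(\tfrac{1+\eps}{1-\eps}\rp)^{L}=\exp(\bigO{\eps(m/n+\log n)})=\mathrm{polylog}(n)$ (using $\eps\log n=\littleO{1}$, which holds in this regime since $m>n$ forces $\eps=\littleO{1/\log n}$), after which the residual sum is a Poisson second moment, $\bigO{(m/n)^2}$, and the claimed bound follows. The real obstacle is bookkeeping: one must make all these partitioned inequalities line up simultaneously and verify, in each sub-case, that the leftover collapses either into $\bigO{\eps^2\delta^2 m^2\log^4 n/n^2}$ or into $\littleO{1/n}$.
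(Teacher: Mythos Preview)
Your plan does not go through: the bounds supplied by \Cref{clm:poly-derivative} are genuinely too weak once $m/n$ is large, and the shortfall is polynomial in the parameters, not polylogarithmic. Take for concreteness $\eps=n^{-1/4}$ and $\lambda:=m/n=n^{1/4}$, which is well inside the hypotheses. In your case~II window your two reductions give, respectively, $O(\eps^2\delta^2)\,\Ep[M_1^4]=O(\eps^2\delta^2\lambda^4)=O(\delta^2 n^{1/2})$ and (via $(a+b)^4\le(\eps^{-1}/2)^2(a+b)^2$) $O(\delta^2)\,\Ep[S^2]=O(\delta^2\lambda^2)=O(\delta^2 n^{1/2})$; in your case~III window, which here overlaps the Poisson bulk, bounding $\bigl(\tfrac{1+\eps}{1-\eps}\bigr)^{a+b}\le\exp(O(\eps L))=O(1)$ again yields $O(\delta^2\lambda^2)=O(\delta^2 n^{1/2})$. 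But the target is $O(\eps^2\delta^2\lambda^2\log^4 n)=O(\delta^2\log^4 n)$, so you are off by a factor of $n^{1/2}/\log^4 n$. The sentence ``tracking the regime inequality \dots this contribution fits inside $O(\eps^2\delta^2 m^2\log^4 n/n^2)$'' is precisely the unjustified step: it would require $\eps^2\log^4 n=\Omega(1)$, which nothing in the hypotheses guarantees.

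The missing idea is that in this regime one must truncate in \emph{each coordinate separately} to $|a-\lambda|,|b-\lambda|\le\sqrt{\lambda}\log n$, which forces $|a-b|=O(\sqrt{\lambda}\log n)$ and not merely $a+b=O(\lambda)$. This matters because the quadratic coefficient of $f_{a,b}$ equals $(a-b)^2-(a+b)$, which is $O(\lambda\log^2 n)$ on that window rather than the crude $O((a+b)^2)=O(\lambda^2)$ that \Cref{clm:poly-derivative} records; the latter was derived with no control on $|a-b|$ and cannot recover the needed $\eps^2$ factor. The paper accordingly abandons \Cref{clm:poly-derivative} entirely in the superlinear case: it rewrites $f_{a,b}(x)=2+g_a(x)+g_b(x)+h_{a,b}(x)+h_{a,b}(-x)$ with $g_y(x)=y\log(1-x^2)$ and $h_{a,b}(x)=\sum_{i\ge2}\bigl(a\log(1+x)+b\log(1-x)\bigr)^i/i!$, and then bounds $|g_y(\epsb)-g_y(\epsa)|$ and $|\partial_x h_{a,b}(x)|$ directly, each time exploiting $|a-b|=O(\sqrt{\lambda}\log n)$ to obtain $|f_{a,b}(\epsb)-f_{a,b}(\epsa)|=O(\eps\delta\lambda\log^2 n)$ together with $f_{a,b}(\epsa)+f_{a,b}(\epsb)=\Omega(1)$. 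Squaring and summing against the Poisson weights (total mass at most one) gives exactly $O(\eps^2\delta^2\lambda^2\log^4 n)$, with the $o(1/n)$ term coming from the mass outside the two-sided concentration window.
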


\begin{proof}
We note that for a Poisson random variable $y$ with mean $\lambda \geq 1$, we have $|y - \lambda| > \sqrt{\lambda} \log n$ with probability at most $o(1/n)$.
We can focus on the terms where 
$l \in [\lambda - \sqrt{\lambda} \log n, \lambda + \sqrt{\lambda} \log n]$.
For reasons that will become clear soon, we will assume that $a,b$ both lie in this range.

As in the sub-linear case,  we define the function
$$
f_{a,b}( x )
:= (1 + x)^a(1-x)^b + (1-x)^a(1+x)^b. 
$$
Instead of computing the derivative of $f_{a,b}$ directly, we will first rewrite it slightly.
\begin{align*}
f_{a,b}( x ) 
&=
\exp \lp( a \log(1+x) + b \log(1-x)  \rp)
+ 
\exp \lp( a \log(1-x) + b \log(1+x)  \rp) \\
&= 
2 + 
a \log(1-x^2) + b \log(1-x^2) \\
&+ \sum_{i=2}^{\infty}
\frac{ \lp( a \log(1+x) + b \log(1-x) \rp)^i
+ \lp( a \log(1-x) + b \log(1+x) \rp)^i
}{i!} \, ,
\end{align*}
where in the second equality we apply the Taylor approximation of the exponential function. 
We will analyze the expression terms by terms. 
In particular, define
\begin{align*}
g_{y}(x) =  y \; \log(1-x^2) \, ,
h_{a,b}(x)  = \sum_{i=2}^{\infty} \frac{ \lp( a \log(1+x) + b \log(1-x) \rp)^i
}{i!}.
\end{align*}
Then we have
\begin{align}
\label{eq:f-decompose}
f_{a,b}(x) = 2 + g_{a}(x) + g_b(x) + h_{a,b}(x) + h_{a,b}(-x).
\end{align}
We first analyze $g_{a}(x)$.
When $a \in [ \lambda - \sqrt{\lambda} \log n, \lambda + \sqrt{\lambda} \log n ]$, we have that
\begin{align}
\lp| g_a( \epsb ) - g_a(\epsa) \rp|
&= 
    \abs{ a \log \lp( \frac{1 - \eps_0^2}{1 - \eps_1^2} \rp) }
    \leq a \; \abs{ 1 - \frac{1 - \eps_0^2}{1 - \eps_1^2} } \nonumber \\
&= \frac{a}{ 1-  \eps_1^2 } \lp( \eps_0 + \eps_1 \rp) \abs{\eps_0 - \eps_1}
    \leq O \lp(  \frac{m}{n} \; \eps \; \delta \; \log n \rp) \, ,
\label{eq:g-bound}
\end{align}
where in the first inequality we use the fact $\abs{\log(x)} \leq x-1$ for $x \geq 1$, in the second inequality we use our assumption that
$\max(\eps_0, \eps_1) \leq \eps$ and $\abs{\eps_0 - \eps_1} \leq \delta$, and our assumption that $\abs{ a - m/n } < \log n \sqrt{m/n}$.
Using a similar argument, one can also derive that
\begin{align}
\label{eq:g-bound-2}
|g_b(\eps_1) - g_b(\eps_0)|
\leq O \lp(  \frac{m}{n} \; \eps \; \delta \; \log n \rp) \, ,
\end{align}

We then turn to the term $h_{a,b}(x)$.
The derivative with respect to $x$ is given by
\begin{align*}
\frac{\partial}{\partial x}h_{a,b}(x)
&= 
\lp( \frac{a}{ 1+x } - \frac{b}{1-x} \rp)
\sum_{i=2}^{\infty} 
\frac{ \lp( a \log(1+x) + b \log(1-x) \rp)^{i-1}
}{ (i-1)! } \\
&= 
\lp( \frac{a}{ 1+x } - \frac{b}{1-x} \rp)
\lp( (1+x)^a(1-x)^b -1 \rp)
\end{align*}
where the second equality follows from the observation that the summation is exactly the Taylor approximation of $\exp  \lp( a \log(1+x) + b \log(1-x) \rp)$ without the constant term.
We next proceed to bound from above $\lp| \frac{\partial}{\partial x}h_{a,b}(x) \rp|$.
First, when $x$ is sufficiently small and $a,b$ lie in the range $\frac{m}{n} \pm \sqrt{ \frac{m}{n} } \log n$, 
note that
\begin{align}
\lp| 
\frac{a}{1+x} - \frac{b}{1-x}
\rp|
&= 
\lp| 
a-b - O(x) (a+b)
\rp| \nonumber \\
&\leq \abs{a-b} + (a+b) O(x) \nonumber \\
&\leq 2 \sqrt{ \frac{m}{n} } \log n + 2 \frac{m}{n} x 
+ 2 \sqrt{  \frac{m}{n}  } \log (n) x \nonumber \\
&\leq O \lp( \sqrt{ \frac{m}{n} } \log n \rp) \, ,
\label{eq:h-derivative-1st}
\end{align}
where in the first line we approximate $\frac{1}{1+x}$ and $\frac{1}{1-x}$ with $1 - O(x)$ and $1 + O(x)$ respectively for sufficiently small $x$, 
the second line follows from the triangle inequality, in the third line we use the assumption on the ranges of $a,b$, 
in the last line we note that $\sqrt{\frac{m}{n}} \log n$ is the dominating term when $x < \eps$
and $m < n / \eps^{2}$.
Next, under the same set of assumptions, we have that
\begin{align}
\lp( (1+x)^a(1-x)^b -1 \rp)    
&\leq
\lp( (1-x^2)^b (1+x)^{|a-b|}   -1 \rp) \nonumber \\
&\leq 
\lp( (1- O(b x^2)  ) (1+ O(|a-b|) x)   -1 \rp)  \nonumber \\
&\leq 
O \lp( \eps \; \sqrt{\frac{m}{n}} \log n \rp) \, ,
\label{eq:h-derivative-2nd}
\end{align}
where in the second inequality we note that 
since $b x^2  = O\lp( 
\frac{m}{n} \eps^2
\rp) \ll 1$ and $ |a-b| x = O\lp( 
\sqrt{\frac{m}{n}} \log (n) \eps
\rp)  \ll  1$ we can approximate $(1-x^2)^b$ and 
$(1+x)^{|a-b|}$ by $(1- O(b x^2)  ) $ and $(1+ O(|a-b|) x)$ respectively, 
and in the last inequality we use the assumption on the range of $a,b$ and $x < \eps$.
Thus, combining \Cref{eq:h-derivative-1st,eq:h-derivative-2nd}, we arrive at the upper bound
\begin{align*}
\max_{ x \in [\epsa, \epsb] } \frac{\partial}{\partial x}h_{a,b}(x)    
\leq O\lp( \eps \frac{m}{n} \log^2 n  \rp) \, ,
\end{align*}
when $ \frac{m}{n} - \sqrt{m/n} \log n \leq  a, b 
\leq \frac{m}{n} + \sqrt{m/n} \log n
$.
Therefore, by the mean value theorem, we have that
\begin{align}
\label{eq:h-bound}
\abs{ h_{a,b}(\epsb) - h_{a,b}(\epsa) }    
\leq O\lp( \eps \delta \frac{m}{n} \log^2 n  \rp).
\end{align}
Following a similar argument, one can also derive the upper bound
\begin{align}
\label{eq:h-bound-2}
\abs{ h_{a,b}(-\epsb) - h_{a,b}(-\epsa) }    
\leq O\lp( \eps \delta \frac{m}{n} \log^2 n  \rp).
\end{align}
Combining \Cref{eq:f-decompose,eq:g-bound,eq:g-bound-2,eq:h-bound,eq:h-bound-2} then gives that
\begin{align}
\label{eq:f-diff-bound}
\lp| f_{a,b}(\epsb) - f_{a,b}(\epsa) \rp|
\leq O\lp( \eps \delta \frac{m}{n} \log^2 n  \rp).
\end{align}

Recall that in the proof of \Cref{lemma:sublin-info-bound}, we have that
\begin{align}
\label{eq:info-identity-simplify}
&\sum_{ a,b \in \N }
\frac{(\Pr(M_1 = a, M_2 = b\mid X = 0) - \Pr(M_1 = a, M_2 = b \mid X = 1))^2}{\Pr(M_1 = a, M_2 = b)}  \nonumber \\
&= 
O(1) \; 
\sum_{ a,b \in \N }
\frac{1}{a! b!} \exp\lp( - \frac{2m}{n} \rp) \lp(\frac{m}{n}\rp)^{a+b}
\frac{\lp( f_{a,b}(\epsb) - f_{a,b}(\epsa) \rp)^2}
{ f_{a,b}(\epsb) + f_{a,b}(\epsa) }.
\end{align}
Hence, we proceed to bound from below $f_{a,b}( \epsa ) + f_{a,b}( \epsb )$.
Note that we have
\begin{align}
\label{eq:f-lower-bound}
f_{a,b}( \epsa ) + f_{a,b}( \epsb )
\geq (1-x^2)^{ \min(a,b) } (1+x)^{ |a-b| }
\geq 1 - O( \min(a,b) x^2 ) \geq \Omega(1) \, ,
\end{align}
where the first inequality follows from a case analysis on whether $a>b$, in the second inequality we approximate $(1-x^2)^{ \min(a,b) }$ by 
$1 - O( \min(a,b) x^2 )$ since $\min(a,b) x^2 = o(1)$ for $a,b$ in the assumed range and $x < \eps$.
Combining \Cref{eq:info-identity-simplify,eq:f-lower-bound,eq:f-diff-bound} then gives that
\begin{align*}
&\sum_{ a,b \in \N }
\frac{(\Pr(M_1 = a, M_2 = b\mid X = 0) - \Pr(M_1 = a, M_2 = b \mid X = 1))^2}{\Pr(M_1 = a, M_2 = b)}  \\
&\leq
O(1) \; 
\sum_{ a,b \in \N }
\mathds 1 \lp\{ \frac{m}{n} - \sqrt{\frac{m}{n}} \log n \leq a,b \leq  \frac{m}{n} + \sqrt{\frac{m}{n}} \log n  \rp\}
\frac{1}{a! b!} \exp\lp( - \frac{2m}{n} \rp) \lp(\frac{m}{n}\rp)^{a+b}
  \eps^2 \delta^2 \lp( \frac{m}{n} \rp)^2 \log^4 n\\
  &+ 
  O(1) \; 
  \lp( 1 - \Pr_{ a, b \sim \PoiD{ m/n } } \lp[  
  m/n - \sqrt{m/n} \log n \leq a,b \leq  m/n + \sqrt{m/n} \log n
  \rp] \rp) \\
&\leq 
\eps^2 \delta^2 \; \lp( \frac{m}{n}\rp)^2 \; \log^4 n + o(1/n).
\end{align*}
This concludes the mutual information bound for the super-linear case.
\end{proof}

Lastly, we present the mutual information bound for the super-learning parameter regime, i.e., $m \geq \tilde \Omega\lp( n/\eps^2 \rp)$.
\begin{lemma}
    \label{lemma:superlearn-info-bound}
    Let $\delta = |\epsa - \epsb|$ and $\eps = \max(\epsa, \epsb)$.
    Let $m > \Omega \lp( \frac{n}{ \log^2 n \; \eps^2} \rp)$.
    Then it holds
    \begin{equation*}
    I(X \; : \; M_1, M_2 ) \leq O \left( \frac{\eps^2 \delta^2 m^2 \; \log^2 n }{n^2} \right).
    \end{equation*}
\end{lemma}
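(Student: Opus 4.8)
The plan is to bypass the delicate Taylor/concentration analysis used in \Cref{lemma:sublin-info-bound} and \Cref{lemma:superlin-info-bound} and instead pass directly to a $\chi^2$ comparison of the two conditional laws, exploiting the fact that in this regime the target bound is large. Write $\lambda = m/n$, let $P_0,P_1$ denote the laws of $(M_1,M_2)$ conditioned on $X=0,X=1$ respectively, and set $\bar P = \tfrac12(P_0+P_1)$. Since $X$ is a uniform bit, $I(X:M_1,M_2) = \tfrac12 \KL(P_0\,\|\,\bar P) + \tfrac12\KL(P_1\,\|\,\bar P)$. Combining $\KL(\cdot\|\cdot)\le\chi^2(\cdot\|\cdot)$ with the elementary bound $\chi^2\!\left(P\,\middle\|\,\tfrac12(P+Q)\right) = \tfrac12\sum_x \frac{(P(x)-Q(x))^2}{P(x)+Q(x)} \le \tfrac12\chi^2(P\|Q)$, we get $I(X:M_1,M_2) \le \tfrac14\chi^2(P_0\|P_1) + \tfrac14\chi^2(P_1\|P_0)$; trivially $I(X:M_1,M_2)\le\ln 2$ as well.

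The key step is to bound $\chi^2(P_0\|P_1)$ (the term $\chi^2(P_1\|P_0)$ is entirely symmetric). Each $P_X$ is a uniform mixture over the two configurations of the random local swap of $\PoiD{\lambda(1+\xi_X)}\otimes\PoiD{\lambda(1-\xi_X)}$, with $\xi_0=\epsa$, $\xi_1=\epsb$. By joint convexity of the $\chi^2$-divergence in its pair of arguments, $\chi^2(P_0\|P_1)$ is at most the $\chi^2$-divergence between the \emph{aligned} configurations, namely $\chi^2\!\left(\PoiD{\lambda(1+\epsa)}\otimes\PoiD{\lambda(1-\epsa)}\ \middle\|\ \PoiD{\lambda(1+\epsb)}\otimes\PoiD{\lambda(1-\epsb)}\right)$. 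By tensorization of $1+\chi^2$ over product measures together with the closed form $\chi^2(\PoiD{\mu}\|\PoiD{\nu}) = \exp\!\left((\mu-\nu)^2/\nu\right)-1$, and using $\epsa,\epsb<1/2$ so that $(\mu-\nu)^2/\nu \le 2\lambda\delta^2$ for each coordinate, this is at most $\exp(O(\lambda\delta^2))-1$. Hence $I(X:M_1,M_2) \le \exp(O(\lambda\delta^2)) - 1$.

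To conclude, I would split on the size of $\lambda\delta^2$. If $\lambda\delta^2 \le 1$, convexity of $t\mapsto e^{ct}-1$ on $[0,1]$ gives $\exp(O(\lambda\delta^2))-1 = O(\lambda\delta^2)$, so $I(X:M_1,M_2) = O(\lambda\delta^2)$. If $\lambda\delta^2 > 1$, then $I(X:M_1,M_2)\le\ln 2 = O(\lambda\delta^2)$ too. So in all cases $I(X:M_1,M_2) = O(\lambda\delta^2) = O\!\left(m\delta^2/n\right)$. Finally, the hypothesis $m > \Omega\!\left(n/(\log^2 n\,\eps^2)\right)$ is precisely the statement $\eps^2\lambda\log^2 n = \Omega(1)$, so $\lambda\delta^2 \le O\!\left(\eps^2\lambda^2\delta^2\log^2 n\right) = O\!\left(\eps^2\delta^2 m^2\log^2 n/n^2\right)$, which is the claimed bound.

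The computations above are routine; the only real subtlety is recognizing that the argument of \Cref{lemma:superlin-info-bound} is lossy here — once $\eps^2\lambda = \Omega(1/\log^2 n)$, the denominator lower bound $f_{a,b}(\epsa)+f_{a,b}(\epsb) = \Omega(1)$ can fail badly, and several of the small-quantity approximations ($b x^2 \ll 1$, etc.) break — so one should instead compare the two conditional laws directly and use that the target bound is trivially true once it exceeds a constant. The other place to be slightly careful is the random swap: rather than expanding the mixture, handle it via joint convexity of $\chi^2$ so that it collapses to the aligned (unmixed) product Poissons.
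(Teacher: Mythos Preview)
Your proof is correct and follows essentially the same strategy as the paper: write $I(X:M_1,M_2)$ via the Jensen--Shannon identity, use convexity of the divergence to collapse the local-swap mixture to the aligned product Poissons, evaluate the divergence between Poissons in closed form to get $O(\lambda\delta^2)$, and finally invoke the hypothesis $\eps^2\lambda\log^2 n=\Omega(1)$ to upgrade $\lambda\delta^2$ to $\eps^2\lambda^2\delta^2\log^2 n$. The only real difference is that the paper works with KL throughout (using convexity in the second argument to pass from $\KL(P_0\|\bar P)$ to $\tfrac12\KL(P_0\|P_1)$, then joint convexity and additivity for products, then the bound $\KL(\PoiD{\lambda_1}\|\PoiD{\lambda_2})\le(\lambda_1-\lambda_2)^2/\lambda_2$), whereas you detour through $\chi^2$. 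Your route costs you the extra case split on $\lambda\delta^2\lessgtr 1$ to tame the exponential in the Poisson $\chi^2$ formula, which the paper avoids since the KL bound is already linear in $(\lambda_1-\lambda_2)^2/\lambda_2$; otherwise the two arguments are step-for-step parallel.
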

We are going to use the following facts regarding  KL-divergence and mutual information.
\begin{claim}[Convexity of KL-divergence]
\label{clm:kl-convexity}
Let $w \in (0, 1)$, and $p_1, p_2, q_1, q_2$ be  probability distributions over the same domain.
Then it holds
$$
\KL( w p_1 + (1 - w) p_2 || w q_1 + (1-w) q_2 )
\leq
w  \KL( p_1 || q_1 )
+ (1 - w) \KL( p_2 || q_2 ).
$$
\end{claim}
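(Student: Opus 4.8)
The plan is to reduce the claim to a pointwise inequality via the log-sum inequality and then sum over the domain. Writing $\domain$ for the common domain, I would first recall the two-term log-sum inequality: for any nonnegative reals $a_1, a_2, b_1, b_2$ (with the conventions $0\log 0 = 0$, $0 \log(0/0) = 0$, and $a\log(a/0) = +\infty$ for $a > 0$),
\[
(a_1 + a_2)\log\frac{a_1 + a_2}{b_1 + b_2} \;\le\; a_1 \log\frac{a_1}{b_1} + a_2 \log\frac{a_2}{b_2}.
\]
This follows from the convexity of $t \mapsto t\log t$ by applying Jensen's inequality with weights $\tfrac{b_1}{b_1+b_2}, \tfrac{b_2}{b_1+b_2}$ at the points $\tfrac{a_1}{b_1}, \tfrac{a_2}{b_2}$; equivalently, it is precisely the statement that the perspective function $g(x,y) := x\log(x/y)$ of the convex map $t\log t$ is jointly convex on $\R_{\ge 0}^2$.

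Next I would apply this inequality pointwise. For each $x \in \domain$, set $a_1 = w\, p_1(x)$, $a_2 = (1-w)\, p_2(x)$, $b_1 = w\, q_1(x)$, $b_2 = (1-w)\, q_2(x)$. The factor $w$ cancels inside the first logarithm and $(1-w)$ inside the second, so the right-hand side collapses to $w\, p_1(x)\log\frac{p_1(x)}{q_1(x)} + (1-w)\, p_2(x)\log\frac{p_2(x)}{q_2(x)}$, while the left-hand side is exactly the $x$-term of $\KL\!\left(w p_1 + (1-w)p_2 \,\middle\|\, w q_1 + (1-w)q_2\right)$. Summing over all $x \in \domain$ and using $\sum_{x} p_i(x)\log\frac{p_i(x)}{q_i(x)} = \KL(p_i \| q_i)$ then yields the claimed bound.

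The only points needing care are degenerate terms. If the right-hand side of the claim is $+\infty$ there is nothing to prove, so I may assume $\KL(p_1 \| q_1)$ and $\KL(p_2 \| q_2)$ are both finite; this forces $q_i(x) = 0 \Rightarrow p_i(x) = 0$, hence $w q_1(x) + (1-w)q_2(x) = 0$ only when $w p_1(x) + (1-w)p_2(x) = 0$ as well, and such terms contribute $0$ to both sides under the stated conventions. I do not expect a genuine obstacle here: the proof is entirely a matter of invoking the log-sum inequality (equivalently, joint convexity of the perspective of $t\log t$) and summing, and the statement is recorded only because it is used downstream as a packaged tool in the mutual-information estimates.
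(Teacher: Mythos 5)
Your proof is correct: the pointwise application of the two-term log-sum inequality (with the factors $w$ and $1-w$ cancelling inside the logarithms), followed by summation over the domain, is exactly the standard argument for joint convexity of KL divergence, and your handling of the degenerate zero-mass terms is sound. The paper itself states \Cref{clm:kl-convexity} without proof, treating it as a known fact to be used as a packaged tool in the mutual-information bound of \Cref{lemma:superlearn-info-bound}, so your write-up simply supplies the standard justification the paper omits; there is no discrepancy to reconcile.
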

\begin{claim}[Additivity of KL-divergence]
\label{clm:kl-additive}
Let $w \in (0, 1)$, and $p_1, p_2, q_1, q_2$ be  probability distributions such that $p_1, q_1$ share the same domain and $p_2, q_2$ share the same domain.
Then it holds
$$
\KL( p_1 \otimes p_2 || q_1 \otimes q_2 )
=
\KL( p_1 || q_1 ) + \KL(p_2 || q_2).
$$
\end{claim}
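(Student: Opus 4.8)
The plan is to prove the claimed identity directly from the definition of KL-divergence by expanding the product distributions and factoring the logarithm. First note that the parameter $w \in (0,1)$ does not appear in the identity and plays no role here (it is a holdover from the preceding convexity claim), so it can be ignored. Write $\domain_1$ for the common domain of $p_1, q_1$ and $\domain_2$ for that of $p_2, q_2$, so that $p_1 \otimes p_2$ is the distribution on $\domain_1 \times \domain_2$ with $(p_1 \otimes p_2)(x,y) = p_1(x)\,p_2(y)$, and likewise $q_1 \otimes q_2$. I would first dispatch the degenerate case: if $\KL(p_1 \| q_1) = +\infty$ (the case $\KL(p_2\|q_2)=+\infty$ is symmetric), then there is $x$ with $p_1(x) > 0$ and $q_1(x) = 0$; picking any $y$ with $p_2(y) > 0$ produces a point $(x,y)$ of positive mass under $p_1 \otimes p_2$ and zero mass under $q_1 \otimes q_2$, so $\KL(p_1 \otimes p_2 \| q_1 \otimes q_2) = +\infty$ as well and both sides agree. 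Hence assume both $\KL(p_i \| q_i)$ are finite, which forces $q_i(x) = 0 \Rightarrow p_i(x) = 0$, and therefore the analogous absolute continuity holds for the products.

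In the absolutely continuous case, I would compute, with the sum ranging over the (at most countable) support of $p_1 \otimes p_2$,
\[
\KL(p_1 \otimes p_2 \,\|\, q_1 \otimes q_2)
= \sum_{x, y} p_1(x)\, p_2(y) \log \frac{p_1(x)\, p_2(y)}{q_1(x)\, q_2(y)}
= \sum_{x,y} p_1(x)\, p_2(y) \left( \log \frac{p_1(x)}{q_1(x)} + \log \frac{p_2(y)}{q_2(y)} \right).
\]
Splitting into two sums and summing out the coordinate that does not appear in each term — using $\sum_{y} p_2(y) = 1$ in the first and $\sum_{x} p_1(x) = 1$ in the second — yields $\sum_{x} p_1(x) \log(p_1(x)/q_1(x)) + \sum_{y} p_2(y) \log(p_2(y)/q_2(y)) = \KL(p_1 \| q_1) + \KL(p_2 \| q_2)$, which is exactly the claim.

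There is essentially no genuine obstacle here; the only point meriting a sentence of care is the rearrangement of the double series when the domains are infinite. This is harmless: after the split, each of the two resulting double sums, regrouped as $\sum_x p_1(x) \log\frac{p_1(x)}{q_1(x)} \big(\sum_y p_2(y)\big)$ and $\sum_y p_2(y) \log\frac{p_2(y)}{q_2(y)} \big(\sum_x p_1(x)\big)$, has its tail controlled by the corresponding finite one-dimensional KL sum, so Tonelli's theorem (applied separately to the positive and negative parts of $\log(p_i/q_i)$, both of which have finite integral under the finiteness assumption) licenses the interchange. In the uses within this paper the distributions in question are supported on finite sets or on $\N$ with rapidly decaying tails, so this subtlety does not even arise.
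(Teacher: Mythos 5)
The paper states this claim as a standard fact and gives no proof of its own, so there is nothing to diverge from: your direct computation is exactly the canonical argument, and it is correct. Expanding the definition, splitting the logarithm of the product, and summing out the free coordinate using $\sum_x p_1(x)=\sum_y p_2(y)=1$ gives the identity, and your treatment of the edge cases is sound --- the infinite-KL case reduces to exhibiting a point of positive $p_1\otimes p_2$ mass with zero $q_1\otimes q_2$ mass, and in the finite case the regrouping of the double series is justified by Tonelli applied to the positive and negative parts (indeed the negative part of $\sum_x p(x)\log(p(x)/q(x))$ is always finite, so even less care is needed than you allow). Note also, as you observe, that $w$ is vestigial in the statement; nothing in the identity depends on it.
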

\begin{claim}[Mutual Informtion Identity]
\label{clm:mi-kl-relationship}
Let $X$ be an indicator variable, and $H_0, H_1$ be a pair of distributions.
Let $H$ be the random variable that follows the distribution of $H_0$ if $X = 0$ and $X_1$ if $X = 1$. Then it holds
$$
I(X:H) =  \frac{1}{2} \KL\lp( H_0 : \frac{1}{2} \lp( H_1+H_0\rp) \rp) 
+ \frac{1}{2} \KL\lp( H_1: \frac{1}{2} \lp( H_1+H_0\rp) \rp).
$$
\end{claim}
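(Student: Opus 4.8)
The plan is to derive the standard fact that, for any selector variable $X$ and any family of conditional laws $\{H_x\}$, the mutual information $I(X:H)$ equals the $\Pr(X=x)$-weighted average of the KL divergences from each conditional law to the marginal law of $H$, and then to specialize to the unbiased binary case. Concretely, I will show
\[
I(X:H) = \sum_{x} \Pr(X=x)\,\KL\big(H_x : \bar H\big), \qquad \bar H := \textstyle\sum_x \Pr(X=x)\,H_x ,
\]
where $\bar H$ is precisely the marginal distribution of $H$; plugging in $\Pr(X=0)=\Pr(X=1)=\tfrac12$ and $\bar H = \tfrac12(H_0+H_1)$ then gives exactly the claimed identity (equivalently, the right-hand side is the Jensen--Shannon divergence $\JS(H_0,H_1)$).

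To carry this out I would start from the definition of mutual information,
\[
I(X:H) = \sum_{x \in \{0,1\}} \sum_{h} \Pr(X=x,\,H=h)\,\log\frac{\Pr(X=x,\,H=h)}{\Pr(X=x)\,\Pr(H=h)} .
\]
Since $H$ is distributed according to $H_x$ conditioned on $X=x$, we have $\Pr(X=x,\,H=h) = \Pr(X=x)\,H_x(h)$ and $\Pr(H=h) = \sum_{x'} \Pr(X=x')\,H_{x'}(h) = \bar H(h)$. Substituting both and cancelling the factor $\Pr(X=x)$ that appears in the numerator and denominator inside the logarithm yields
\[
I(X:H) = \sum_{x \in \{0,1\}} \Pr(X=x) \sum_{h} H_x(h)\,\log\frac{H_x(h)}{\bar H(h)} = \sum_{x \in \{0,1\}} \Pr(X=x)\,\KL\big(H_x : \bar H\big).
\]
Setting $\Pr(X=0)=\Pr(X=1)=\tfrac12$ and using $\bar H = \tfrac12(H_0+H_1)$ produces the stated formula. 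If the $H_i$ are continuous rather than discrete, one simply replaces the sums over $h$ by integrals against any common dominating measure (e.g.\ $\bar H$ itself); nothing else in the argument changes.

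I do not expect any genuine obstacle here, as the statement is essentially a rewriting of the definition of mutual information. The only points worth flagging are: (i) the claim tacitly assumes $X$ is an \emph{unbiased} indicator, which is exactly what forces the mixture weights to be $\tfrac12$ and $\tfrac12$; and (ii) $\bar H = \tfrac12(H_0+H_1)$ absolutely dominates each of $H_0$ and $H_1$, so the two KL terms on the right-hand side are always well defined (and finite whenever $I(X:H)$ is finite), which justifies the termwise manipulations above.
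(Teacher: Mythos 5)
Your proposal is correct: the paper states this claim as a standard fact without providing a proof, and your derivation — expanding the definition of mutual information, substituting the joint law $\Pr(X=x)H_x(h)$ and the marginal mixture, and cancelling to get the weighted KL divergences to the mixture (i.e., the Jensen–Shannon divergence for the unbiased binary selector) — is exactly the standard argument the paper implicitly relies on. Your two flagged points (the identity as written requires $X$ to be unbiased, which matches how the claim is used in the paper, and the mixture dominating each component so the KL terms are well defined) are both apt.
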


\begin{proof}[Proof of \texorpdfstring{\Cref{lemma:superlearn-info-bound}}{}]
When $X = 0$, the pair $(M_1,M_2)$ is distributed as a even mixture of two distributions that are both product of Poisson distributions: $$
(M_1,M_2) \mid (X = 0) \sim
H_0 := 
\frac{1}{2} \lp( 
\PoiD{ \frac{m}{n}+ \eps_0 \; \frac{m}{n} } \otimes
\PoiD{ \frac{m}{n} - \eps_0 \; \frac{m}{n} }
+ 
\PoiD{ \frac{m}{n} - \eps_0 \; \frac{m}{n} }
\otimes 
\PoiD{ \frac{m}{n} + \eps_0 \; \frac{m}{n} } \rp).
$$
Similarly, we have
$$
(M_1,M_2) \mid (X = 1) \sim
H_1 := 
\frac{1}{2} \lp( 
\PoiD{ \frac{m}{n}+ \eps_1 \; \frac{m}{n} } \otimes
\PoiD{ \frac{m}{n} - \eps_1 \; \frac{m}{n} }
+ 
\PoiD{ \frac{m}{n} - \eps_1 \; \frac{m}{n} }
\otimes 
\PoiD{ \frac{m}{n} + \eps_1 \; \frac{m}{n} } \rp).
$$
From \Cref{clm:mi-kl-relationship} and \Cref{clm:kl-convexity}, 
we know that it suffices to show that the KL-divergences between $H_0$ and $H_1$ are at most
$ \slearnbound $. We focus on the argument for 
$\KL(H_0 : H_1)$ as the one for $\KL(H_1 : H_0)$ is symmetric.

By the definition of $H_0$ and $H_1$, we have that
\begin{align*}
\KL(H_0:H_1)    
&\leq \frac{1}{2}
\KL \lp(  
\PoiD{ \frac{m}{n}+ \eps_0 \; \frac{m}{n} } \otimes
\PoiD{ \frac{m}{n} - \eps_0 \; \frac{m}{n} }
||
\PoiD{ \frac{m}{n}+ \eps_1 \; \frac{m}{n} } \otimes
\PoiD{ \frac{m}{n} - \eps_1 \; \frac{m}{n} }
\rp) \\
&+ 
\frac{1}{2}
\KL \lp(  
\PoiD{ \frac{m}{n}- \eps_0 \; \frac{m}{n} } \otimes
\PoiD{ \frac{m}{n} + \eps_0 \; \frac{m}{n} }
||
\PoiD{ \frac{m}{n}-\eps_1 \; \frac{m}{n} } \otimes
\PoiD{ \frac{m}{n}+\eps_1 \; \frac{m}{n} }.
\rp) \\
&=
\KL \lp(  
\PoiD{ \frac{m}{n}+ \eps_0 \; \frac{m}{n} }
||
\PoiD{ \frac{m}{n}+ \eps_1 \; \frac{m}{n} }
\rp)
+ 
\KL \lp(  
\PoiD{ \frac{m}{n}- \eps_0 \; \frac{m}{n} }
||
\PoiD{ \frac{m}{n}- \eps_1 \; \frac{m}{n} }
\rp) \, ,
\end{align*}
where in the first line we use \Cref{clm:kl-convexity}, and in the last line we use \Cref{clm:kl-additive}.

Let $\lambda_1 > \lambda_2 > 0$.
Note that we have the following closed form for KL divergence between a pair of Poisson distributions:
$$
\text{KL}( \PoiD{\lambda_1 } || \PoiD{\lambda_2}  )
= \lambda_1 \log( \lambda_1 / \lambda_2 ) + \lambda_2 - \lambda_1
\leq \lambda_1 \frac{ \lambda_1 - \lambda_2 }{ \lambda_2 } + \lambda_2 - \lambda_1
= \frac{ (\lambda_1 - \lambda_2)^2 }{ \lambda_2 } \, ,
$$
where in the first inequality we use the bound $\log(x) \leq x - 1$ for all $x > 0$.
Hence, it follows that
$$
\text{KL}( H_0 || H_1  )
\leq \frac{\lp( \frac{m}{n} \lp( \eps_1 - \eps_0 \rp) \rp)^2}{ \frac{m}{n} (1 + \eps_1) }
\leq O(1) \; \frac{m}{n} \delta^2.
$$
But notice that this is at most
$ O \lp( \slearnbound \rp) $ as long as 
$ m \geq \Omega \lp( n / ( 2 \log^2 (n) \eps^2 ) \rp) $ which is our assumed parameter regime in this lemma.
\end{proof}

By setting $\delta = |\eps_1 - \eps_0| = O(\eps \rho)$, it follows from \Cref{lemma:sublin-info-bound,lemma:superlin-info-bound,lemma:superlearn-info-bound} that
$
I(X:M_1,M_2) \leq O \left( \frac{\eps^2 \delta^2 m^2 \; \log^4 n }{n^2} \right) + o(1/n).
$
Since the pairs $M_i, M_{i+1}$ are conditional independent and have identical distributions given $X$, it follows that
$$
I(X;M_1, \cdots, M_n) \leq \frac{n}{2} \; I(X;M_1, M_2)
\leq O \left( \frac{\eps^2 \delta^2 m^2 \; \log^4 n }{n} \right) + o(1).
$$
This concludes the proof of \Cref{lem:mutual-info}.

\subsection{Proof of \texorpdfstring{\Cref{lem:indistinguishable-family}}{}}
\label{app:indistinguishable-family-proof}
\begin{proof}
Let $T$ be a set of samples.
In case one, $T$ is made up of $m$ samples from a random distribution from $\mathcal M_0$.
In case two, $T$ is made up of $m$ samples from a random distribution from $\mathcal M_1$.
Assume that there exists a tester $\innerAlg$ that can distinguish between the two cases given $S$ with probability at least $0.9$. 
We will show that this implies that $m = \tilde \Omega \lp( 
\sqrt{n} \eps^{-2} \rho^{-1} \rp)$.
Recall that \Cref{lem:mutual-info} has the following setup.
Let $X$ be an unbiased binary random variable.
If $X = 0$, $S$ will be $\PoiD{2m}$ samples from a random distribution from $\mathcal M_0$.
If $X=1$, $S$ will be $\PoiD{2m}$ samples from a random distribution from $\mathcal M_1$.
Then we can use $\innerAlg$ to predict $X$ given $S$ in the following way: if $S$ contains more than $m$ samples, we feed $\innerAlg$ the first $m$ samples and take its output; otherwise, we declare failure.
By the concentration property of Poisson distributions, it holds that $S$ contains more than $m$ samples with high constant probability. 
Hence, the above routine will be able to correctly predict $X$ with probability at least $0.6$.
This implies that the mutual information between $X$ and $S$ is at least $\Omega(1)$.
However, \Cref{lem:mutual-info} says that the mutual information is at most
$$
O \left( \eps^4 \rho^2 \frac{m^2}{n} \; \log^4(n) \right) + o(1).
$$
This therefore shows that
$$
\eps^4 \rho^2 \frac{m^2}{n} \; \log^4(n) = \Omega(1) \, 
$$
which further implies that
$$
m = \Omega \lp( \sqrt{n}  \log^{-2}(n) \eps^{-2} \rho^{-1}  \rp).
$$
This concludes the proof of \Cref{lem:indistinguishable-family}.
\end{proof}

\subsection{Proof of \texorpdfstring{\Cref{prop:key-lip}}{}}
\label{app:key-lip-proof}
\begin{proof}
Let $m = \tilde o\lp( \sqrt{n} \eps^{-2} \rho^{-1} \rp)$.
For the sake of contradiction, assume that 
\begin{align*}
\abs{\text{Acc}_m(\epsa)
- \text{Acc}_m(\epsb)} > 0.1
\end{align*}
for some deterministic symmetric tester $\innerAlg(;r)$.
Let $\mathcal M_0, \mathcal M_1$ be the local swap family of
$\p(\epsa )$ and $\p(\epsb)$ respectively.
We show that this could be used to distinguish between a random distribution from $\mathcal M_0$  and a random distribution from $\mathcal M_1$. 
This would then contradict \Cref{lem:indistinguishable-family}.
In particular, since the output of $\innerAlg(;r)$ is invariant up to domain relabeling, the acceptance probability of $\innerAlg(;r)$ when it takes samples from a random distribution from $\mathcal M_0$ must be exactly equal to $\text{Acc}_m(\epsa)$ (and similarly for $\mathcal M_1$).
This immediately implies a tester that takes $m$ samples and could distinguish between a random distribution from $\mathcal M_0$ and a random distribution from $\mathcal M_1$ with probability at least $1/2 + c$ for some small constant $c$.
This further implies that if we takes $100 m c^{-1}$ many samples, we can boost the success probability to $0.99$.
Yet, since $100 m  c^{-1} = \tilde o\lp( \sqrt{n} \eps^{-2} \rho^{-1} \rp)$, this clearly contradicts \Cref{lem:indistinguishable-family}, and hence concludes the proof of \Cref{prop:key-lip}.
\end{proof}

\subsection{Proof of \texorpdfstring{\Cref{thm:unif-test-lower-bound}}{}}
\label{app:lb-thm-pf}
\begin{proof}
The lower bound $\tilde \Omega(\eps^{-2} \rho^{-2})$ follows from the fact that testing whether an unknown coin has bias $1/2$ or $1/2 + \eps$ replicably requires $\tilde \Omega\lp( \eps^{-2} \rho^{-2} \rp)$ many samples as shown in \cite{impagliazzo2022reproducibility}.
In the rest of the proof, we therefore focus on establishing the lower bound $\tilde \Omega( \sqrt{n} \eps^{-2} \rho^{-1} )$.

Let $\p(\xi)$ be the distribution instance defined as in \Cref{eq:parametrized-distribution}.
Let $\innerAlg$ be a $\rho$-replicable uniformity tester that takes $m$ samples.  
We denote by $r$ the random string representing the internal randomness of $\innerAlg$.

Following the framework of \cite{impagliazzo2022reproducibility}, we will fix some random string $r$
such that (i)
$\innerAlg(;r)$ accepts the uniform distribution with probability at least $1 - O(\rho)$
(ii) 
$\innerAlg(;r)$ rejects the distribution 
$\p(\eps)$ with probability at least $1 - O(\rho)$
(iii) $\innerAlg(;r)$ is replicable with probability at least $1 - O(\rho)$ against $\p( \xi )$  for $\xi$ sampled uniformly from $[0, \eps]$.
By the correctness guarantees of $\innerAlg$, a large constant fraction of random strings $r$ must satisfy (i) and (ii).
By the replicability requirement of $\innerAlg$, a large constant fraction of random strings $r$ must satisfy (iii).
By the union bound, there must exist some random string $r$ that satisfies (i), (ii) and (iii) at the same time.
Define the acceptance probability function 
\begin{align*}
\text{Acc}_m(\xi) = \Pr_{S \sim \p(\xi)^{\otimes m}}[ \innerAlg(S;r)=1].
\end{align*}
Note that $\text{Acc}_m(\xi)$ is a continuous function in $\xi$ since the acceptance probability can be expressed as a polynomial in $\xi$.
Moreover, it holds that $\text{Acc}_m(0) \geq 1 - O(\rho)$
and $\text{Acc}_m(\eps) \leq O(\rho)$.
Therefore, there must exist some $\xi^*$ such that $\text{Acc}_m(\xi^*) = 1/2$.
Assume for the sake of contradiction that 
$m = \tilde o( \sqrt{n} \eps^{-2} \rho^{-1} )$.
By \Cref{prop:key-lip}, it follows that 
$$
\text{Acc}_m(\xi) \in [\text{Acc}_m(\xi^*) - 0.1, \text{Acc}_m(\xi^*) + 0.1  ] = [0.4, 0.6]
$$
for any $\xi$ such that
$\abs{\xi  - \xi^*} \leq \rho \eps$.
In other words, if we sample $\xi$ randomly from $[0, \eps]$, whenever $\xi$ falls into the interval $[ \xi^* - \rho \eps, \xi^* + \rho \eps ]$, the algorithm will fail to be replicable with constant probability. 
It is not hard to see that the interval has mass $\Omega(\rho)$ under the uniform distribution over $[0, \eps]$.
This would then imply that the tester would not be $O(\rho)$-replicable, contradicting property (iii).
This shows that $m = \tilde \Omega( \sqrt{n} \eps^{-2} \rho^{-1} )$, and hence concludes the proof of \Cref{thm:unif-test-lower-bound}.
\end{proof}

\section{Further Discussion and Omitted Proofs}

In this section we give additional discussions and omitted proofs.

\subsection{Barriers for \texorpdfstring{$\chi^2$}{}-Statistics}
\label{app:chi-2-statistics}

We show a similar barrier for using the $\chi^2$-test statistics.
These statistics are used in the several uniformity testing algorithms, including \cite{chan2014optimal, acharya2015optimal, valiant2017automatic, diakonikolas2014testing}.
The $\chi^2$ test statistic (of \cite{acharya2015optimal} for example) computes
\begin{equation*}
    \sum_{i \in [n]} \frac{(X_{i} - m/n)^2 - X_{i}}{m/n}
\end{equation*}
where the algorithm takes $\PoiD{m}$ samples and computes $X_{i}$ to be the frequency of the $i$-th element among the samples. 
For a uniform distribution, the test statistic is expected to be at most $m \eps^{2} / 500$, while if the test statistic is far from uniform the test statistic is at least $m \eps^{2} / 5$, leading to an expectation gap with around $m \eps^{2}$ (Lemma 2 of \cite{acharya2015optimal}).
Suppose a tester compares the test statistic with a random threshold 
sampled from the interval $[m \eps^2 / 500, m \eps^2/5]$.
For the tester to be $O(\rho)$-replicable, we require that the test statistics deviate by no more than $O(\rho m \eps^2)$ with high constant probability, in other words the variance must be at most $O(\rho^{2} m^{2} \eps^{4})$.

As before, we fix a constant $\eps > 0$ so that we can ignore the dependency with sample complexity, and consider a hard instance of a distribution with a single heavy element with probability $p_i = n^{-1/2}$.
In particular, we will have $X_i \sim \PoiD{ m p_i } = \PoiD{m / \sqrt{n}}$. 
Since $m /\sqrt{n} \gg 1$, there exists some constant small constant $c$ such that 
$$
\Pr[ X_i > m / \sqrt{n} + c  \sqrt{m} / n^{1/4}  ] > c \, ,
$$
$$
\Pr[ X_i < m / \sqrt{n} - c  \sqrt{m} / n^{1/4}  ] > c.
$$
However, in the two cases above, the contributions to the $\chi^2$-test statistic from $X_i$ differ by
\begin{align*}
    \frac{  \left(\frac{m}{\sqrt{n}} + c \frac{\sqrt{m}}{n^{1/4}} - \frac{m}{n}\right)^2  
    - \left(\frac{m}{\sqrt{n}} - c \frac{\sqrt{m}}{n^{1/4}} - \frac{m}{n}\right)^2  
    + 2 c \frac{\sqrt{m}}{n^{1/4}}
    }{ m/n }
    &= 
    \bigOm{\frac{\frac{m^{3/2}}{n^{3/4}} + \frac{\sqrt{m}}{n^{1/4}}}{m/n}} \\
    &= \Omega \lp(  \sqrt{m} n^{1/4} \rp). 
\end{align*}
Following our previous discussion, 
for the tester to be $O(\rho)$-replicable, 
we therefore require that
$m^{1/2} n^{1/4} \ll \rho m$ or $m \geq n^{1/2} \rho^{-2}$.

\subsection{Discussion of Lower Bounds against Asymmetric Algorithms}
\label{app:asymmetry-barriers}

For non-replicable uniformity testing algorithms, we can typically assume that symmetry is without loss of generality when analyzing the sample complexity. 
A common reduction that turns an asymmetric algorithm into a symmetric algorithm is the following.
The algorithm could apply all permutations to the samples observed, and output the majority answer.
However, it is not clear that such a transformation preserves replicability.

Below we discuss in more detail the barrier hit by our lower bound argument for asymmetric testers.
Suppose we want to prove a lower bound against general algorithms.
The known techniques for lower bounds against replicable algorithms first fix a random seed $r$ such that $\innerAlg(; r)$ is both correct and replicable with high probability given a distribution drawn from the adversarial instance (in our lower bound the instance draws distributions $\p(\xi)$ uniformly from $\xi \in [0, \eps]$).\footnote{In our lower bound, we use the local swap family to restrict the adversary to a subset of permutations. In this discussion, we allow a more generic adversary that could permute the domain arbitrarily.}
Given a fixed random seed, we now have a deterministic algorithm $\innerAlg(; r)$ that accepts $\p(0)$ and rejects any permutation of $\p(\varepsilon)$.
By continuity, for any fixed permutation $\pi$, there is some $\xi_{\pi}$ such that $\innerAlg(; r)$ accepts with probability $1/2$ given samples drawn from $\p(\xi_{\pi})$ permuted according to $\pi$.
It remains to argue that algorithms with low sample complexity $m \ll \sqrt{n} \rho^{-1} \eps^{-2}$ cannot successfully distinguish nearby distributions $\p(\xi_{\pi}), \p(\xi)$ (both permuted according to $\pi$) for any $\xi \in (\xi_{\pi} \pm \rho \eps)$ so that the acceptance probability of $\innerAlg(; r)$ must be close to $1/2$ in this region and therefore $\innerAlg(; r)$ is not replicable with probability $\Omega(\rho)$. 
However, our arguments in \Cref{app:lb-pf} only show that the families $\p(\xi_{\pi}), \p(\xi)$ are indistinguishable when permuted by a random permutation $\pi'$, not necessarily when permuted by the permutation $\pi$.
In fact our proof only allows us to argue that for any $\xi' \in [0, \varepsilon]$, the families are indistinguishable when permuted by a constant fraction of permutations $\pi'$.
However, the permutations for which the families $\p(\xi'), \p(\xi' + \rho \eps)$ are indistinguishable may not be the permutations for which $\xi_{\pi}, \xi'$ are close.

\subsection{Replicable Identity Testing}
\label{app:identity-test}
The formal definition of replicable identity testing is given below.

\begin{definition}[Replicable Identity Testing]
    \label{def:r-identity-testing}
    Let $n \in \mathbb Z_+$ , and $\eps, \rho \in (0, 1/2)$.
    A randomized algorithm $\innerAlg$, given sample access to some distribution $\distribution$ on $[n]$, is said to solve $(n, \eps, \rho)$- replicable identity testing if $\innerAlg$ is $\rho$-replicable and it satisfies the following for every fixed distribution $X$:
    \begin{enumerate}[leftmargin=*]
    \item If $\distribution$ is $X$, $\innerAlg$ should accept with probability at least $1 - \rho$.
    \footnote{As in the case of uniformity testing, we do not focus on the dependence of the sample complexity with the error parameter.
    }
    \item If $\distribution$ is $\eps$-far from $X$ in TV distance, $\innerAlg$ should reject with probability at least $1 - \rho$.
    \end{enumerate}
\end{definition}

\cite{goldreich2016uniform} gives a reduction following reduction between uniformity and identity testing.

\begin{theorem}[\cite{goldreich2016uniform}, restated]
\label{thm:identity-reduction}
Let $\q$ be a distribution over $[n]$ with known explicit description.
There is a (randomized) algorithm $\mathcal T$ such that given $m$ independent samples to a distribution $\distribution$ on $[n]$ and $\varepsilon > 0$, generates $m$ independent samples from a distribution $\distribution'$ on $[6n]$ such that:
    \begin{enumerate}
        \item If $\distribution = \q$, $\distribution'$ is uniform on $[6n]$.
        \item If $\distribution$ is $\eps$-far from $\q$ in total variation distance, then $\distribution'$ is $\eps/3$-far from uniform in total variation distance.
    \end{enumerate}
    Furthermore, this algorithm runs in $O(m)$ time.
\end{theorem}
If we want to design a replicable identity tester, we can simply transform the samples using $\mathcal T$ from \Cref{thm:identity-reduction} (note that we don't even need to share the randomness of $\mathcal T$ across different runs) and then feed the transformed dataset to the replicable uniformity tester from \Cref{thm:r-uniformity-tester}.
The correctness guarantees follow immediately. 
Let $\distribution'$ be the output distribution of $\mathcal T$.
Since in both runs the replicable uniformity tester takes samples from $\distribution'$, replicability of the process follows from \Cref{thm:r-uniformity-tester}.
Lastly, it is not hard to see that the number of samples consumed is asymptotically the same as the uniformity tester.


\end{document}